\DeclareMathAlphabet\mathbb{U}{msb}{m}{n}
\def\Rset{\mathbb{R}}
\DeclareMathOperator*{\E}{\mathbb E}
\DeclareMathOperator*{\argmax}{argmax}
\DeclareMathOperator*{\argmin}{argmin}
\DeclarePairedDelimiter{\bracket}{[}{]}
\DeclarePairedDelimiter{\curl}{\{}{\}}
\DeclarePairedDelimiter{\paren}{(}{)}
\newcommand{\sA}{{\mathscr A}}
\newcommand{\sC}{{\mathscr C}}
\newcommand{\sD}{{\mathscr D}}
\newcommand{\sE}{{\mathscr E}}
\newcommand{\sF}{{\mathscr F}}
\newcommand{\sH}{{\mathscr H}}
\newcommand{\sK}{{\mathscr K}}
\newcommand{\sM}{{\mathscr M}}
\newcommand{\sR}{{\mathscr R}}
\newcommand{\sS}{{\mathscr S}}
\newcommand{\sX}{{\mathscr X}}
\newcommand{\sY}{{\mathscr Y}}
\newcommand{\ov}{\overline}
\newcommand{\wt}{\widetilde}
\newcommand{\e}{\epsilon}
\newcommand{\ignore}[1]{}
\def\Nset{\mathbb{N}}
\newcommand{\hh}{{\sf h}}
\newcommand{\rr}{{\sf r}}
\newcommand{\pp}{{\sf p}}
\title[Top-$k$ Classification and Cardinality-Aware Prediction]
      {Top-\texorpdfstring{$k$}{k} Classification and Cardinality-Aware Prediction}
\begin{document}

\maketitle

\begin{abstract}

We present a detailed study of top-$k$ classification, the task of
predicting the $k$ most probable classes for an input, extending
beyond single-class prediction. We demonstrate that several prevalent
surrogate loss functions in multi-class classification, such as
comp-sum and constrained losses, are supported by $\sH$-consistency
bounds with respect to the top-$k$ loss. These bounds guarantee
consistency in relation to the hypothesis set $\sH$, providing
stronger guarantees than Bayes-consistency due to their non-asymptotic
and hypothesis-set specific nature. To address the trade-off between
accuracy and cardinality $k$, we further introduce cardinality-aware
loss functions through instance-dependent cost-sensitive learning. For
these functions, we derive cost-sensitive comp-sum and constrained
surrogate losses, establishing their $\sH$-consistency bounds and
Bayes-consistency.
Minimizing these losses leads to new cardinality-aware algorithms for
top-$k$ classification. We report the results of extensive experiments
on CIFAR-100, ImageNet, CIFAR-10, and SVHN datasets demonstrating
the effectiveness and benefit of these algorithms.

\ignore{
We present a detailed study of top-$k$ classification, which consists
of predicting the $k$ most likely classes for a given input, as
opposed to solely predicting the single most likely class.  We show
that remarkably, the common families of surrogate losses used in
standard multi-class classification, including the comp-sum losses and
the constrained losses benefit from $\sH$-consistency bounds with
respect to the top-$k$ loss. These are recently proposed consistency
guarantees that are more relevant to learning than Bayes-consistency,
as they are non-asymptotic and specific to the hypothesis set $\sH$
adopted. As a by-product, these guarantees also imply the
Bayes-consistency of these surrogate losses. We then introduce target
cardinality-aware loss functions for determining the optimal
cardinality $k$ of top-$k$ classification via instance-dependent
cost-sensitive learning, for which we propose two novel families of
surrogate losses: cost-sensitive comp-sum losses and cost-sensitive
constrained losses. We prove $\sH$-consistency bounds and thus
Bayes-consistency for these cost-sensitive surrogate losses with
respect to the cardinality-aware losss functions. Our extensive
experiments on CIFAR-100, ImageNet, CIFAR-10 and SVHN datasets
illustrate the effectiveness of cardinality-aware algorithms by
minimizing these loss functions.
}
\end{abstract}



\section{Introduction}
\label{sec:intro}

Top-$k$ classification consists of predicting the $k$ most likely
classes for a given input, as opposed to solely predicting the single
most likely class. Several compelling reasons support the adoption of
top-$k$ classification. First, it enhances accuracy by allowing the
model to consider the top $k$ predictions, accommodating uncertainty
and providing a more comprehensive prediction. This proves
particularly valuable in scenarios where multiple correct answers
exist, such as image tagging, where a top-$k$ classifier can identify
all relevant objects in an image. Furthermore, top-$k$ classification
finds application in ranking and recommendation tasks, like suggesting
the top $k$ most relevant products in e-commerce based on user
queries. The confidence scores associated with the top $k$ predictions
also serve as a means to estimate the model's uncertainty, a crucial
aspect in applications requiring insight into the model's confidence
level.

Ensembling can also benefit from top-$k$ predictions as they can be
combined from multiple models, contributing to improved overall
performance by introducing a more robust and diverse set of
predictions. In addition, top-$k$ predictions can serve as input for
downstream tasks like natural language generation or dialogue systems,
enhancing the performance of these tasks by providing a broader range
of potential candidates. Finally, the interpretability of the model's
decision-making process is enhanced by examining the top $k$ predicted
classes, allowing users to gain insights into the rationale behind the
model's predictions.

However, the top-$k$ loss function is non-continuous and
non-differentiable, and its direct optimization is
intractable. Therefore, top-$k$ classification algorithms typically
resort to a surrogate loss \citep{lapin2015top,lapin2016loss,
  berrada2018smooth,reddi2019stochastic,
  yang2020consistency,thilagar2022consistent}.  This raises critical
questions: Which surrogate loss functions admit theoretical guarantees
and efficient minimization properties?  Can we design accurate top-$k$
classification algorithms?

Unlike standard classification, this problem has been relatively
unexplored. A crucial property in this context is
\emph{Bayes-consistency}, which has been extensively studied in binary
and multi-class classification
\citep{Zhang2003,bartlett2006convexity,zhang2004statistical,
  bartlett2008classification}. While Bayes-consistency has been
explored for various top-$k$ surrogate losses
\citep{lapin2015top,lapin2016loss,lapin2018analysis,yang2020consistency,thilagar2022consistent},
some face limitations.  Non-convex "hinge-like" surrogates
\citep{yang2020consistency}, inspired by ranking
\citep{usunier2009ranking}, and polyhedral surrogates
\citep{thilagar2022consistent} cannot lead to effective algorithms as
they cannot be efficiently computed and optimized.  Negative results
indicate that several convex "hinge-like" surrogates
\citep{lapin2015top,lapin2016loss,lapin2018analysis} fail to achieve
Bayes-consistency \citep{yang2020consistency}. Can we shed more light
on these results?

On the positive side, it has been shown that the logistic loss (or
cross-entropy loss used with the softmax activation) is a
Bayes-consistent loss for top-$k$ classification
\citep{lapin2015top,yang2020consistency}. This prompts further
inquiries: Which other smooth loss functions admit this property? More
importantly, can we establish non-asymptotic and hypothesis
set-specific guarantees for these surrogates, quantifying their
effectiveness?  Beyond top-$k$ classification, it is important to
consider the trade-off between accuracy and the cardinality $k$. This
leads us to introduce and study cardinality-aware top-$k$
classification algorithms, which aim to achieve a high accuracy while
maintaining a small average cardinality.

This paper presents a detailed study of top-$k$ classification. We
first show that, remarkably, several widely used families of surrogate
losses used in standard multi-class classification admit
\emph{$\sH$-consistency bounds} \citep{awasthi2022h,awasthi2022multi,
  mao2023cross,MaoMohriZhong2023characterization} with respect to the
top-$k$ loss.  These are strong consistency guarantees that are
non-asymptotic and specific to the hypothesis set $\sH$ adopted, which
further imply Bayes-consistency.  In Section~\ref{sec:comp}, we
demonstrate this property for the broad family of \emph{comp-sum
losses} \citep{mao2023cross}, which includes the logistic loss, the
sum-exponential loss, the mean absolute error loss, and the
generalized cross-entropy loss. Further, in Section~\ref{sec:cstnd},
we prove it for \emph{constrained losses}, originally introduced for
multi-class SVM \citep{lee2004multicategory}, including the
constrained exponential loss, constrained hinge loss and squared hinge
loss, and the $\rho$-margin loss.
These guarantees provide a strong foundation for principled algorithms
in top-$k$ classification, leveraging the minimization of these
surrogate loss functions. Many of these loss functions are known for
their smooth properties and favorable optimization solutions.

In Section~\ref{sec:cardinality}, we further investigate
cardinality-aware top-$k$ classification, aiming to return an accurate
top-$k$ list with the lowest average cardinality $k$ for
each input instance.  We introduce a target loss function tailored to
this problem through instance-dependent cost-sensitive learning
(Section~\ref{sec:cost-learning}).
Subsequently, we present two novel surrogate loss families
for optimizing this target loss: cost-sensitive comp-sum losses
(Section~\ref{sec:cost-comp}) and cost-sensitive constrained losses
(Section~\ref{sec:cost-cstnd}). These loss functions are
obtained by augmenting their standard counterparts
with instance-dependent cost terms.
We establish $\sH$-consistency bounds and thus Bayes-consistency for
these cost-sensitive surrogate losses with respect to the
cardinality-aware target loss. Minimizing these losses leads to new
cardinality-aware algorithms for top-$k$ classification.
Section~\ref{sec:experiments} presents experimental results on
CIFAR-100, ImageNet, CIFAR-10, and SVHN datasets, demonstrating the
effectiveness of these algorithms.

\section{Preliminaries}
\label{sec:pre}

We consider the learning task of top-$k$ classification with $n \geq
2$ classes, that is seeking to ensure that the correct class label
for a given input sample is among the top $k$\textbf{} predicted classes.
We denote by $\sX$ the input space and $\sY = [n] \colon = \curl*{1,
  \ldots, n}$ the label space.  We denote by $\sD$ a distribution over
$\sX \times \sY$ and write $p(x, y) = \sD\paren*{Y = y \mid X = x}$ to
denote the conditional probability of $Y = y$ given $X = x$. We also
write $p(x) = \paren*{p(x, 1), \ldots, p(x, n)}$ to denote the
corresponding conditional probability vector.

We denote by $\ell \colon \sH_{\rm{all}} \times \sX \times \sY \to
\Rset$ a loss function defined for the family of all measurable
functions $\sH_{\rm{all}}$.  Given a hypothesis set $\sH \subseteq
\sH_{\rm{all}}$, the conditional error of a hypothesis $h$ and the
best-in-class conditional error are defined as follows:
\begin{align*}
  \sC_{\ell}(h, x)
  & = \E_{y \mid x}\bracket*{\ell(h, x, y)}
  = \sum_{y \in \sY} p(x, y) \ell(h, x, y)\\
  \sC^*_{\ell}(\sH, x)
  & = \inf_{h \in \sH} \sC_{\ell}(h, x)
  = \inf_{h \in \sH} \sum_{y \in \sY} p(x, y) \ell(h, x, y).
\end{align*}
Accordingly, the generalization error of a hypothesis $h$ and the
best-in-class generalization error are defined by:
\begin{align*}
  \sE_{\ell}(h) &= \E_{(x, y) \sim \sD} \bracket*{\ell(h, x, y)}
  = \mathbb{E}_x \bracket*{\sC_{\ell}(h, x)}\\
  \sE^*_{\ell}(\sH) &= \inf_{h \in \sH} \sE_{\ell}(h)
  = \inf_{h \in \sH} \mathbb{E}_x \bracket*{\sC_{\ell}(h, x)}.   
\end{align*}
Given a score vector $\paren*{h(x, 1), \ldots, h(x, n)}$ generated by
hypothesis $h$, we sort its components in decreasing order and write
$\hh_k(x)$ to denote the $k$th label, that is $h(x, \hh_1(x)) \geq
h(x, \hh_2(x)) \geq \ldots \geq h(x, \hh_{n - 1}(x)) \geq h(x,
\hh_n(x))$. Similarly, for a given conditional probability vector
$p(x) = \paren*{p(x, 1), \ldots, p(x, n)}$, we write $\pp_k(x)$ to
denote the $k$th element in decreasing order, that is $p(x,\pp_1(x))
\geq p(x,\pp_2(x)) \geq \ldots \geq p(x, \pp_n(x))$. In the event of a
tie for the $k$-th highest score or conditional probability, the label
$\hh_k(x)$ or $\pp_k(x)$ is selected based on the highest index when
considering the natural order of labels.

The target generalization error for top-$k$ classification is given by
the top-$k$ loss, which is denoted by $\ell_{k}$ and defined, for any
hypothesis $h$ and $(x, y) \in \sX \times \sY$ by
\begin{equation*}
\ell_{k}(h, x, y) = 1_{y \notin \curl*{\hh_1(x), \ldots, \hh_k(x)}}.   
\end{equation*}
Thus, the loss takes value one when the correct label $y$ is not
included in the top-$k$ predictions made by the hypothesis $h$, zero
otherwise. In the special case where $k = 1$, this is precisely the
familiar zero-one classification loss. As with the zero-one loss,
optimizing the top-$k$ loss is NP-hard for common hypothesis
sets. Therefore, an alternative surrogate loss is typically used
to design learning algorithms.

A crucial property of these surrogate losses is
\emph{Bayes-consistency}. This requires that, asymptotically, nearly
minimizing a surrogate loss over the family of all measurable
functions leads to the near minimization of the top-$k$ loss over the
same family \citep{steinwart2007compare}.
\begin{definition}
A surrogate loss $\ell$ is said to be \emph{Bayes-consistent with
respect to the top-$k$ loss $\ell_k$} if, for all given sequences of
hypotheses $\curl*{h_n}_{n \in \Nset} \subset \sH_{\rm{all}}$ and any
distribution, $\lim_{n \to \plus \infty} \sE_{\ell}\paren*{h_n} -
\sE^*_{\ell}\paren*{\sH_{\rm{all}}} = 0$ implies $\lim_{n \to \plus
  \infty} \sE_{\ell_{k}}\paren*{h_n} -
\sE^*_{\ell_{k}}\paren*{\sH_{\rm{all}}} = 0$.
\end{definition}
Bayes-consistency is an asymptotic guarantee and applies only to the
family of all measurable functions. Recently,
\citet*{awasthi2022h,awasthi2022multi} proposed a stronger consistency
guarantee, referred to as \emph{$\sH$-consistency bounds}. These are
upper bounds on the target estimation error in terms of the surrogate
estimation error that are non-asymptotic and hypothesis set-specific
guarantees.
\begin{definition}
Given a hypothesis set $\sH$, a surrogate loss $\ell$ is said to admit
an $\sH$-consistency bound with respect to the top-$k$ loss $\ell_k$
if, for some non-decreasing function $f$, the following inequality
holds for all $h \in \sH$ and for any distribution:
\begin{equation*}
  f \paren*{\sE_{\ell_{k}}\paren*{h} - \sE^*_{\ell_{k}}\paren*{\sH}}
  \leq \sE_{\ell}\paren*{h} - \sE^*_{\ell} \paren*{\sH}.
\end{equation*}
\end{definition}
We refer to $\sE_{\ell_{k}}\paren*{h} - \sE^*_{\ell_{k}}\paren*{\sH}$ as the target estimation error and $\sE_{\ell}\paren*{h} - \sE^*_{\ell} \paren*{\sH}$ as the surrogate estimation error. These bounds imply Bayes-consistency when $\sH = \sH_{\rm{all}}$, by
taking the limit on both sides.

We will study $\sH$-consistency bounds for common surrogate losses in
the multi-class classification, with respect to the top-$k$ loss
$\ell_k$. A key quantity appearing in $\sH$-consistency bounds is the
\emph{minimizability gap}, which measures the difference between the
best-in-class generalization error and the expectation of the
best-in-class conditional error, defined for a given hypothesis set
$\sH$ and a loss function $\ell$ by:
\begin{equation*}
\sM_{\ell}(\sH) = \sE^*_{\ell}(\sH) - \mathbb{E}_x \bracket*{\sC^*_{\ell}(\sH, x)}.
\end{equation*}
As shown by \citet{mao2023cross}, the minimizability gap is
non-negative and is upper bounded by the approximation error
$\sA_{\ell}(\sH) = \sE^*_{\ell}(\sH) - \sE^*_{\ell}(\sH_{\rm{all}})$:
$0 \leq \sM_{\ell}(\sH) \leq \sA_{\ell}(\sH)$. When $\sH =
\sH_{\rm{all}}$ or more generally $\sA_{\ell_{\log}}(\sH) = 0$, the
minimizability gap vanishes. However, in general, it is non-zero and
provides a finer measure than the approximation error. Thus,
$\sH$-consistency bounds provide a stronger guarantee than the excess
error bounds.

We will specifically study the surrogate loss families of
\emph{comp-sum losses} and \emph{constrained losses} in multi-class
classification, which have been shown in the past to benefit from
$\sH$-consistency bounds with respect to the zero-one classification
loss, that is $\ell_k$ with $k = 1$
\citep{awasthi2022multi,mao2023cross} (see also
\citep{MaoMohriZhong2023ranking,MaoMohriZhong2023rankingabs,MaoMohriZhong2023structured,MaoMohriMohriZhong2023twostage,zheng2023revisiting,MaoMohriZhong2024deferral,MaoMohriZhong2024score,MaoMohriZhong2024predictor,MohriAndorChoiCollinsMaoZhong2024learning}). We will significantly extend these
results to top-$k$ classification and prove $\sH$-consistency bounds
for these loss functions with respect to $\ell_k$ for any $1 \leq k
\leq n$.

Note that another commonly used family of surrogate losses in
multi-class classification is the \emph{max losses}, which are defined
through a convex function, such as the hinge loss function applied to
the margin \citep{crammer2001algorithmic,awasthi2022multi}. However,
as shown in \citep{awasthi2022multi}, no non-trivial $\sH$-consistency
guarantee holds for max losses with respect to $\ell_k$, even when $k
= 1$.

We first characterize the best-in class conditional error and the
conditional regret of top-$k$ loss, which will be used in the analysis
of $\sH$-consistency bounds.  We denote by $S^{[k]} = \curl*{X \subset
  S \mid |X| = k}$ the set of all $k$-subsets of a set $S$. We will
study any hypothesis set that is regular.
\begin{definition}
  Let $A(n, k)$ be the set of ordered $k$-tuples with distinct
  elements in $[n]$.  We say that a hypothesis set $\sH$ is
  \emph{regular for top-$k$ classification}, if the top-$k$
  predictions generated by the hypothesis set cover all possible
  outcomes:
\begin{equation*}
  \forall x \in \sX,\,
  \curl*{(\hh_1(x), \dots, \hh_k(x)) \colon h \in \sH} = A(n, k).
\end{equation*}
\end{definition}
Common hypothesis sets such as that of linear models or neural
networks, or the family of all measurable functions, are all regular
for top-$k$ classification.

\begin{restatable}{lemma}{RegretTarget}
\label{lemma:regret-target}
Assume that $\sH$ is regular. Then, for any $h \in \sH$ and $x \in
\sX$, the best-in class conditional error and the conditional regret
of the top-$k$ loss can be expressed as follows:
\begin{align*}
  \sC^*_{\ell_k}(\sH, x)
  & = 1 - \sum_{i = 1}^k p(x, \pp_i(x))\\
\Delta \sC_{\ell_k, \sH}(h, x)
& = \sum_{i = 1}^k \paren*{p(x, \pp_i(x)) - p(x, \hh_i(x))}.
\end{align*}
\end{restatable}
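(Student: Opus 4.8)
The plan is to compute both quantities directly from the definitions, exploiting the regularity of $\sH$ to decouple the optimization over the top-$k$ label set from any residual constraints on the hypothesis.

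\textbf{Computing the best-in-class conditional error.} First I would unfold the definition of the conditional error of the top-$k$ loss at a fixed $x$. Since $\ell_k(h, x, y) = 1_{y \notin \curl*{\hh_1(x), \ldots, \hh_k(x)}}$, summing against the conditional probabilities gives
\begin{equation*}
  \sC_{\ell_k}(h, x)
  = \sum_{y \in \sY} p(x, y)\, 1_{y \notin \curl*{\hh_1(x), \ldots, \hh_k(x)}}
  = 1 - \sum_{i = 1}^k p(x, \hh_i(x)),
\end{equation*}
using $\sum_{y} p(x, y) = 1$ to pass to the complement. The conditional error therefore depends on the hypothesis $h$ only through the unordered top-$k$ label set $\curl*{\hh_1(x), \ldots, \hh_k(x)}$, and it is minimized precisely by making $\sum_{i=1}^k p(x, \hh_i(x))$ as large as possible. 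The regularity assumption is exactly what lets me conclude that this inner maximization ranges over the full set $A(n, k)$ of $k$-tuples, so every $k$-subset of $[n]$ is achievable as the top-$k$ prediction of some $h \in \sH$. The maximum of $\sum_{i=1}^k p(x, \hh_i(x))$ over all $k$-subsets is attained by selecting the $k$ labels of largest conditional probability, namely $\pp_1(x), \ldots, \pp_k(x)$, which yields $\sC^*_{\ell_k}(\sH, x) = 1 - \sum_{i=1}^k p(x, \pp_i(x))$.

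\textbf{Computing the conditional regret.} The conditional regret is by definition $\Delta \sC_{\ell_k, \sH}(h, x) = \sC_{\ell_k}(h, x) - \sC^*_{\ell_k}(\sH, x)$. Substituting the two expressions just derived, the leading $1$'s cancel and I obtain
\begin{equation*}
  \Delta \sC_{\ell_k, \sH}(h, x)
  = \paren*{1 - \sum_{i=1}^k p(x, \hh_i(x))} - \paren*{1 - \sum_{i=1}^k p(x, \pp_i(x))}
  = \sum_{i=1}^k \paren*{p(x, \pp_i(x)) - p(x, \hh_i(x))},
\end{equation*}
which is the claimed formula. Non-negativity of the regret is automatic, since the $\pp_i(x)$ select the $k$ largest probability masses.

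\textbf{Anticipated obstacle.} The genuinely delicate point is the tie-breaking convention and its interaction with the claim that the infimum over $\sH$ is attained. I would need to verify that the identity $\sup_{h \in \sH} \sum_{i=1}^k p(x, \hh_i(x)) = \sum_{i=1}^k p(x, \pp_i(x))$ is an actual maximum rather than merely a supremum, which follows because regularity guarantees a hypothesis realizing the optimal $k$-subset as its top-$k$ set. Some care is required when several labels share the same conditional probability, so that the collection $\curl*{\pp_1(x), \ldots, \pp_k(x)}$ may not be uniquely determined as a set; however, the \emph{value} $\sum_{i=1}^k p(x, \pp_i(x))$ is invariant under any consistent tie-breaking rule, so the stated deterministic convention (highest index among tied labels) is enough to make both $\hh_k(x)$ and $\pp_k(x)$ well-defined without affecting the final expressions. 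Beyond this bookkeeping, the argument is an elementary rearrangement.
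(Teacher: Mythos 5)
Your proof is correct and follows essentially the same route as the paper's: compute $\sC_{\ell_k}(h,x) = 1 - \sum_{i=1}^k p(x,\hh_i(x))$, invoke regularity to realize the optimal top-$k$ set $\curl*{\pp_1(x),\ldots,\pp_k(x)}$ by some $h \in \sH$, and subtract. Your additional remarks on attainment of the infimum and tie-breaking invariance are careful touches the paper leaves implicit, but they do not change the argument.
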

The proof is included in Appendix~\ref{app:regret-target}. Note that,
for $k = 1$, the result coincides with the known identities for
standard multi-class classification with regular hypothesis sets
\citep[Lemma~3]{awasthi2022multi}.

As with \citep{awasthi2022multi,mao2023cross}, in the following
sections, we will consider hypothesis sets that are symmetric and
complete. This includes the class of linear models and neural networks
typically used in practice, as well as the family of all measurable
functions.
We say that a hypothesis set $\sH$ is \emph{symmetric} if it is
independent of the ordering of labels. That is, for all $y \in \sY$,
the scoring function $x \mapsto h(x, y)$ belongs to some real-valued
family of functions $\sF$. We say that a hypothesis set is
\emph{complete} if, for all $(x, y) \in \sX \times \sY$, the set of
scores $h(x, y)$ can span over the real numbers, that is, $\curl*{h(x,
  y) \colon h \in \sH} = \Rset$. Note that any symmetric and complete
hypothesis set is regular for top-$k$ classification.

Next, we analyze the broad family of \emph{comp-sum losses}, which
includes the commonly used logistic loss (or cross-entropy loss used
with the softmax activation) as a special case.

\section{$\sH$-Consistency Bounds for Comp-Sum Losses}
\label{sec:comp}

Comp-sum losses are defined as the composition of a function $\Phi$
with the sum exponential losses, as shown in \citep{mao2023cross}. For
any $h \in \sH$ and $ (x, y) \in \sX \times \sY$, they are expressed
as
\begin{align*}
\ell^{\rm{comp}}(h, x, y)
= \Phi\paren*{ \sum_{y' \neq y}e^{ h(x, y') - h(x, y) } },
\end{align*}
where $\Phi \colon \Rset_{+} \to \Rset_{+} $ is non-decreasing. When
$\Phi$ is chosen as the function $t \mapsto \log(1 + t)$, $t \mapsto
t$, $t \mapsto 1 - \frac{1}{1 + t}$ and $t \mapsto \frac{1}{\alpha}
\paren*{1 - \paren*{\frac{1}{1 + t}}^{\alpha} }$, $\alpha \in (0, 1)$,
$\ell^{\rm{comp}}(h, x, y)$ coincides with the (multinomial) logistic
loss $\ell_{\rm{log}}$
\citep{Verhulst1838,Verhulst1845,Berkson1944,Berkson1951}, the
sum-exponential loss $\ell^{\rm{comp}}_{\rm{exp}}$
\citep{weston1998multi,awasthi2022multi}, the mean absolute error loss
$\ell_{\rm{mae}}$ \citep{ghosh2017robust}, and the generalized cross
entropy loss $\ell_{\rm{gce}}$ \citep{zhang2018generalized},
respectively. We we will specifically study these loss functions and
show that they benefit from $\sH$-consistency bounds with respect to
the top-$k$ loss. 

\subsection{Logistic loss}

We first show that the most commonly used logistic loss, defined as
$\ell_{\rm{log}}(h, x, y) = \log \paren*{\sum_{y' \in \sY}e^{ h(x, y')
    - h(x, y) }}$, admits $\sH$-consistency bounds with respect to
$\ell_{k}$.
\begin{restatable}{theorem}{BoundLog}
\label{thm:bound-log}
Assume that $\sH$ is symmetric and complete. Then, for any $1 \leq k
\leq n$, the following $\sH$-consistency bound holds for the logistic
loss:
\ifdim\columnwidth=\textwidth
{
\begin{align*}
\sE_{\ell_k}(h) - \sE^*_{\ell_k}(\sH) + \sM_{\ell_k}(\sH)
 \leq k \psi^{-1} \paren*{ \sE_{\ell_{\log}}(h)
    - \sE^*_{\ell_{\log}}(\sH) + \sM_{\ell_{\log}}(\sH) },
\end{align*}
}
\else{
\begin{align*}
& \sE_{\ell_k}(h) - \sE^*_{\ell_k}(\sH) + \sM_{\ell_k}(\sH)\\
  & \quad \leq k \psi^{-1} \paren*{ \sE_{\ell_{\log}}(h)
    - \sE^*_{\ell_{\log}}(\sH) + \sM_{\ell_{\log}}(\sH) },
\end{align*}
}
\fi
where $\psi(t) = \frac{1 - t}{2}\log(1 - t) + \frac{1 + t}{2}\log(1+
t)$, $t \in [0,1]$.  In the special case where $\sA_{\ell_{\log}}(\sH)
= 0$, for any $1 \leq k \leq n$, the following upper bound holds:
\begin{align*}
  \sE_{\ell_k}(h) - \sE^*_{\ell_k}(\sH)
  \leq k \psi^{-1} \paren*{ \sE_{\ell_{\log}}(h) - \sE^*_{\ell_{\log}}(\sH) }.
\end{align*}
\end{restatable}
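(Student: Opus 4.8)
The plan is to derive the bound from a single \emph{pointwise} inequality between the conditional regrets of $\ell_k$ and $\ell_{\log}$, and then lift it through the general $\sH$-consistency reduction of \citet{awasthi2022multi,mao2023cross}. That reduction states that if $\Psi \colon \Rset_+ \to \Rset_+$ is convex with $\Psi(0) = 0$ and satisfies $\Psi\paren*{\Delta \sC_{\ell_k, \sH}(h, x)} \leq \Delta \sC_{\ell_{\log}, \sH}(h, x)$ for all $h \in \sH$ and $x \in \sX$, then $\Psi\paren*{\sE_{\ell_k}(h) - \sE^*_{\ell_k}(\sH) + \sM_{\ell_k}(\sH)} \leq \sE_{\ell_{\log}}(h) - \sE^*_{\ell_{\log}}(\sH) + \sM_{\ell_{\log}}(\sH)$; the point is that the left-hand ``estimation error plus minimizability gap'' equals the expected conditional regret $\E_x\bracket*{\Delta \sC_{\ell_k, \sH}(h, x)}$, so the claim follows by Jensen's inequality applied to the convex $\Psi$. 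I would instantiate this with $\Psi(v) = \psi\paren*{v / k}$, which is convex and vanishes at $0$ because $\psi$ does; inverting the increasing map $\psi$ and multiplying by $k$ then yields exactly the stated bound. Thus everything reduces to proving $\psi\paren*{\frac{1}{k}\Delta \sC_{\ell_k, \sH}(h, x)} \leq \Delta \sC_{\ell_{\log}, \sH}(h, x)$.

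For the two ingredients of this pointwise inequality, I would first invoke Lemma~\ref{lemma:regret-target} to write $\Delta \sC_{\ell_k, \sH}(h, x) = \sum_{i = 1}^k \paren*{p(x, \pp_i(x)) - p(x, \hh_i(x))}$, so that $\frac1k\,\Delta \sC_{\ell_k, \sH}(h, x)$ is the average probability gap over the top-$k$ slots. Second, since $\sH$ is symmetric and complete, the score vector $(h(x, 1), \dots, h(x, n))$ ranges over all of $\Rset^n$, so the softmax $q(x, y) = e^{h(x, y)}/\sum_{y'} e^{h(x, y')}$ ranges over the whole open simplex; hence $\sC^*_{\ell_{\log}}(\sH, x)$ is the entropy of $p(x)$ and the logistic conditional regret is the relative entropy $\Delta \sC_{\ell_{\log}, \sH}(h, x) = \sum_{y} p(x, y)\log\frac{p(x, y)}{q(x, y)}$. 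Because the softmax is monotone in the scores, the only constraint carried by $h$ is that the $k$ largest coordinates of $q(x, \cdot)$ index the predicted set $T = \curl*{\hh_1(x), \dots, \hh_k(x)}$.

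The heart of the argument, and the step I expect to be the main obstacle, is then a purely analytic claim on the simplex: among all distributions $q$ whose $k$ largest coordinates index $T$, the relative entropy $\sum_y p(x,y)\log\frac{p(x,y)}{q(x,y)}$ is at least $\psi$ of the average gap $\frac1k\sum_{i=1}^k\paren*{p(x,\pp_i(x)) - p(x,\hh_i(x))}$. I would establish this by minimizing the convex relative entropy over the feasible $q$: at the optimum the non-binding coordinates satisfy $q = p$, while the ranking constraint forces the \emph{swapped} coordinates -- those in $T$ carrying small $p$-mass and those outside $T$ carrying large $p$-mass -- to be equalized at their common boundary value. After cancelling the matched coordinates, the residual expression reduces, by grouping the swapped coordinates and combining the data-processing inequality with the convexity of $\psi$ (recognizing that $\psi(t) = \log 2 - H\paren*{\frac{1 + t}{2}, \frac{1 - t}{2}}$ is exactly the binary form), to a single $\psi$ evaluated at the average gap. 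Pinning down the correct grouping, controlling the boundary equalization, and checking that the extremal configuration indeed saturates at this average -- rather than at a larger multiple -- is the delicate part and must be carried out carefully.

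Finally, for the special case $\sA_{\ell_{\log}}(\sH) = 0$, the sandwich $0 \leq \sM_{\ell_{\log}}(\sH) \leq \sA_{\ell_{\log}}(\sH)$ recorded earlier forces $\sM_{\ell_{\log}}(\sH) = 0$, so the argument of $\psi^{-1}$ simplifies; dropping the nonnegative term $\sM_{\ell_k}(\sH)$ on the left-hand side (using nonnegativity of minimizability gaps) then gives $\sE_{\ell_k}(h) - \sE^*_{\ell_k}(\sH) \leq k\,\psi^{-1}\paren*{\sE_{\ell_{\log}}(h) - \sE^*_{\ell_{\log}}(\sH)}$, as claimed.
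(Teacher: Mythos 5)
Your scaffolding is sound and in substance coincides with the paper's final step: the identity $\sE_{\ell}(h) - \sE^*_{\ell}(\sH) + \sM_{\ell}(\sH) = \E_x\bracket*{\Delta\sC_{\ell,\sH}(h,x)}$, Jensen's inequality applied to the convex function $\Psi(v) = \psi(v/k)$, and the special case via $0 \leq \sM_{\ell_{\log}}(\sH) \leq \sA_{\ell_{\log}}(\sH)$ are all correct, as is the identification of $\Delta\sC_{\ell_{\log},\sH}(h,x)$ with the relative entropy $\mathrm{KL}\paren*{p(x)\,\|\,q_h(x)}$, $q_h$ being the softmax of the scores, when $\sH$ is symmetric and complete. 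But the entire content of the theorem lies in the pointwise inequality $\psi\paren*{\frac{1}{k}\Delta\sC_{\ell_k,\sH}(h,x)} \leq \Delta\sC_{\ell_{\log},\sH}(h,x)$, and this you do not prove: you recast it as minimizing $\mathrm{KL}(p\,\|\,q)$ over the set of $q$ whose top-$k$ coordinates index $T = \curl*{\hh_1(x),\ldots,\hh_k(x)}$, and then only describe the expected structure of the minimizer, yourself flagging the saturation step as unresolved. Moreover, the structure you describe is wrong in general: the minimizer does not equalize only the ``swapped'' coordinates while setting $q = p$ elsewhere. Take $n = 3$, $k = 2$, $p(x) = (0.5, 0.3, 0.2)$, $T = \curl*{2,3}$: the swapped pair is $\curl*{1,3}$, but equalizing $q_1 = q_3 = 0.35$ with $q_2 = 0.3$ violates feasibility ($q_2 \geq q_1$ fails); the true minimizer is the uniform distribution, which equalizes all three coordinates including the ``matched'' one. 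So the KKT/grouping analysis you defer is genuinely more delicate than your sketch suggests, and the proposal has a gap exactly at the step that constitutes the theorem.

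For contrast, the paper never characterizes the global constrained minimizer. For each slot $i \in [k]$ it compares $h$ with a one-parameter perturbation $h_{\mu,i}$ that transfers softmax mass only between the two labels $\pp_i(x)$ and $\hh_i(x)$, optimizes $\mu$ in closed form, and lower-bounds the resulting two-term expression to obtain $\Delta\sC_{\ell_{\log},\sH}(h,x) \geq \psi\paren*{p(x,\pp_i(x)) - p(x,\hh_i(x))}$; each gap is therefore at most $\psi^{-1}$ of the logistic regret, and summing over $i \in [k]$ produces the factor $k$. If you do wish to complete your KL route, the clean tool is merging (the log-sum inequality) rather than KKT analysis: merge $A = \curl*{\pp_1(x),\ldots,\pp_k(x)} \setminus T$ into a single symbol $a$ and $B = T \setminus \curl*{\pp_1(x),\ldots,\pp_k(x)}$ into a single symbol $b$. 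Since $\abs*{A} = \abs*{B}$ and every $q$-coordinate in $T$ dominates every $q$-coordinate outside $T$, the merged distributions satisfy $\tilde q_b \geq \tilde q_a$ while $\tilde p_a - \tilde p_b = \Delta\sC_{\ell_k,\sH}(h,x)$; the same two-coordinate computation the paper performs per slot then yields $\mathrm{KL}(p\,\|\,q) \geq \mathrm{KL}(\tilde p\,\|\,\tilde q) \geq \psi\paren*{\Delta\sC_{\ell_k,\sH}(h,x)}$, which is in fact stronger than what you need (no factor $k$ at all). But none of this is carried out in the proposal; as submitted, the key inequality is asserted, not established.
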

The proof is included in Appendix~\ref{app:bound-log}. The second part
follows from the fact that when $\sA_{\ell_{\log}}(\sH) = 0$, the
minimizability gap $\sM_{\ell_{\log}}(\sH)$ vanishes.  By taking the
limit on both sides, Theorem~\ref{thm:bound-log} implies the
$\sH$-consistency and Bayes-consistency of logistic loss with respect
to the top-$k$ loss. It further shows that, when the estimation error
of $\ell_{\log}$ is reduced to $\e > 0$, then the estimation error of
$\ell_{k}$ is upper bounded by $k \psi^{-1}(\e)$, which is
approximately $k \sqrt{2\e}$ for $\e$ small.

\subsection{Sum exponential loss}

In this section, we prove $\sH$-consistency bound guarantees for the
sum-exponential loss, which is defined as $\ell^{\rm{comp}}_{\exp}(h,
x, y) = \sum_{y' \neq y} e^{ h(x, y') - h(x, y) } $ and is widely
used in multi-class boosting
\citep{saberian2011multiclass,mukherjee2013theory,KuznetsovMohriSyed2014}.
\begin{restatable}{theorem}{BoundExp}
\label{thm:bound-exp}
Assume that $\sH$ is symmetric and complete. Then, for any $1 \leq k
\leq n$, the following $\sH$-consistency bound holds for the sum
exponential loss:
\ifdim\columnwidth=\textwidth
{
\begin{align*}
\sE_{\ell_k}(h) - \sE^*_{\ell_k}(\sH) + \sM_{\ell_k}(\sH) \leq k \psi^{-1} \paren*{ \sE_{\ell^{\rm{comp}}_{\exp}}(h)
    - \sE^*_{\ell^{\rm{comp}}_{\exp}}(\sH) + \sM_{\ell^{\rm{comp}}_{\exp}}(\sH) },
\end{align*}
}
\else{
\begin{align*}
& \sE_{\ell_k}(h) - \sE^*_{\ell_k}(\sH) + \sM_{\ell_k}(\sH)\\
  & \quad \leq k \psi^{-1} \paren*{ \sE_{\ell^{\rm{comp}}_{\exp}}(h)
    - \sE^*_{\ell^{\rm{comp}}_{\exp}}(\sH) + \sM_{\ell^{\rm{comp}}_{\exp}}(\sH) },
\end{align*}
}
\fi
where $\psi(t) = 1 - \sqrt{1 - t^2}$, $t \in [0,1]$. In the special
case where $\sA_{\ell^{\rm{comp}}_{\exp}}(\sH) = 0$, for any $1 \leq k
\leq n$, the following bound holds:
\begin{align*}
  \sE_{\ell_k}(h) - \sE^*_{\ell_k}(\sH)
  \leq k \psi^{-1} \paren*{ \sE_{\ell^{\rm{comp}}_{\exp}}(h)
    - \sE^*_{\ell^{\rm{comp}}_{\exp}}(\sH)},
\end{align*}
\end{restatable}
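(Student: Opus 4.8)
The plan is to follow the standard two-step route to $\sH$-consistency bounds: first prove a pointwise (conditional) inequality between the conditional regrets of the two losses, then lift it to the functional statement using the minimizability gaps. For the lifting step I would use the identity $\sE_{\ell}(h) - \sE^*_{\ell}(\sH) + \sM_{\ell}(\sH) = \E_x\bracket*{\Delta\sC_{\ell,\sH}(h,x)}$, which holds for every loss $\ell$, where $\Delta\sC_{\ell,\sH}(h,x) = \sC_{\ell}(h,x) - \sC^*_{\ell}(\sH,x)$ is the conditional regret. Hence, once I have the pointwise bound $\psi\paren*{\Delta\sC_{\ell_k,\sH}(h,x)/k} \le \Delta\sC_{\ell^{\rm{comp}}_{\exp},\sH}(h,x)$ for all $h \in \sH$ and $x$, taking expectations and applying Jensen's inequality to the convex map $t \mapsto \psi(t/k)$ (convexity of $\psi(t) = 1 - \sqrt{1 - t^2}$ follows from $\psi''(t) = (1 - t^2)^{-3/2} > 0$ on $[0,1]$) gives $\psi\paren*{[\sE_{\ell_k}(h) - \sE^*_{\ell_k}(\sH) + \sM_{\ell_k}(\sH)]/k} \le \sE_{\ell^{\rm{comp}}_{\exp}}(h) - \sE^*_{\ell^{\rm{comp}}_{\exp}}(\sH) + \sM_{\ell^{\rm{comp}}_{\exp}}(\sH)$; applying $\psi^{-1}$ and multiplying by $k$ yields the claim. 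The special case $\sA_{\ell^{\rm{comp}}_{\exp}}(\sH) = 0$ then follows since it forces $\sM_{\ell^{\rm{comp}}_{\exp}}(\sH) = 0$, while the nonnegative term $\sM_{\ell_k}(\sH)$ may simply be dropped from the left-hand side.

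Everything thus reduces to the conditional inequality. Fix $x$ and abbreviate $p_y = p(x,y)$ and $u_y = e^{h(x,y)}$. Since $\sH$ is symmetric and complete, the conditional infimum may be taken over all score vectors in $\Rset^n$, and the conditional sum-exponential error equals $\paren*{\sum_y u_y}\paren*{\sum_y p_y/u_y} - 1$; by Cauchy--Schwarz its minimum is attained at $u_y \propto \sqrt{p_y}$ and equals $\paren*{\sum_y \sqrt{p_y}}^2 - 1$, so $\Delta\sC_{\ell^{\rm{comp}}_{\exp},\sH}(h,x) = \paren*{\sum_y u_y}\paren*{\sum_y p_y/u_y} - \paren*{\sum_y \sqrt{p_y}}^2$. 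On the target side, Lemma~\ref{lemma:regret-target} gives $\Delta\sC_{\ell_k,\sH}(h,x) = \sum_{i=1}^k \paren*{p_{\pp_i(x)} - p_{\hh_i(x)}} = P(T^*) - P(T)$, where $T = \curl*{\hh_1(x),\dots,\hh_k(x)}$, $T^* = \curl*{\pp_1(x),\dots,\pp_k(x)}$, and $P(S) = \sum_{y \in S} p_y$.

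The crux --- and the step I expect to be the main obstacle --- is a single-inversion lower bound: for any $a \in T \setminus T^*$ and $b \in T^* \setminus T$ (necessarily $u_a \ge u_b$ since $a \in T, b \notin T$, and $p_b \ge p_a$ since $b \in T^*, a \notin T^*$), I claim $\Delta\sC_{\ell^{\rm{comp}}_{\exp},\sH}(h,x) \ge \psi(p_b - p_a)$. To establish it I would minimize $\sum_y p_y/u_y$ over distributions $u$ subject only to $u_a \ge u_b$: the unconstrained optimizer $u \propto \sqrt{p}$ violates this constraint (as $p_a < p_b$), so the minimum lies on the face $u_a = u_b$, where a Lagrange computation returns the value $\paren*{\sqrt{2(p_a + p_b)} + \sum_{y \neq a,b}\sqrt{p_y}}^2$. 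Subtracting $\paren*{\sum_y \sqrt{p_y}}^2$, factoring the resulting difference of squares, and discarding a nonnegative term leaves $\Delta\sC_{\ell^{\rm{comp}}_{\exp},\sH}(h,x) \ge \paren*{\sqrt{p_b} - \sqrt{p_a}}^2$. It then remains to check $\paren*{\sqrt{p_b} - \sqrt{p_a}}^2 \ge \psi(p_b - p_a)$, which, after squaring (valid since both sides lie in $[0,1]$) and simplifying, reduces to $(\sqrt{p_b}+\sqrt{p_a})^2 + (\sqrt{p_b}-\sqrt{p_a})^2 = 2(p_a + p_b) \le 2$, equivalently $p_a + p_b \le 1$; this is precisely where the constraint that $p$ is a probability vector enters, and it is the delicate point of the argument.

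To pass from the single-inversion bound to the conditional inequality, I would select the mismatched labels with the largest probability gap, $b^\star = \argmax_{b \in T^* \setminus T} p_b$ and $a^\star = \argmin_{a \in T \setminus T^*} p_a$. Writing $m = \abs*{T^* \setminus T} = \abs*{T \setminus T^*} \le k$, an averaging argument gives $p_{b^\star} - p_{a^\star} \ge \tfrac1m\paren*{P(T^*) - P(T)} \ge \tfrac1k\paren*{P(T^*) - P(T)} = \Delta\sC_{\ell_k,\sH}(h,x)/k$. Applying the single-inversion bound at the valid pair $(a^\star, b^\star)$ and using that $\psi$ is nondecreasing then yields $\psi\paren*{\Delta\sC_{\ell_k,\sH}(h,x)/k} \le \psi(p_{b^\star} - p_{a^\star}) \le \Delta\sC_{\ell^{\rm{comp}}_{\exp},\sH}(h,x)$, which is the required pointwise inequality and completes the proof.
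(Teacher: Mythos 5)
Your proposal is correct, and while it shares the paper's overall skeleton---a pointwise conditional-regret inequality lifted to the theorem via the identity $\E_x\bracket*{\Delta\sC_{\ell,\sH}(h,x)} = \sE_{\ell}(h) - \sE^*_{\ell}(\sH) + \sM_{\ell}(\sH)$ together with Jensen's inequality (your convexity of $\psi$ is the paper's concavity of $\psi^{-1}$)---the way you establish the pointwise inequality is genuinely different. The paper never computes the best-in-class conditional error: for each rank $i \in [k]$ it pairs $\pp_i(x)$ with $\hh_i(x)$, perturbs the given $h$ by a one-parameter score swap $h_{\mu,i}$ affecting only these two labels, optimizes over $\mu$, and then takes the worst case over the remaining score freedom to get $\Delta\sC_{\ell^{\rm{comp}}_{\exp},\sH}(h,x) \geq \psi\paren*{p(x,\pp_i(x)) - p(x,\hh_i(x))}$ for every $i$, finally summing over ranks. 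You instead compute $\sC^*_{\ell^{\rm{comp}}_{\exp}}(\sH,x) = \paren*{\sum_y \sqrt{p_y}}^2 - 1$ in closed form by Cauchy--Schwarz, prove a single-inversion bound by constrained minimization over score vectors with $u_a \geq u_b$, and handle the cardinality $k$ by a worst-pair-plus-averaging argument over the symmetric difference of $T$ and $T^*$. Interestingly, the two routes meet at the same intermediate quantity: the paper's expression $2S_i - \paren*{\sqrt{(S_i+\Delta_i)/2}+\sqrt{(S_i-\Delta_i)/2}}^2$ simplifies to $S_i - \sqrt{S_i^2-\Delta_i^2} = \paren*{\sqrt{p_b}-\sqrt{p_a}}^2$, and its step ``minimum achieved when $S_i = 1$'' is exactly your condition $p_a + p_b \leq 1$. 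What your route buys: the closed form makes the structure of the regret transparent; your intermediate bound $\Delta\sC_{\ell_k,\sH}(h,x) \leq m\,\psi^{-1}\paren*{\Delta\sC_{\ell^{\rm{comp}}_{\exp},\sH}(h,x)}$ with $m = \abs*{T^* \setminus T} \leq k$ is slightly sharper before relaxing $m$ to $k$; and your mismatched pair $(a^\star,b^\star)$ always has a nonnegative gap, whereas the paper's per-rank differences $p(x,\pp_i(x)) - p(x,\hh_i(x))$ can be negative (harmless there only because $\psi^{-1} \geq 0$). What the paper's route buys: the swap-perturbation template is loss-agnostic---the identical $h_{\mu,i}$ device is reused verbatim for the logistic, GCE and MAE losses---while your closed-form computation is special to the sum-exponential loss.

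Two small points you should shore up. First, the KKT step (``the unconstrained optimizer violates the constraint, hence the minimum lies on the face $u_a = u_b$'') needs a word about attainment: the feasible set is unbounded, and when some $p_y = 0$ the infimum is not attained. The cleanest fix avoids KKT entirely: with $t = u_a/u_b \geq 1$ and $p_b \geq p_a$, the map $t \mapsto p_a/t + p_b t$ is nondecreasing on $[1,\infty)$, so $\paren*{u_a+u_b}\paren*{p_a/u_a + p_b/u_b} \geq 2(p_a+p_b)$, and Cauchy--Schwarz applied to the two-block decomposition (the merged pair versus the remaining labels) gives your lower bound $\paren*{\sqrt{2(p_a+p_b)} + \sum_{y\neq a,b}\sqrt{p_y}}^2$ directly, for every feasible $u$. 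Second, note the degenerate case $T = T^*$, where no mismatched pair exists: there $\Delta\sC_{\ell_k,\sH}(h,x) = 0$ and the pointwise inequality holds trivially since conditional regrets are nonnegative. Neither point is a genuine gap; both are one-line repairs.
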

The proof is included in Appendix~\ref{app:bound-exp}. The second part
follows from the fact that when $\sA_{\ell^{\rm{comp}}_{\exp}}(\sH) =
0$, the minimizability gap $\sM_{\ell^{\rm{comp}}_{\exp}}(\sH)$
vanishes.  As with the logistic loss, the sum exponential loss is
Bayes-consistent and $\sH$-consistent with respect to the top-$k$
loss. Here too, when the estimation error of $\ell^{\rm{comp}}_{\exp}$
is reduced to $\e$, the estimation error of $\ell_{k}$ is upper
bounded by $k \psi^{-1}(\e) \approx k \sqrt{2\e}$ for sufficiently
small $\e > 0$.

\subsection{Mean absolute error loss}

The mean absolute error loss, defined as $\ell_{\rm{mae}}(h, x, y) = 1
- \bracket*{\sum_{y' \in \sY}e^{ h(x, y') - h(x, y) }}^{-1}$, is known
to be robust to label noise for training neural networks
\citep{ghosh2017robust}. The following shows that it benefits from
$\sH$-consistency bounds with respect to the top-$k$ loss as well.
\begin{restatable}{theorem}{BoundMAE}
\label{thm:bound-mae}
Assume that $\sH$ is symmetric and complete. Then, for any $1 \leq k
\leq n$, the following $\sH$-consistency bound holds for the mean
absolute error loss:
\ifdim\columnwidth=\textwidth
{
\begin{align*}
\sE_{\ell_k}(h) - \sE^*_{\ell_k}(\sH) + \sM_{\ell_k}(\sH) \leq k n\paren*{ \sE_{\ell_{\rm{mae}}}(h) - \sE^*_{\ell_{\rm{mae}}}(\sH) + \sM_{\ell_{\rm{mae}}}(\sH) }.
\end{align*}
}
\else{
\begin{align*}
& \sE_{\ell_k}(h) - \sE^*_{\ell_k}(\sH) + \sM_{\ell_k}(\sH)\\
&\quad \leq k n\paren*{ \sE_{\ell_{\rm{mae}}}(h) - \sE^*_{\ell_{\rm{mae}}}(\sH) + \sM_{\ell_{\rm{mae}}}(\sH) }.
\end{align*}
}
\fi
In the special case where $\sA_{\rm{mae}}(\sH) = 0$, for any $1 \leq k \leq n$, the following bound holds:
\begin{align*}
\sE_{\ell_k}(h) - \sE^*_{\ell_k}(\sH) \leq k n\paren*{ \sE_{\ell_{\rm{mae}}}(h) - \sE^*_{\ell_{\rm{mae}}}(\sH) }.
\end{align*}
\end{restatable}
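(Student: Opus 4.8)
The plan is to follow the same two-stage strategy used for the logistic and sum-exponential losses: reduce the claim to a pointwise comparison of conditional regrets, then establish that comparison. First I would record the elementary identity, valid for any loss $\ell$,
\begin{equation*}
\sE_{\ell}(h) - \sE^*_{\ell}(\sH) + \sM_{\ell}(\sH)
= \mathbb{E}_x\bracket*{\sC_{\ell}(h,x) - \sC^*_{\ell}(\sH,x)}
= \mathbb{E}_x\bracket*{\Delta \sC_{\ell,\sH}(h,x)},
\end{equation*}
which is immediate from the definitions of $\sM_\ell$, $\sE_\ell$, and $\sE^*_\ell$. Applying it to both $\ell_k$ and $\ell_{\rm{mae}}$ shows that the theorem is equivalent to $\mathbb{E}_x\bracket*{\Delta \sC_{\ell_k,\sH}(h,x)} \le kn\,\mathbb{E}_x\bracket*{\Delta \sC_{\ell_{\rm{mae}},\sH}(h,x)}$. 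Since the factor $kn$ is constant, no concavity/Jensen step (as needed for the $\psi^{-1}$ bounds) is required: it suffices to prove the pointwise inequality $\Delta \sC_{\ell_k,\sH}(h,x) \le kn\,\Delta \sC_{\ell_{\rm{mae}},\sH}(h,x)$ for every $x$.

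Second, I would compute the conditional regret of $\ell_{\rm{mae}}$. Writing $\sigma_h(x,y) = \bracket*{\sum_{y'\in\sY}e^{h(x,y')-h(x,y)}}^{-1}$ for the softmax weight, we have $\ell_{\rm{mae}}(h,x,y) = 1 - \sigma_h(x,y)$, so $\sC_{\ell_{\rm{mae}}}(h,x) = 1 - \sum_{y}p(x,y)\sigma_h(x,y)$. As $h$ ranges over a symmetric and complete $\sH$, the vector $\sigma_h(x,\cdot)$ ranges over the probability simplex, and maximizing the linear functional $\sum_y p(x,y)\sigma_h(x,y)$ over the simplex gives $\sC^*_{\ell_{\rm{mae}}}(\sH,x) = 1 - p(x,\pp_1(x))$. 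Reindexing the labels by the score order of $h$ and using $\sum_i \sigma_h(x,\hh_i(x)) = 1$ then yields
\begin{equation*}
\Delta \sC_{\ell_{\rm{mae}},\sH}(h,x)
= p(x,\pp_1(x)) - \sum_{y}p(x,y)\sigma_h(x,y)
= \sum_{i=1}^{n}\paren*{p(x,\pp_1(x)) - p(x,\hh_i(x))}\,\sigma_h(x,\hh_i(x)),
\end{equation*}
a sum of nonnegative terms since $p(x,\pp_1(x))$ is the largest conditional probability.

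Third, I would combine this with Lemma~\ref{lemma:regret-target}, which gives $\Delta \sC_{\ell_k,\sH}(h,x) = \sum_{i=1}^{k}\paren*{p(x,\pp_i(x)) - p(x,\hh_i(x))} \le \sum_{i=1}^{k}\paren*{p(x,\pp_1(x)) - p(x,\hh_i(x))}$, reducing the goal to bounding $\sum_{i=1}^{k}\paren*{p(x,\pp_1(x)) - p(x,\hh_i(x))}$ by $kn\,\Delta \sC_{\ell_{\rm{mae}},\sH}(h,x)$. For the leading position this is immediate: the top-scored label carries the largest of $n$ softmax weights summing to one, so $\sigma_h(x,\hh_1(x)) \ge \frac1n$, and keeping only the $i=1$ summand in the regret formula gives $\Delta \sC_{\ell_{\rm{mae}},\sH}(h,x) \ge \frac1n\paren*{p(x,\pp_1(x)) - p(x,\hh_1(x))}$.

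The main obstacle is the analogous control of the lower positions $i = 2,\dots,k$. The weight $\sigma_h(x,\hh_i(x))$ need not reach $\frac1n$ — it can be driven arbitrarily small while $\hh_i(x)$ still occupies the $i$-th score slot — so the single-term argument used for $i=1$ does not transfer, and $p(x,\pp_1(x)) - p(x,\hh_i(x))$ is not individually dominated by a multiple of the MAE regret. Resolving this is the crux: I would try to exploit that the top-$k$ softmax weights satisfy $\sum_{i=1}^{k}\sigma_h(x,\hh_i(x)) \ge \frac{k}{n}$ (the $k$ largest of $n$ weights summing to one) and look for an averaging or rearrangement argument that lower-bounds the full softmax-weighted sum $\sum_{i=1}^{n}\paren*{p(x,\pp_1(x)) - p(x,\hh_i(x))}\sigma_h(x,\hh_i(x))$ by $\frac{1}{kn}\sum_{i=1}^{k}\paren*{p(x,\pp_1(x)) - p(x,\hh_i(x))}$. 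Establishing this inequality — tying a weighted sum over all $n$ coordinates to the unweighted sum over the top-$k$ score positions — is the step I expect to demand the most care.
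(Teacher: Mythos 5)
Your setup is sound: the reduction to a pointwise comparison of conditional regrets is exactly the right equivalence, and your exact formula for the MAE regret,
\begin{equation*}
\Delta \sC_{\ell_{\rm{mae}},\sH}(h,x)
= \sum_{i=1}^{n}\paren*{p(x,\pp_1(x)) - p(x,\hh_i(x))}\,\sigma_h(x,\hh_i(x)),
\end{equation*}
where $\sigma_h(x,y)$ is the softmax weight as in your proposal, is correct and in fact cleaner than the paper's treatment, which only lower-bounds the regret by restricting the infimum to two-coordinate perturbations $h_{\mu,i}$ of $h$, yielding $\Delta \sC_{\ell_{\rm{mae}},\sH}(h,x) \geq \paren*{p(x,\pp_i(x)) - p(x,\hh_i(x))}\sigma_h(x,\hh_i(x))$ for each $i \in [k]$. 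However, the obstacle you flag in your last paragraph is not a technicality awaiting a clever averaging argument: it is fatal. The inequality you hope to establish, and with it the pointwise bound $\Delta\sC_{\ell_k,\sH}(h,x) \leq kn\,\Delta\sC_{\ell_{\rm{mae}},\sH}(h,x)$, is false for $k \geq 2$. Take $n = 3$, $k = 2$, conditional probabilities $p(x,\cdot) = (0.5, 0.3, 0.2)$, and $h$ with softmax weights $\paren*{1 - \delta - \delta^2,\, \delta^2,\, \delta}$ for small $\delta > 0$, so that $\hh_1(x) = 1$ and $\hh_2(x) = 3$. Then $\Delta\sC_{\ell_2,\sH}(h,x) = 0.3 - 0.2 = 0.1$ independently of $\delta$, while $\Delta\sC_{\ell_{\rm{mae}},\sH}(h,x) = 0.3\delta + 0.2\delta^2 \to 0$. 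Since the theorem applied to a point-mass marginal at $x$ (where all minimizability gaps vanish) is precisely this pointwise inequality, this is a counterexample to the statement itself for $k \geq 2$, not merely to one proof strategy. The structural reason is the one you identified: the MAE-optimal softmax concentrates all mass on $\pp_1(x)$, leaving the ordering of the remaining, vanishing, weights --- and hence the top-$k$ set for $k \geq 2$ --- unconstrained.

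It is also worth telling you how the paper closes the step you could not: after obtaining the per-index lower bound above, it asserts $\frac{e^{h(x,\hh_i(x))}}{\sum_{y' \in \sY}e^{h(x,y')}} \geq \frac1n$ for \emph{every} $i \in [k]$ and concludes $p(x,\pp_i(x)) - p(x,\hh_i(x)) \leq n\,\Delta\sC_{\ell_{\rm{mae}},\sH}(h,x)$ term by term. That softmax bound is valid only for $i = 1$ (the largest of $n$ weights summing to one); for $i \geq 2$ the weight can be made arbitrarily small, which is exactly the role $\delta$ plays in the example above. So your diagnosis of where the difficulty lies is precisely right, your proposal is incomplete at exactly that point, and the gap cannot be filled: the claimed bound is provable only for $k = 1$, where your $i = 1$ argument (which coincides with the paper's) already completes the proof.
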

The proof is included in Appendix~\ref{app:bound-mae}. The second part
follows from the fact that when $\sA_{\ell_{\rm{mae}}}(\sH) = 0$, the
minimizability gap $\sM_{\ell_{\rm{mae}}}(\sH)$ vanishes.  As for the
logistic loss and the sum exponential loss, the result implies
Bayes-consistency.  However, different from these losses, the bound
for the mean absolute error loss is only linear: when the estimation
error of $\ell_{\rm{\e}}$ is reduced to $\e$, the estimation error of
$\ell_{k}$ is upper bounded by $k n \e$.  The downside of this more
favorable linear rate is the dependency in the number of classes and
the fact that the mean absolute value loss is harder to optimize
\cite{zhang2018generalized}.

\subsection{Generalized cross-entropy loss}

Here, we provide $\sH$-consistency bounds for the generalized
cross-entropy loss, which is defined as $\ell_{\rm{gce}}(h, x, y) =
\frac{1}{\alpha}\bracket*{1 - \bracket*{\sum_{y' \in \sY}e^{ h(x, y')
      - h(x, y) }}^{-\alpha}}$, $\alpha \in (0, 1)$, and is a
generalization of the logistic loss and mean absolute error loss for
learning deep neural networks with noisy labels
\citep{zhang2018generalized}.

\begin{restatable}{theorem}{BoundGCE}
\label{thm:bound-gce}
Assume that $\sH$ is symmetric and complete. Then, for any $1 \leq k
\leq n$, the following $\sH$-consistency bound holds for the
generalized cross-entropy:
\ifdim\columnwidth=\textwidth
{
\begin{align*}
\sE_{\ell_k}(h) - \sE^*_{\ell_k}(\sH) + \sM_{\ell_k}(\sH) \leq k \psi^{-1} \paren*{ \sE_{\ell_{\rm{gce}}}(h) - \sE^*_{\ell_{\rm{gce}}}(\sH) + \sM_{\ell_{\rm{gce}}}(\sH) },
\end{align*}
}
\else{
\begin{align*}
& \sE_{\ell_k}(h) - \sE^*_{\ell_k}(\sH) + \sM_{\ell_k}(\sH)\\
&\quad \leq k \psi^{-1} \paren*{ \sE_{\ell_{\rm{gce}}}(h) - \sE^*_{\ell_{\rm{gce}}}(\sH) + \sM_{\ell_{\rm{gce}}}(\sH) },
\end{align*}
}
\fi
where $\psi(t) = \frac{1}{\alpha n^{\alpha}}
\bracket*{\bracket*{\frac{\paren*{1 + t}^{\frac1{1 - \alpha }} +
      \paren*{1 - t}^{\frac1{1 - \alpha }}}{2}}^{1 - \alpha } -1}$,
for all $\alpha \in (0,1)$, $t \in [0, 1]$. In the special case where
$\sA_{\ell_{\rm{gce}}}(\sH) = 0$, for any $1 \leq k \leq n$, the
following upper bound holds:
\begin{align*}
  \sE_{\ell_k}(h) - \sE^*_{\ell_k}(\sH)
  \leq k \psi^{-1} \paren*{ \sE_{\ell_{\rm{gce}}}(h)
    - \sE^*_{\ell_{\rm{gce}}}(\sH) },
\end{align*}
\end{restatable}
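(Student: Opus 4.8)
The plan is to follow the same two-stage route as in the proofs of Theorems~\ref{thm:bound-log}--\ref{thm:bound-mae}: first establish a pointwise inequality relating the conditional regret of $\ell_k$ to that of $\ell_{\rm{gce}}$, then lift it to the generalization errors through the standard reduction from conditional-regret inequalities to $\sH$-consistency bounds \citep{awasthi2022multi,mao2023cross}. Throughout I write $\Delta\sC_{\ell,\sH}(h,x) = \sC_\ell(h,x) - \sC^*_\ell(\sH,x)$ and use $\E_x\bracket*{\Delta\sC_{\ell,\sH}(h,x)} = \sE_\ell(h) - \sE^*_\ell(\sH) + \sM_\ell(\sH)$; Lemma~\ref{lemma:regret-target} already supplies $\Delta\sC_{\ell_k,\sH}(h,x) = \sum_{i=1}^k\paren*{p(x,\pp_i(x)) - p(x,\hh_i(x))}$.

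First I would compute the conditional regret of $\ell_{\rm{gce}}$. Since $\sH$ is symmetric and complete, the softmax vector $s(x) = (s_1,\dots,s_n)$ induced by the scores $h(x,\cdot)$ ranges over the entire probability simplex, and $\ell_{\rm{gce}}(h,x,y) = \frac1\alpha\paren*{1 - s_y^\alpha}$, so $\sC_{\ell_{\rm{gce}}}(h,x) = \frac1\alpha\paren*{1 - \sum_y p(x,y)\, s_y^\alpha}$. Maximizing $\sum_y p(x,y)\,s_y^\alpha$ over the simplex by a Lagrange (equivalently Hölder) argument gives the maximizer $s_y \propto p(x,y)^{1/(1-\alpha)}$ and the optimal value $\norm{p(x)}_{1/(1-\alpha)}$, whence $\sC^*_{\ell_{\rm{gce}}}(\sH,x) = \frac1\alpha\paren*{1 - \norm{p(x)}_{1/(1-\alpha)}}$ and $\Delta\sC_{\ell_{\rm{gce}},\sH}(h,x) = \frac1\alpha\paren*{\norm{p(x)}_{1/(1-\alpha)} - \sum_y p(x,y)\,s_y^\alpha}$. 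I would also record that $s(x)$ preserves the order of the scores, so the top-$k$ set of $h$ equals the top-$k$ set of $s(x)$.

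The crux is the pointwise inequality $\Delta\sC_{\ell_k,\sH}(h,x) \le k\,\psi^{-1}\paren*{\Delta\sC_{\ell_{\rm{gce}},\sH}(h,x)}$. I would first rewrite the target regret combinatorially: with $T^\ast$ and $\widehat{T}$ the Bayes and $h$-induced top-$k$ label sets, $\Delta\sC_{\ell_k,\sH}(h,x) = \sum_{y\in T^\ast\setminus\widehat{T}}p(x,y) - \sum_{y\in\widehat{T}\setminus T^\ast}p(x,y)$, which after sorting the two symmetric-difference sets equals $\sum_{j=1}^m \paren*{p(x,a_j) - p(x,b_j)}$ with $m = \abs*{T^\ast\setminus\widehat{T}}\le k$, where each pair $(a_j,b_j)$ is a genuine conflict: $p(x,a_j)\ge p(x,b_j)$ (every Bayes-top-$k$ label dominates every non-top-$k$ label in probability) while $s_{b_j}(x)\ge s_{a_j}(x)$ (since $b_j\in\widehat{T}$, $a_j\notin\widehat{T}$). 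It then suffices to prove the two-point bound $\psi\paren*{p(x,a_j)-p(x,b_j)}\le\Delta\sC_{\ell_{\rm{gce}},\sH}(h,x)$ for each $j$, because applying $\psi^{-1}$ and summing yields $\sum_j\paren*{p(x,a_j)-p(x,b_j)}\le m\,\psi^{-1}\paren*{\Delta\sC_{\ell_{\rm{gce}},\sH}(h,x)}\le k\,\psi^{-1}\paren*{\Delta\sC_{\ell_{\rm{gce}},\sH}(h,x)}$. For the two-point bound I would lower bound $\Delta\sC_{\ell_{\rm{gce}},\sH}(h,x)$ by its value at the constrained minimizer, which forces $s_{a_j}=s_{b_j}$ on the boundary while leaving the other coordinates proportional to $p(x,\cdot)^{1/(1-\alpha)}$, and then minimize over the admissible placement of the residual mass; taking the worst case over the probabilities outside the conflicting pair is exactly what produces the factor $n^\alpha$ and the symmetric term $\tfrac12\paren*{(1+t)^{1/(1-\alpha)} + (1-t)^{1/(1-\alpha)}}$ in $\psi$, with $t = p(x,a_j) - p(x,b_j)$.

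I expect this two-point reduction to be the main obstacle: one must identify the constrained minimizer of a convex-minimization problem on the simplex under the single order constraint $s_a\le s_b$, and then control the worst case over the remaining $n-2$ probabilities in a way that is uniform in $k$ and yields the stated closed form for $\psi$. A secondary technical point is verifying that $\psi$ is continuous, non-decreasing, convex, and satisfies $\psi(0)=0$, so that the lifting step applies; as a consistency check, $\psi$ should reduce to the logistic $\psi$ of Theorem~\ref{thm:bound-log} as $\alpha\to0$ and to the linear map $t\mapsto t/n$ underlying Theorem~\ref{thm:bound-mae} as $\alpha\to1$. Finally, with $\Psi(u) := \psi(u/k)$ (convex, non-decreasing, $\Psi(0)=0$) the pointwise bound reads $\Psi\paren*{\Delta\sC_{\ell_k,\sH}(h,x)}\le\Delta\sC_{\ell_{\rm{gce}},\sH}(h,x)$; taking expectations over $x$ and applying Jensen's inequality to the convex $\Psi$ gives $\psi\paren*{\frac{\sE_{\ell_k}(h) - \sE^*_{\ell_k}(\sH) + \sM_{\ell_k}(\sH)}{k}}\le \sE_{\ell_{\rm{gce}}}(h) - \sE^*_{\ell_{\rm{gce}}}(\sH) + \sM_{\ell_{\rm{gce}}}(\sH)$, and applying $\psi^{-1}$ yields the stated bound. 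The special case follows since $\sA_{\ell_{\rm{gce}}}(\sH)=0$ forces $\sM_{\ell_{\rm{gce}}}(\sH)=0$, while $\sM_{\ell_k}(\sH)\ge0$ may be dropped from the left-hand side.
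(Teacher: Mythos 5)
Your proposal retraces the paper's own strategy---a pointwise, pair-based reduction of the top-$k$ regret to a two-point surrogate-regret estimate, lifted to generalization errors by Jensen's inequality---and two of your refinements are genuine improvements in bookkeeping: the exact H\"older computation $\sC^*_{\ell_{\rm{gce}}}(\sH,x)=\frac1\alpha\paren*{1-\norm*{p(x)}_{1/(1-\alpha)}}$, and the re-pairing of $T^*\setminus\widehat T$ with $\widehat T\setminus T^*$ into conflict pairs satisfying both $p(x,a_j)\ge p(x,b_j)$ and $s_{b_j}\ge s_{a_j}$ (the paper's pairs $\paren*{\pp_i(x),\hh_i(x)}$ need not satisfy the second ordering, which its worst-case step tacitly uses). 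The genuine gap is the step you explicitly defer as ``the main obstacle'': the two-point bound $\psi\paren*{p(x,a_j)-p(x,b_j)}\le\Delta\sC_{\ell_{\rm{gce}},\sH}(h,x)$ is \emph{false} for $k\ge2$, so the proposal cannot be completed as described. Take $n=3$, $\alpha=\tfrac12$, $k=2$, $p(x)=(0.9,0.1,0)$, and $h$ with softmax $s=\paren*{\tfrac{324}{326},\tfrac{1}{326},\tfrac{1}{326}}$, so that (ties broken toward the higher index) $\widehat T=\curl*{1,3}$ while $T^*=\curl*{1,2}$, and the unique conflict pair is $(a,b)=(2,3)$ with $t=0.1$. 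Since $\sum_y p(x,y)\sqrt{s_y}=\tfrac{16.3}{\sqrt{326}}=\sqrt{0.815}$ and $\norm*{p(x)}_{2}=\sqrt{0.82}$,
\begin{align*}
\Delta\sC_{\ell_{\rm{gce}},\sH}(h,x)
=2\paren*{\sqrt{0.82}-\sqrt{0.815}}\approx 0.00553
\;<\;\psi(0.1)=\tfrac{2}{\sqrt3}\paren*{\sqrt{1.01}-1}\approx 0.00576.
\end{align*}
The flaw is that your worst case over the residual mass points the wrong way: the factor $1/n^{\alpha}$ in $\psi$ corresponds to the conflicting pair carrying softmax mass at least $2/n$, which is guaranteed only when the pair contains the top-scored label (the $k=1$ case, where $s_{\hh_1(x)}\ge 1/n$). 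For $k\ge2$ both conflicting labels can carry arbitrarily little softmax mass while the residual \emph{probability} mass concentrates on a single dominant label, and the constrained regret then drops below $\psi(t)$; this $h$ is exactly the constrained minimizer you describe, and its value is smaller than your claimed $\psi$.

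You are in good company: the paper's own proof commits the same error. Its annotation ``$\paren*{\exp\paren*{h(x,\pp_i(x))}/\sum_{y'}\exp\paren*{h(x,y')}}^{\alpha}\ge 1/n^{\alpha}$ and minimum attained when the ratio is $1$'' is precisely this unjustified assumption, and its per-pair conclusion $\Delta\sC_{\ell_{\rm{gce}},\sH}(h,x)\ge\psi\paren*{p(x,\pp_i(x))-p(x,\hh_i(x))}$ is violated in the same configuration (perturb $p(x,3)$ to a small $\delta>0$ if you wish to keep all of its intermediate expressions finite). Note that the example does not falsify the theorem itself: the theorem only needs $\Delta\sC_{\ell_{\rm{gce}},\sH}(h,x)\ge\psi\paren*{\Delta\sC_{\ell_k,\sH}(h,x)/k}=\psi(0.05)\approx 0.00144$, which holds comfortably, because the factor $k$ creates quadratic slack ($\psi(t/k)\approx\psi(t)/k^2$ for small $t$) that the per-pair route throws away. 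Any correct completion must exploit that slack---for instance, merge \emph{all} conflict pairs in a single constrained optimization (whose minimizer equalizes each pair and leaves the remaining coordinates proportional to $p(x,\cdot)^{1/(1-\alpha)}$) and compare the resulting joint modulus directly with $\psi(\cdot/k)$---rather than proving the stated per-pair inequality, which is simply not true.
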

The proof is presented in Appendix~\ref{app:bound-gce}. The second
part follows from the fact that when $\sA_{\ell_{\rm{gce}}}(\sH) = 0$,
the minimizability gap $\sM_{\ell_{\rm{gce}}}(\sH)$ vanishes.  The
bound for the generalized cross-entropy loss depends on both the
number of classes $n$ and the parameter $\alpha$. When the estimation
error of $\ell_{\log}$ is reduced to $\e$, the estimation error of
$\ell_{k}$ is upper bounded by $k \psi^{-1}(\e) \approx k \sqrt{2
  n^{\alpha} \e}$ for sufficiently small $\e > 0$. A by-product of
this result is the Bayes-consistency of generalized cross-entropy.

In the proof of previous sections, we used the fact that the conditional regret of the top-$k$ loss is the sum of $k$ differences between two probabilities. We then upper bounded each difference with the conditional regret of the comp-sum loss, using a hypothesis based on the two probabilities. The final bound is derived by summing these differences.

\subsection{Minimizability gaps and realizability}

The key quantities in our $\sH$-consistency bounds are the
minimizability gaps, which can be upper bounded by the approximation
error, or more refined terms, depending on the magnitude of the
parameter space, as discussed by \citet{mao2023cross}. As pointed out
by these authors, these quantities, along with the functional form,
can help compare different comp-sum loss functions.

Here, we further discuss the important role of minimizability gaps
under the realizability assumption, and the connection with some
negative results of \citet{yang2020consistency}.

\begin{definition}[\textbf{top-$k$-$\sH$-realizability}]
\label{def:rel}
A distribution $\sD$ over $\sX\times\sY$ is
\emph{top-$k$-$\sH$-realizable}, if there exists a hypothesis $h\in
\sH$ such that $\mathbb{P}_{(x,y)\sim \sD}\paren*{h(x, y) > h(x,
  \hh_{k + 1}(x))} = 1$.
\end{definition}
This extends the $\sH$-realizability definition from standard
(top-$1$) classification \citep{long2013consistency} to top-$k$
classification for any $k \geq 1$.
\begin{definition}
  We say that a hypothesis set \emph{$\sH$ is closed under scaling},
  if it is a cone, that is for all $h \in \sH$ and $\alpha \in
  \Rset_{+}$, $\alpha h \in \sH$.
\end{definition}
\begin{definition}
We say that a surrogate loss $\ell$ is \emph{realizable
$\sH$-consistent with respect to $\ell_k$}, if for all $k \in [1, n]$,
and for any sequence of hypotheses $\curl*{h_n}_{n \in \Nset} \subset
\sH$ and top-$k$-$\sH$-realizable distribution, $\lim_{n \to \plus
  \infty} \sE_{\ell}\paren*{h_n} - \sE^*_{\ell}\paren*{\sH} = 0$
implies $\lim_{n \to \plus \infty} \sE_{\ell_{k}}\paren*{h_n} -
\sE^*_{\ell_{k}}\paren*{\sH} = 0$.
\end{definition}
When $\sH$ is closed under scaling, for $k = 1$ and all comp-sum loss
functions $\ell = \ell_{\rm{log}}$, $\ell^{\rm{comp}}_{\exp}$,
$\ell_{\rm{gce}}$ and $\ell_{\rm{mae}}$, it can be shown that
$\sE^*_{\ell}(\sH) = \sM_{\ell}(\sH) = 0$ for any $\sH$-realizable
distribution. For example, for $\ell = \ell_{\rm{log}}$, by using the
Lebesgue dominated convergence theorem,
\begin{align*}
  \sM_{\ell_{\log}}(\sH) &\leq \sE^*_{\ell_{\log}}(\sH)
  \leq \lim_{\beta \to \plus \infty} \sE_{\ell_{\log}}(\beta h^*)\\
& = \lim_{\beta \to \plus \infty} \log \bracket[\bigg]{1 + \sum_{y' \neq y} e^{\beta \paren*{ h^*(x, y') - h^*(x, y)}}} = 0
\end{align*}
where $h^*$ satisfies $\mathbb{P}_{(x,y)\sim \sD}\paren*{h^*(x, y) > h^*(x, \hh_{2}(x))} = 1$
Therefore, Theorems~\ref{thm:bound-log}, \ref{thm:bound-exp}, \ref{thm:bound-mae} and \ref{thm:bound-gce} imply that all these loss
functions are realizable $\sH$-consistent with respect to $\ell_{0-1}$
($\ell_k$ for $k = 1$) when $\sH$ is closed under scaling.
\begin{restatable}{theorem}{RealizabilityP}
\label{thm:realizability-p}
Assume that $\sH$ is closed under scaling. Then, $\ell_{\rm{log}}$,
$\ell^{\rm{comp}}_{\exp}$, $\ell_{\rm{gce}}$ and $\ell_{\rm{mae}}$ are
realizable $\sH$-consistent with respect to $\ell_{0-1}$.
\end{restatable}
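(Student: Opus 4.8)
The plan is to combine two ingredients. First, I would show that the realizability assumption, together with closedness under scaling, forces the surrogate minimizability gaps $\sM_\ell(\sH)$ of all four comp-sum losses to vanish. Second, I would feed this into the $\sH$-consistency bounds of Theorems~\ref{thm:bound-log}--\ref{thm:bound-gce} specialized to $k = 1$: once the surrogate minimizability gap is removed, each bound reduces to a genuine consistency statement, because the relevant comparison function $\psi^{-1}$ is continuous at $0$ with $\psi^{-1}(0) = 0$.

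For the first step, fix a top-$1$-$\sH$-realizable distribution and let $h^* \in \sH$ be a witness, so that $h^*(x, y) > h^*(x, \hh_2(x))$ with probability one; in particular $h^*(x, y') - h^*(x, y) < 0$ for every $y' \neq y$ on a set of full measure. Since $\sH$ is closed under scaling, $\beta h^* \in \sH$ for all $\beta > 0$, and I would use $\sE^*_\ell(\sH) \leq \lim_{\beta \to +\infty} \sE_\ell(\beta h^*)$. For each loss the integrand $\ell(\beta h^*, x, y)$ converges pointwise to $0$ as $\beta \to +\infty$, because every term $e^{\beta (h^*(x, y') - h^*(x, y))} \to 0$ and hence $\sum_{y'} e^{\beta(h^*(x, y') - h^*(x, y))} \to 1$: this gives $\log 1 = 0$ for $\ell_{\log}$, the limit $0$ for $\ell^{\rm{comp}}_{\exp}$, $1 - 1^{-1} = 0$ for $\ell_{\rm{mae}}$, and $\frac{1}{\alpha}\paren*{1 - 1^{-\alpha}} = 0$ for $\ell_{\rm{gce}}$. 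Exactly as in the displayed $\ell_{\log}$ computation preceding the theorem, I would invoke the Lebesgue dominated convergence theorem to interchange limit and expectation, obtaining $\sE^*_\ell(\sH) = 0$. Combined with $\sM_\ell(\sH) \leq \sE^*_\ell(\sH)$ (valid since $\sC^*_\ell(\sH, \cdot) \geq 0$ for these nonnegative losses), this forces $\sM_\ell(\sH) = 0$ for every $\ell \in \curl*{\ell_{\log}, \ell^{\rm{comp}}_{\exp}, \ell_{\rm{gce}}, \ell_{\rm{mae}}}$.

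For the second step, I would specialize each corresponding $\sH$-consistency bound to $k = 1$ and substitute $\sM_\ell(\sH) = 0$. Dropping the nonnegative term $\sM_{\ell_1}(\sH)$ on the left yields, for any $h \in \sH$,
\[
\sE_{\ell_1}(h) - \sE^*_{\ell_1}(\sH) \leq \psi^{-1}\paren*{\sE_\ell(h) - \sE^*_\ell(\sH)}
\]
for $\ell \in \curl*{\ell_{\log}, \ell^{\rm{comp}}_{\exp}, \ell_{\rm{gce}}}$ with the loss-specific $\psi$, and $\sE_{\ell_1}(h) - \sE^*_{\ell_1}(\sH) \leq n\paren*{\sE_{\ell_{\rm{mae}}}(h) - \sE^*_{\ell_{\rm{mae}}}(\sH)}$ in the MAE case. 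Applying this along the sequence $h_n$ and using that each $\psi$ satisfies $\psi(0) = 0$ and is continuous and strictly increasing on $[0, 1]$, so that $\psi^{-1}$ is continuous at $0$ with $\psi^{-1}(0) = 0$, the hypothesis $\lim_n \sE_\ell(h_n) - \sE^*_\ell(\sH) = 0$ drives the right-hand side to $0$; since the left-hand side is nonnegative, $\lim_n \sE_{\ell_1}(h_n) - \sE^*_{\ell_1}(\sH) = 0$, which is exactly realizable $\sH$-consistency with respect to $\ell_{0-1}$.

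The only delicate point I anticipate is the domination needed in the first step for $\ell_{\log}$ and $\ell^{\rm{comp}}_{\exp}$, which are unbounded (the MAE and GCE losses are bounded by $1$ and $\frac{1}{\alpha}$, so dominated convergence is immediate there). For these two I would dominate the integrand for $\beta \geq 1$ by its value at $\beta = 1$, using $e^{\beta t} \leq e^{t}$ for $t < 0$, $\beta \geq 1$, together with $\log(1 + u) \leq u$, so that $\ell_{\log}(\beta h^*, \cdot) \leq \ell^{\rm{comp}}_{\exp}(\beta h^*, \cdot) \leq \ell^{\rm{comp}}_{\exp}(h^*, \cdot)$ on the realizable set; the integrability of the dominating function is handled precisely as in the logistic computation already carried out in the text.
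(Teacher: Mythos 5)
Your proposal is correct and follows essentially the same route as the paper's proof: use closure under scaling to send $\beta h^* \to \infty$ along the realizable witness, apply dominated convergence to conclude $\sM_{\ell}(\sH) \leq \sE^*_{\ell}(\sH) = 0$ for all four comp-sum losses, and then invoke Theorems~\ref{thm:bound-log}--\ref{thm:bound-gce} with $k = 1$. Your write-up is in fact somewhat more careful than the paper's, since you explicitly supply the dominating function (constant $n-1$ via $e^{\beta t} \leq e^{t}$ for $t < 0$, $\beta \geq 1$) and spell out the final limiting step through the continuity of $\psi^{-1}$ at $0$, both of which the paper leaves implicit.
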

The formal proof is presented in
Appendix~\ref{app:realizability}. However, for $ k > 1 $, since in the
realizability assumption, $h(x, y)$ is only larger than $h(x, \hh_{k +
  1}(x))$ and can be smaller than $h(x, \hh_{1}(x))$, there may exist an
$\sH$-realizable distribution $\sD$ such that $\sM_{\ell_{\log}}(\sH)
> 0$. This explains the inconsistency of the logistic loss
on top-$k$ separable data with linear predictors, when $k = 2$ and $n
> 2$, as shown in \citep{yang2020consistency}. More generally, the exact same example in \citep[Proposition~5.1]{yang2020consistency} can be used to show that all the comp-sum losses, $\ell_{\rm{log}}$,
$\ell^{\rm{comp}}_{\exp}$, $\ell_{\rm{gce}}$ and $\ell_{\rm{mae}}$ are not realizable $\sH$-consistent with
respect to $\ell_k$. Nevertheless, as previously shown, when the
hypothesis set $\sH$ adopted is sufficiently rich such that
$\sM_{\ell}(\sH) = 0$ or even $\sA_{\ell}(\sH) = 0$, they are
guaranteed to be $\sH$-consistent. This is typically the case in
practice when using deep neural networks.

\section{$\sH$-Consistency Bounds for Constrained Losses}
\label{sec:cstnd}

Constrained losses are defined as a summation of a function
$\Phi$ applied to the scores, subject to a constraint, as shown in
\citep{lee2004multicategory,awasthi2022multi}. For any $h \in \sH$ and
$ (x, y) \in \sX \times \sY$, they are expressed as
\begin{align*}
\ell^{\rm{cstnd}}(h, x, y)
= \sum_{y'\neq y} \Phi\paren*{-h(x, y')},
\end{align*}
with the constraint $\sum_{y\in \sY} h(x,y) = 0$, where $\Phi \colon
\Rset \to \Rset_{+} $ is non-increasing. When $\Phi$ is chosen as the
function $t \mapsto e^{-t}$, $t \mapsto \max \curl*{0, 1 - t}^2$, $t
\mapsto \max \curl*{0, 1 - t}$ and $t \mapsto \min\curl*{\max\curl*{0,
    1 - t/\rho}, 1}$, $\rho > 0$, $\ell^{\rm{cstnd}}(h, x, y)$ are
referred to as the constrained exponential loss
$\ell^{\rm{cstnd}}_{\rm{exp}}$, the constrained squared hinge loss
$\ell_{\rm{sq-hinge}}$, the constrained hinge loss
$\ell_{\rm{hinge}}$, and the constrained $\rho$-margin loss
$\ell_{\rho}$, respectively \citep{awasthi2022multi}. We
now study these loss functions and show that they benefit
from $\sH$-consistency bounds with respect to the top-$k$ loss. 


\subsection{Constrained exponential loss}

We first consider the constrained exponential loss, defined as
$\ell^{\rm{cstnd}}_{\rm{exp}}(h, x, y) = \sum_{y'\neq y} e^{h(x,
  y')}$. The following result provide $\sH$-consistency bounds for
$\ell^{\rm{cstnd}}_{\rm{exp}}$.
\begin{restatable}{theorem}{BoundExpCstnd}
\label{thm:bound-exp-cstnd}
Assume that $\sH$ is symmetric and complete. Then, for any $1 \leq k \leq n$, the following $\sH$-consistency bound holds for the constrained exponential loss:
\ifdim\columnwidth=\textwidth
{
\begin{align*}
\sE_{\ell_k}(h) - \sE^*_{\ell_k}(\sH) + \sM_{\ell_k}(\sH) \leq 2k \, \paren*{ \sE_{\ell^{\rm{cstnd}}_{\exp}}(h)
    - \sE^*_{\ell^{\rm{cstnd}}_{\exp}}(\sH)
    + \sM_{\ell^{\rm{cstnd}}_{\exp}}(\sH) }^{\frac{1}{2}}.
\end{align*}
}
\else{
\begin{align*}
& \sE_{\ell_k}(h) - \sE^*_{\ell_k}(\sH) + \sM_{\ell_k}(\sH)\\
  & \quad \leq 2k\, \paren*{ \sE_{\ell^{\rm{cstnd}}_{\exp}}(h)
    - \sE^*_{\ell^{\rm{cstnd}}_{\exp}}(\sH)
    + \sM_{\ell^{\rm{cstnd}}_{\exp}}(\sH) }^{\frac{1}{2}}.
\end{align*}
}
\fi
In the special case where $\sA_{\ell^{\rm{cstnd}}_{\exp}}(\sH) = 0$,
for any $1 \leq k \leq n$, the following bound holds:
\begin{align*}
  \sE_{\ell_k}(h) - \sE^*_{\ell_k}(\sH) \leq 2k\,
  \paren*{ \sE_{\ell^{\rm{cstnd}}_{\exp}}(h)
    - \sE^*_{\ell^{\rm{cstnd}}_{\exp}}(\sH)}^{\frac{1}{2}}.
\end{align*}
\end{restatable}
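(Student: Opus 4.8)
The plan is to follow the same two-stage recipe used for the comp-sum losses: first establish a pointwise (conditional) inequality relating the two conditional regrets, then lift it to an inequality on expected estimation errors. Throughout fix $x$ and abbreviate $p_y = p(x,y)$, $h_y = h(x,y)$. The first step is to rewrite the surrogate conditional error as
\begin{equation*}
\sC_{\ell^{\rm{cstnd}}_{\exp}}(h,x) = \sum_{y\in\sY} p_y \sum_{y'\neq y} e^{h_{y'}} = \sum_{y\in\sY}(1-p_y)\,e^{h_y},
\end{equation*}
and, by a Lagrange/weighted AM-GM computation under $\sum_y h_y = 0$, to identify its minimizer and closed form $\sC^*_{\ell^{\rm{cstnd}}_{\exp}}(\sH,x) = n\paren*{\prod_y(1-p_y)}^{1/n}$. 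By Lemma~\ref{lemma:regret-target} the target conditional regret is $\Delta\sC_{\ell_k,\sH}(h,x) = \sum_{i=1}^k\paren*{p_{\pp_i(x)} - p_{\hh_i(x)}}$, so it suffices to bound each of these $k$ differences.

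For a fixed $i$, the goal is to show $p_{\pp_i(x)} - p_{\hh_i(x)} \le 2\sqrt{\Delta\sC_{\ell^{\rm{cstnd}}_{\exp},\sH}(h,x)}$ (the term being trivially controlled when nonpositive). To do so I introduce a comparison hypothesis $\bar h\in\sH$ — available since $\sH$ is symmetric and complete — that coincides with $h$ except on the pair $\{\hh_i(x),\pp_i(x)\}$, where the total score $h_{\hh_i}+h_{\pp_i}$ is reallocated to the split $(s,t)$, $s+t=h_{\hh_i}+h_{\pp_i}$, minimizing $(1-p_{\hh_i})e^{s}+(1-p_{\pp_i})e^{t}$; this keeps $\sum_y\bar h_y = 0$. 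A direct computation then gives
\begin{equation*}
\sC_{\ell^{\rm{cstnd}}_{\exp}}(h,x) - \sC_{\ell^{\rm{cstnd}}_{\exp}}(\bar h,x) = \paren*{\sqrt{(1-p_{\hh_i})e^{h_{\hh_i}}} - \sqrt{(1-p_{\pp_i})e^{h_{\pp_i}}}}^2,
\end{equation*}
and since $\bar h$ is suboptimal this is at most $\Delta\sC_{\ell^{\rm{cstnd}}_{\exp},\sH}(h,x)$. The remaining task is to deduce $\tfrac14\paren*{p_{\pp_i}-p_{\hh_i}}^2 \le \Delta\sC_{\ell^{\rm{cstnd}}_{\exp},\sH}(h,x)$ from this, using $\sqrt{1-p_{\hh_i}}-\sqrt{1-p_{\pp_i}} \ge \tfrac12\paren*{p_{\pp_i}-p_{\hh_i}}$ together with control of the exponential factors $e^{h_{\hh_i}},e^{h_{\pp_i}}$.

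Summing the $k$ per-term bounds yields the pointwise inequality $\Delta\sC_{\ell_k,\sH}(h,x) \le 2k\,\sqrt{\Delta\sC_{\ell^{\rm{cstnd}}_{\exp},\sH}(h,x)}$. To conclude, I take expectations over $x$ and use that $t\mapsto 2k\sqrt t$ is concave, non-decreasing, and vanishes at $0$: Jensen's inequality gives $\E_x[\Delta\sC_{\ell_k,\sH}(h,x)] \le 2k\sqrt{\E_x[\Delta\sC_{\ell^{\rm{cstnd}}_{\exp},\sH}(h,x)]}$, and by the definition of the minimizability gap each side rewrites as $\sE_{\ell_k}(h)-\sE^*_{\ell_k}(\sH)+\sM_{\ell_k}(\sH)$ and $\sE_{\ell^{\rm{cstnd}}_{\exp}}(h)-\sE^*_{\ell^{\rm{cstnd}}_{\exp}}(\sH)+\sM_{\ell^{\rm{cstnd}}_{\exp}}(\sH)$, which is exactly the claimed bound. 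The special case follows since $\sA_{\ell^{\rm{cstnd}}_{\exp}}(\sH)=0$ forces $\sM_{\ell^{\rm{cstnd}}_{\exp}}(\sH)=0$ via $0\le\sM\le\sA$.

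The main obstacle is the magnitude control in the per-term step. The pairwise reallocation only lowers the two affected terms, so the lower bound it certifies degrades (the factor $e^{h_{\hh_i}}$ may be far below $1$) precisely when both scores in the pair are very negative; there the pairwise argument alone is insufficient and one must instead exploit the global constraint $\sum_y h_y = 0$. The point is that a very negative pair of scores forces a compensating coordinate with a large positive score, which — unless its probability is nearly $1$, in which case $p_{\pp_i}-p_{\hh_i}$ is itself tiny — makes $\sC_{\ell^{\rm{cstnd}}_{\exp}}(h,x)$, and hence $\Delta\sC_{\ell^{\rm{cstnd}}_{\exp},\sH}(h,x)$, large enough to dominate $\tfrac14(p_{\pp_i}-p_{\hh_i})^2$. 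Turning this dichotomy into a single clean inequality (or, equivalently, choosing a comparison hypothesis that already encodes it) is the technical heart of the proof; everything else is the standard regret-decomposition-plus-Jensen machinery.
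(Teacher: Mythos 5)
Your proposal follows exactly the paper's strategy: the rewriting $\sC_{\ell^{\rm{cstnd}}_{\exp}}(h,x)=\sum_{y}(1-p(x,y))e^{h(x,y)}$, the per-index decomposition of the target regret via Lemma~\ref{lemma:regret-target}, a comparison hypothesis that reallocates the total score of the pair $\curl*{\pp_i(x),\hh_i(x)}$ (this is precisely the paper's $h_{\mu,i}$ after optimizing over $\mu$; it preserves the sum constraint), the closed form for the resulting decrease, and the concluding Jensen/minimizability-gap argument with $0\le\sM\le\sA$ for the special case. All of those steps are correct (the closed form $\sC^*_{\ell^{\rm{cstnd}}_{\exp}}(\sH,x)=n\paren*{\prod_y(1-p(x,y))}^{1/n}$ is also correct, though you never use it). But the proof is incomplete at its central step. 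Writing $q_{\pp_i}=1-p(x,\pp_i(x))$ and $q_{\hh_i}=1-p(x,\hh_i(x))$, what you actually establish is
\begin{align*}
\Delta\sC_{\ell^{\rm{cstnd}}_{\exp},\sH}(h,x)\ \ge\ \paren*{\sqrt{q_{\hh_i}e^{h(x,\hh_i(x))}}-\sqrt{q_{\pp_i}e^{h(x,\pp_i(x))}}}^2,
\end{align*}
and the inequality the theorem needs, $\Delta\sC_{\ell^{\rm{cstnd}}_{\exp},\sH}(h,x)\ge\frac14\paren*{q_{\hh_i}-q_{\pp_i}}^2$, does not follow from it: as you yourself observe, the right-hand side collapses when both scores in the pair are very negative. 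You label this the ``technical heart of the proof'' and propose a dichotomy exploiting the global constraint $\sum_y h(x,y)=0$, but you never carry it out, and the sketch as stated (a large compensating score forces a large regret unless its probability is near $1$, in which case $p(x,\pp_i(x))-p(x,\hh_i(x))$ is ``tiny'') is not a proof; turning that trade-off into the single quantitative inequality above is exactly the missing work. So the proposal has a genuine gap at the one step on which the whole theorem rests.

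For comparison, the paper does not try to control the actual scores of $h$ at all. It lower-bounds the regret by an inf-sup over both the reallocation parameter and the score pair,
\begin{align*}
\Delta\sC_{\ell^{\rm{cstnd}}_{\exp},\sH}(h,x)
&\ \ge\ \inf_{h\in\sH}\,\sup_{\mu\in\Rset}\curl*{q_{\pp_i}\paren*{e^{h(x,\pp_i(x))}-e^{h(x,\hh_i(x))-\mu}}+q_{\hh_i}\paren*{e^{h(x,\hh_i(x))}-e^{h(x,\pp_i(x))+\mu}}}\\
&\ =\ \paren*{\sqrt{q_{\pp_i}}-\sqrt{q_{\hh_i}}}^2,
\end{align*}
asserting (differentiation in $\mu$, then optimization over the two scores) that the worst case is attained at equal, effectively zero, scores; the bound $\sqrt{q_{\pp_i}}+\sqrt{q_{\hh_i}}\le 2$ then yields the factor $\frac14$, and the Jensen step finishes as in your write-up. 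Note the paper's justification of that infimum evaluation is itself a one-line parenthetical; it is, in effect, the same magnitude issue you flagged, resolved by optimizing over the score pair rather than by your unexecuted dichotomy. To complete your proof you must either justify this inf-sup evaluation or carry out your global-constraint argument in full; without one of these, the per-term bound, and hence the theorem, remains unproven.
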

The proof is included in Appendix~\ref{app:bound-exp-cstnd}. The
second part follows from the fact that when
$\sA_{\ell^{\rm{cstnd}}_{\exp}}(\sH) = 0$, we have
$\sM_{\ell^{\rm{cstnd}}_{\exp}}(\sH) = 0$.  Therefore, the constrained
exponential loss is $\sH$-consistent and Bayes-consistent with respect
to $\ell_k$. If the surrogate estimation error
$\sE_{\ell^{\rm{cstnd}}_{\exp}}(h) -
\sE^*_{\ell^{\rm{cstnd}}_{\exp}}(\sH)$ is $\e$, then, the target
estimation error satisfies $\sE_{\ell_k}(h) - \sE^*_{\ell_k}(\sH) \leq
2k \sqrt{\e}$.

\subsection{Constrained squared hinge loss}

Here, we consider the constrained squared hinge loss, defined as
$\ell_{\rm{hinge}}(h, x, y) = \sum_{y'\neq y} \max \curl*{0, 1 + h(x,
  y')}^2$.  The following result shows that $\ell_{\rm{sq-hinge}}$
admits an $\sH$-consistency bound with respect to $\ell_k$.
\begin{restatable}{theorem}{BoundSqHinge}
\label{thm:bound-sq-hinge}
Assume that $\sH$ is symmetric and complete. Then, for any $1 \leq k
\leq n$, the following $\sH$-consistency bound holds for the
constrained squared hinge loss:
\ifdim\columnwidth=\textwidth
{
\begin{align*}
\sE_{\ell_k}(h) - \sE^*_{\ell_k}(\sH) + \sM_{\ell_k}(\sH) \leq 2k\, \paren*{ \sE_{\ell_{\rm{sq-hinge}}}(h) - \sE^*_{\ell_{\rm{sq-hinge}}}(\sH) + \sM_{\ell_{\rm{sq-hinge}}}(\sH) }^{\frac{1}{2}}.
\end{align*}
}
\else{
\begin{align*}
& \sE_{\ell_k}(h) - \sE^*_{\ell_k}(\sH) + \sM_{\ell_k}(\sH)\\
& \leq 2k\, \paren*{ \sE_{\ell_{\rm{sq-hinge}}}(h) - \sE^*_{\ell_{\rm{sq-hinge}}}(\sH) + \sM_{\ell_{\rm{sq-hinge}}}(\sH) }^{\frac{1}{2}}.
\end{align*}
}
\fi
In the special case where $\sA_{\ell_{\rm{sq-hinge}}}(\sH) = 0$, for any $1 \leq k \leq n$, the following bound holds:
\begin{align*}
\sE_{\ell_k}(h) - \sE^*_{\ell_k}(\sH) \leq 2k\, \paren*{ \sE_{\ell_{\rm{sq-hinge}}}(h) - \sE^*_{\ell_{\rm{sq-hinge}}}(\sH) }^{\frac{1}{2}}.
\end{align*}
\end{restatable}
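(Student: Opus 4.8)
The plan is to reduce the statement to a single pointwise bound on conditional regrets and then invoke the general $\sH$-consistency theorem of \citet{awasthi2022multi,mao2023cross}. Writing $\Delta\sC_{\ell,\sH}(h,x) = \sC_{\ell}(h,x) - \sC^*_{\ell}(\sH,x)$ for the conditional regret of a loss $\ell$, that theorem guarantees that whenever $\Delta\sC_{\ell_k,\sH}(h,x) \le \Gamma\paren*{\Delta\sC_{\ell_{\rm{sq-hinge}},\sH}(h,x)}$ holds for every $h$ and $x$, with $\Gamma$ non-decreasing, concave, and $\Gamma(0) = 0$, then, taking expectations over $x$ and applying Jensen's inequality to the concave $\Gamma$, one obtains $\sE_{\ell_k}(h) - \sE^*_{\ell_k}(\sH) + \sM_{\ell_k}(\sH) \le \Gamma\paren*{\sE_{\ell_{\rm{sq-hinge}}}(h) - \sE^*_{\ell_{\rm{sq-hinge}}}(\sH) + \sM_{\ell_{\rm{sq-hinge}}}(\sH)}$, using the identity $\E_x\bracket*{\sC^*_{\ell}(\sH,x)} = \sE^*_{\ell}(\sH) - \sM_{\ell}(\sH)$ on both sides. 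I would take $\Gamma(t) = 2k\sqrt{t}$, which is concave and vanishes at the origin, so the entire proof rests on the pointwise inequality $\Delta\sC_{\ell_k,\sH}(h,x) \le 2k\sqrt{\Delta\sC_{\ell_{\rm{sq-hinge}},\sH}(h,x)}$. The special case $\sA_{\ell_{\rm{sq-hinge}}}(\sH) = 0$ then follows at once, since it forces $\sM_{\ell_{\rm{sq-hinge}}}(\sH) = 0$ while the nonnegative term $\sM_{\ell_k}(\sH)$ on the left may simply be discarded.

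To establish the pointwise bound I would first apply Lemma~\ref{lemma:regret-target} to write $\Delta\sC_{\ell_k,\sH}(h,x) = \sum_{i=1}^k \paren*{p(x,\pp_i(x)) - p(x,\hh_i(x))}$. Setting $T = \curl*{\pp_1(x), \dots, \pp_k(x)}$ and $S = \curl*{\hh_1(x), \dots, \hh_k(x)}$, this sum equals $\sum_{a \in T\setminus S} p(x,a) - \sum_{b \in S\setminus T} p(x,b)$, a difference over the symmetric difference of the true and predicted top-$k$ sets, with $\abs*{T\setminus S} = \abs*{S\setminus T} = m \le k$. I would pair the elements of $T\setminus S$ with those of $S\setminus T$ in an arbitrary bijection; each resulting pair $(a,b)$ satisfies $p(x,a) \ge p(x,b)$, because $a$ belongs to the top-$k$ labels by probability and $b$ does not, and $h(x,b) \ge h(x,a)$, because $b$ belongs to the top-$k$ labels by score and $a$ does not. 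Thus $h$ mis-orders the more probable label $a$ below the less probable label $b$, and $\Delta\sC_{\ell_k,\sH}(h,x) = \sum_{\text{pairs}} \paren*{p(x,a) - p(x,b)}$ is a sum of at most $k$ nonnegative terms.

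The main obstacle is the two-label estimate $p(x,a) - p(x,b) \le 2\sqrt{\Delta\sC_{\ell_{\rm{sq-hinge}},\sH}(h,x)}$ for each such mis-ordered pair; granting it and summing over the $m \le k$ pairs immediately yields the pointwise bound. This is the analogue of the $k = 1$ constrained squared hinge bound, but applied to an arbitrary mis-ordered pair rather than to the global top score, so it cannot be quoted directly. To prove it I would, using the symmetry and completeness of $\sH$, compare $h$ with a hypothesis $h'$ engineered to lower $\sC_{\ell_{\rm{sq-hinge}}}(\cdot,x)$, and bound $\Delta\sC_{\ell_{\rm{sq-hinge}},\sH}(h,x) \ge \sC_{\ell_{\rm{sq-hinge}}}(h,x) - \sC_{\ell_{\rm{sq-hinge}}}(h',x)$. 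The delicate point is the piecewise-quadratic form of $t \mapsto \max\curl*{0, 1 + t}^2$: a mere swap of the scores of $a$ and $b$ produces no gain when both scores lie below the hinge threshold $-1$, so $h'$ must instead exploit the constraint $\sum_{y} h'(x,y) = 0$ to transfer score from the over-scored low-probability label $b$ to the under-scored high-probability label $a$ (and, if needed, redistribute to a third label), after which a case analysis over which scores exceed $-1$ yields the quadratic lower bound $\sC_{\ell_{\rm{sq-hinge}}}(h,x) - \sC_{\ell_{\rm{sq-hinge}}}(h',x) \ge \tfrac14 \paren*{p(x,a) - p(x,b)}^2$. This mirrors the computation carried out for $\ell^{\rm{cstnd}}_{\exp}$ in Theorem~\ref{thm:bound-exp-cstnd}, with the exponential replaced by the squared hinge. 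Combining the two-label estimate across pairs gives $\Delta\sC_{\ell_k,\sH}(h,x) \le 2m\sqrt{\Delta\sC_{\ell_{\rm{sq-hinge}},\sH}(h,x)} \le 2k\sqrt{\Delta\sC_{\ell_{\rm{sq-hinge}},\sH}(h,x)}$, and the general theorem completes the proof.
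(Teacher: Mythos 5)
Your plan has the same architecture as the paper's proof: Lemma~\ref{lemma:regret-target}, a pointwise two-label bound on conditional regrets obtained by comparing $h$ with a score-modified hypothesis, a sum over at most $k$ terms, and concavity/Jensen together with $\E_x\bracket*{\sC^*_{\ell}(\sH,x)} = \sE^*_{\ell}(\sH) - \sM_{\ell}(\sH)$; your symmetric-difference pairing is equivalent to the paper's position-wise decomposition, since both sums telescope to $\sum_{a \in T\setminus S}p(x,a) - \sum_{b\in S\setminus T}p(x,b)$. To your credit, you also put your finger on exactly the step the paper treats too quickly: the paper compares $h$ only with the swap-with-shift hypothesis $h_{\mu,i}$ and asserts that the resulting $\inf_h \sup_\mu$ is at least $\frac14\paren*{q(x,\pp_i(x))-q(x,\hh_i(x))}^2$, justified only by ``differentiating with respect to $\mu$, $h$ to optimize.'' That assertion implicitly needs one of the two scores to be pinned to the active region of the hinge; the sum-zero constraint guarantees this for $i=1$ (the top score is $\geq 0$, which is what makes the known $k=1$ argument work), but for no $i \geq 2$.

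The problem is that the gap you flagged as ``the main obstacle'' cannot be closed by any case analysis, because the two-label estimate your proof rests on, $p(x,a)-p(x,b) \leq 2\sqrt{\Delta\sC_{\ell_{\rm{sq-hinge}},\sH}(h,x)}$, is false for mis-ordered pairs not involving the top-scoring label. Take $n=3$, $k=2$, the marginal concentrated on a single $x$ with $p(x,\cdot)=(0.9,\,0.06,\,0.04)$, hence $q = (0.1,\,0.94,\,0.96)$, and $h \approx (1.4783,\, -0.7401,\, -0.7391)$, i.e., the minimizer of $\sum_y q(x,y)\max\curl*{0,1+h(x,y)}^2$ subject to $h(x,2)=h(x,3)$, with the tie broken by $\eta = 5\times 10^{-4}$ toward label $3$. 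This $h$ satisfies the constraint, has top-$2$ set $\curl*{1,3}$, so its top-$2$ conditional regret is $p(x,2)-p(x,3)=0.02$, while its squared-hinge conditional error is $\approx 0.743485$ versus the minimum $9/\paren*{\sum_y 1/q(x,y)} \approx 0.743464$, a surrogate regret of $\approx 2.1\times 10^{-5}$. Thus $2\sqrt{\Delta\sC_{\ell_{\rm{sq-hinge}},\sH}(h,x)} \approx 0.009 < 0.02$, and even $2k\sqrt{\cdot}\approx 0.018 < 0.02$. Since a point-mass marginal makes every minimizability gap vanish, this violates not only your estimate but the displayed inequality of the theorem itself, and it equally defeats the paper's inf-sup step. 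The structural reason is that the conditional minimizer has $1+h^*(x,y) \propto 1/q(x,y)$, so when one class dominates, exchanging two low-probability labels costs only $O\paren*{q(x,1)^2\paren*{p(x,a)-p(x,b)}^2}$ in surrogate regret, which is asymptotically smaller than $\paren*{p(x,a)-p(x,b)}^2$ as $q(x,1)\to 0$; no modulus of the form $c\,k\sqrt{t}$ can absorb this. So the verdict is a genuine gap, but one inherited from the target statement rather than introduced by your strategy: the step you isolated as delicate is precisely where both your proposal and the paper's own proof break down, and for $k\geq 2$ it is not repairable.
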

The proof is included in Appendix~\ref{app:bound-sq-hinge}. The second
part follows from the fact that when the hypothesis set $\sH$ is
sufficiently rich such that $\sA_{\ell_{\rm{sq-hinge}}}(\sH) = 0$, we
have $\sM_{\ell_{\rm{sq-hinge}}}(\sH) = 0$.  As with the constrained
exponential loss, the bound is square root:
$\sE_{\ell_{\rm{sq-hinge}}}(h) - \sE^*_{\ell_{\rm{sq-hinge}}}(\sH)
\leq \e \Rightarrow \sE_{\ell_k}(h) - \sE^*_{\ell_k}(\sH) \leq 2k \,
\sqrt{\e}$. This also implies that $\ell_{\rm{sq-hinge}}$ is
Bayes-consistent with respect to $\ell_k$.

\subsection{Constrained hinge loss and $\rho$-margin loss}
Similarly, in Appendix~\ref{app:bound-hinge} and \ref{app:bound-rho},  we study the constrained hinge loss and the constrained $\rho$-margin loss, respectively. Both are shown to admit a linear $\sH$-consistency bound and are Bayes-consistent with respect to $\ell_k$ (See Theorems~\ref{thm:bound-hinge} and \ref{thm:bound-rho})

\section{Cardinality-Aware Loss Functions}
\label{sec:cardinality}

The strong theoretical results of the previous sections demonstrate
that for common hypothesis sets used in practice, comp-sum losses and
constrained losses can be effectively used as surrogate losses for the
target top-$k$ loss.  Nonetheless, the algorithms seeking to minimize
these surrogate losses offer no guidance on the crucial task of
determining the optimal cardinality $k$ for top-$k$ classification
applications. This selection is essential for practical performance,
as it directly influences the number of predicted positives. 

In this section, our goal is to select a suitable top-$k$ classifier for each input instance $x$. For easier input instances, the top-$k$ set with a smaller $k$ contains the accurate label, while it may be necessary to resort to larger $k$ values for harder input instances. Choosing $k$ optimally for each instance allows us to maintain accuracy while reducing the average cardinality used.

To tackle this problem, we introduce target cardinality-aware loss
functions for top-$k$ classification through instance-dependent
cost-sensitive learning. Then, we propose two novel families of
instance-dependant cost-sensitive surrogate losses. These loss
functions are derived by augmenting the standard comp-sum losses and
constrained loss with the corresponding cost. We show the benefits of
these surrogate losses by proving that they admit $\sH$-consistency
bounds with respect to the target cardinality-aware loss
functions. Minimizing these loss functions leads to a family of new
cardinality-aware algorithms for top-$k$ classification.

\subsection{Instance-Dependent Cost-Sensitive Learning}
\label{sec:cost-learning}

Given a pre-fixed subset $\sK = \curl*{k_1, \ldots, k_m} \subset [n] $
of all possible choices for cardinality $k$, our goal is to select the
best $k$ in the sample such that the top-$k$ loss is minimized while
using a small cardinality. More precisely, let $c \colon \sX \times
\sK \times \sY$ be a instance-dependent cost function, defined as
\begin{equation}
\label{eq:cost}
\begin{aligned}
c(x, k, y) 
& = \ell_{k}(h, x, y) + \lambda \sC(k)\\
& = 1_{y \notin \curl*{\hh_1(x), \ldots, \hh_{k}(x)}} + \lambda \sC(k)
\end{aligned}    
\end{equation}
for some function $\sC \colon [n] \to \Rset_{+}$ and parameter $\lambda > 0$. Let $\sR$ be a hypothesis set of functions mapping from $\sX \times \sK$ to $\Rset$. The prediction of a cardinality selector $r \in \sR$ is defined as the cardinality corresponding to the highest score, that is $\rr(x) = \argmax_{k \in \sK} r(x, k)$. In the event of a tie for the highest score, the cardinality $\rr(x)$ is selected based on the highest index when considering the natural order of labels.

Then, our target cardinality aware loss function $\wt \ell$ can be defined as follows: for all $r \in \sR$, $x \in \sX$ and $y \in \sY$,
\begin{equation}
\label{eq:target-cardinality}
\wt \ell (r, x, y) = c(x, \rr(x), y).
\end{equation}
For example, when the function $\sC$ is chosen as $t \colon \mapsto
\log(t)$, the learner will select a cardinality selector $r \in \sR$
that selects the best $k$ among $\sK$ for each instance $x$, in terms
of balancing the top-$k$ loss with the magnitude of $\log(k)$.

Note that our work focuses on determining the optimal cardinality $k$ for top-$k$ classification, and thus the cost function defined in \eqref{eq:cost} is based on the top-$k$ sets. However, it can potentially be generalized to other settings, such as those described in \citep{denis2017confidence}, by using confidence sets and learning a model $r$ to select the optimal confidence set based on the instance.

\eqref{eq:target-cardinality} is an instance-dependent cost-sensitive
learning problem. However, directly minimizing this target loss is
intractable. In the next sections, we will propose novel surrogate
losses to address this problem. As a useful tool, we characterized the conditional regret of the target cardinality-aware loss function in Lemma~\ref{lemma:regret-target-cost}, which can be found in Appendix~\ref{app:cost}.

Without loss of generality, assume that $0 \leq c(x, k, y) \leq 1$,
which can be achieved by normalizing the cost function.

\subsection{Cost-Sensitive Comp-Sum Losses}
\label{sec:cost-comp}
 We first
introduce a new family of surrogate losses, that we called
\emph{cost-sensitive comp-sum losses}. They are defined as follows:
for all $(r, x, y) \in \sR \times \sX \times \sY$:
\begin{align*}
\wt \ell^{\rm{comp}}(r, x, y) = \sum_{k \in \sK} \paren*{1 - c(x, k, y)} \ell^{\rm{comp}}(r, x, k).
\end{align*}
For example, when $\ell^{\rm{comp}} = \ell_{\log}$, we obtain the
cost-sensitive logistic loss as follows:
\begin{align}
\label{eq:cost-log}
& \wt \ell_{\log}(r, x, y) \nonumber\\
& = \sum_{k \in \sK} \paren*{1 - c(x, k, y)} \ell_{\log}(r, x, k) \nonumber\\
& = \sum_{k \in \sK} \paren*{1 - c(x, k, y)} \log \paren*{\sum_{k' \in \sK}e^{ r(x, k') - r(x, k) }}. 
\end{align}
Similarly, we will use $\wt \ell^{\rm{comp}}_{\exp}$, $\wt \ell_{\rm{gce}}$
and $\wt \ell_{\rm{mae}}$ to denote the corresponding cost-sensitive
counterparts for the sum-exponential loss, generalized cross-entropy
loss and mean absolute error loss, respectively.  Next, we show that
these cost-sensitive surrogate loss functions benefit from
$\sR$-consistency bounds with respect to the target loss $\wt \ell$.
\begin{restatable}{theorem}{BoundCostComp}
\label{thm:bound-cost-comp}
Assume that $\sR$ is symmetric and complete. Then, the following
$\sR$-consistency bound holds for the cost-sensitive comp-sum loss:
\ifdim\columnwidth=\textwidth
{
\begin{align*}
\sE_{\wt \ell}(r) - \sE^*_{\wt \ell}(\sR) + \sM_{\wt \ell}(\sR) \leq \gamma \paren*{ \sE_{\wt \ell^{\rm{comp}}}(r)
    - \sE^*_{\wt \ell^{\rm{comp}}}(\sR) + \sM_{\wt \ell^{\rm{comp}}}(\sR) };
\end{align*}
}
\else{
\begin{align*}
& \sE_{\wt \ell}(r) - \sE^*_{\wt \ell}(\sR) + \sM_{\wt \ell}(\sR)\\
  & \quad \leq \gamma \paren*{ \sE_{\wt \ell^{\rm{comp}}}(r)
    - \sE^*_{\wt \ell^{\rm{comp}}}(\sR) + \sM_{\wt \ell^{\rm{comp}}}(\sR) };
\end{align*}
}
\fi
In the special case where $\sR = \sR_{\rm{all}}$, the following holds:
\begin{align*}
  \sE_{\wt \ell}(r) - \sE^*_{\wt \ell}(\sR_{\rm{all}})
  \leq \gamma \paren*{ \sE_{\wt \ell^{\rm{comp}}}(r)
    - \sE^*_{\wt \ell^{\rm{comp}}}(\sR_{\rm{all}})},
\end{align*}
where $\gamma(t) = 2\sqrt{t}$ when
$\wt \ell^{\mathrm{comp}}$ is either $\wt \ell_{\rm{log}}$ or
$\wt \ell_{\rm{exp}}^{\mathrm{comp}}$; $\gamma(t) = 2\sqrt{ n^{\alpha} t}$ when
$\wt \ell^{\mathrm{comp}}$ is $\wt \ell_{\rm{gce}}$; and
$\gamma(t) = nt$ when
$\wt \ell^{\mathrm{comp}}$ is $\wt \ell_{\rm{mae}}$.
\end{restatable}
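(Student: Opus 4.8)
The plan is to reduce the statement to a pointwise comparison of conditional regrets and then lift it to the expected estimation errors, mirroring the template used for the earlier comp-sum results. Write $\ov{c}(x,k) = \sum_{y \in \sY} p(x,y)\, c(x,k,y)$ for the expected cost of selecting cardinality $k$ at $x$, and set $\tau_k = 1 - \ov{c}(x,k)$, which lies in $[0,1]$ since $0 \le c \le 1$. Taking the conditional expectation of the surrogate, its conditional error factors as $\sC_{\wt\ell^{\rm{comp}}}(r,x) = \sum_{k \in \sK} \tau_k\, \ell^{\rm{comp}}(r,x,k)$; that is, it has exactly the form of a standard comp-sum conditional error over the ``label set'' $\sK$, but with the weights $\tau_k$ in place of a conditional probability vector. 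Using the characterization of the target conditional regret (Lemma~\ref{lemma:regret-target-cost}), the target conditional regret is the single difference $\Delta\sC_{\wt\ell,\sR}(r,x) = \ov{c}(x,\rr(x)) - \min_{k \in \sK}\ov{c}(x,k) = \max_{k\in\sK}\tau_k - \tau_{\rr(x)}$, the top-$1$ selection regret for the weight vector $\tau$.

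The core step is the pointwise inequality $\Delta\sC_{\wt\ell,\sR}(r,x) \le \gamma\paren*{\Delta\sC_{\wt\ell^{\rm{comp}},\sR}(r,x)}$ at each fixed $x$. Let $\hat k = \rr(x) = \argmax_{k} r(x,k)$ and $k^* = \argmax_{k}\tau_k$, so the target regret equals $\tau_{k^*} - \tau_{\hat k}$, and $r(x,\hat k)\ge r(x,k^*)$ by definition of $\hat k$. I would lower bound the surrogate conditional regret by exhibiting a competitor $\bar r \in \sR$ obtained from $r$ by reallocating the scores of the two coordinates $\hat k$ and $k^*$ (valid since $\sR$ is symmetric and complete), so that $\Delta\sC_{\wt\ell^{\rm{comp}},\sR}(r,x) \ge \sC_{\wt\ell^{\rm{comp}}}(r,x) - \sC_{\wt\ell^{\rm{comp}}}(\bar r,x)$. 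Because the remaining coordinates enter only through a permutation-invariant background term, only the terms indexed by $\hat k$ and $k^*$ change, and the difference reduces to a two-variable sub-problem in $\tau_{\hat k}, \tau_{k^*}$ and the two scores, exactly as in the two-point argument underlying the top-$k$ comp-sum proofs of Section~\ref{sec:comp}. Minimizing this expression over the feasible scores lower-bounds the surrogate regret by a binary calibration whose inverse is the stated $\gamma$: $\gamma(t)=2\sqrt t$ for $\wt\ell_{\rm{log}}$ and $\wt\ell^{\rm{comp}}_{\exp}$, $\gamma(t)=2\sqrt{\abs{\sK}^{\alpha} t}$ for $\wt\ell_{\rm{gce}}$, and the linear $\gamma(t)=\abs{\sK}\, t$ for $\wt\ell_{\rm{mae}}$, where the count $\abs{\sK}\le n$ of candidate cardinalities plays the role that $n$ does for the corresponding ordinary losses.

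The main obstacle, and the reason this is not a mere appeal to the $k=1$ case of Theorem~\ref{thm:bound-log}, is that the weights $\tau_k$ do not sum to one: $(\tau_k)_{k\in\sK}$ is an arbitrary vector in $[0,1]^{\abs{\sK}}$ rather than a probability vector, so the earlier comp-sum bounds cannot be invoked verbatim. The binary calibration must therefore be re-derived allowing unnormalized $\tau_{\hat k},\tau_{k^*}\in[0,1]$, and this loss of normalization is precisely what degrades the constant from the $\psi^{-1}(\e)\approx\sqrt{2\e}$ of the normalized analysis to $2\sqrt t$, with the analogous degradations for the other losses; I would also verify that each resulting $\gamma$ is non-decreasing, concave, and satisfies $\gamma(0)=0$. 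Granting the pointwise bound, the remaining lifting is routine: taking expectations over $x$ and applying Jensen's inequality through the concave $\gamma$ (the now-standard argument, e.g.\ \citet{awasthi2022multi}) gives $\E_x\bracket*{\Delta\sC_{\wt\ell,\sR}(r,x)} \le \gamma\paren*{\E_x\bracket*{\Delta\sC_{\wt\ell^{\rm{comp}},\sR}(r,x)}}$, and rewriting each side via the definition of the minimizability gap yields the claimed bound. The special case $\sR=\sR_{\rm{all}}$ then follows because the approximation errors, and hence the minimizability gaps $\sM_{\wt\ell}(\sR_{\rm{all}})$ and $\sM_{\wt\ell^{\rm{comp}}}(\sR_{\rm{all}})$, vanish.
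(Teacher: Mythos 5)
Your proposal follows essentially the same route as the paper's own proof: it rests on Lemma~\ref{lemma:regret-target-cost} for the target conditional regret, the same two-point perturbation of the coordinates $\rr(x)$ and $k_{\min}(x)$ (the paper parametrizes this through the softmax values $\sS_{\mu}$ rather than the raw scores, which is an equivalent reparametrization since the perturbation preserves the sum of exponentiated scores), the same key observation that the unnormalized weights $\ov q(x,k) \in [0,1]$ are what degrade the calibration constants relative to Section~\ref{sec:comp}, and the same Jensen/minimizability-gap lifting and vanishing-gap argument for $\sR = \sR_{\rm{all}}$. The only substantive deviation is that your constants for $\wt \ell_{\rm{gce}}$ and $\wt \ell_{\rm{mae}}$ involve $\abs{\sK}$ where the paper uses $n$; since $\abs{\sK} \leq n$, this is a slight tightening that still implies the stated bounds.
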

The proof is included in Appendix~\ref{app:bound-cost-comp}. The
second part follows from the fact that when $\sR = \sR_{\rm{all}}$,
all the minimizability gaps vanish. In particular,
Theorem~\ref{thm:bound-cost-comp} implies the Bayes-consistency of
cost-sensitive comp-sum losses. The bounds for cost-sensitive
generalized cross-entropy and mean absolute error loss depend on the
number of classes, making them less favorable when $n$ is large. As pointed out earlier, while the cost-sensitive mean absolute error
loss admits a linear rate, it is difficult to optimize even in the
standard classification, as reported by \citet{zhang2018generalized}
and \citet{mao2023cross}. 

In the proof, we represented the comp-sum loss as a function of the softmax and introduced a softmax-dependent function $\sS_{\mu}$ to upper bound the conditional regret of the target cardinality-aware loss function by that of the cost-sensitive comp-sum loss. This technique is novel and differs from the approach used in the standard scenario (Section~\ref{sec:comp}).

\subsection{Cost-Sensitive Constrained Losses}
\label{sec:cost-cstnd}

Motivated by the formulation of constrained loss functions in the
standard multi-class classification, we introduce a new family of
surrogate losses, termed \emph{cost-sensitive constrained
losses}, which are defined, for all $(r, x, y) \in \sR
\times \sX \times \sY$, by
\begin{align*}
  \wt \ell^{\rm{cstnd}}(r, x, y)
  = \sum_{k \in \sK} c(x, k, y) \Phi\paren*{-r(x, k)},
\end{align*}
with the constraint that $\sum_{y\in \sY} r(x,y) = 0$, where $\Phi
\colon \Rset \to \Rset_{+} $ is non-increasing.  For example, when
$\Phi(t) = e^{-t}$, we obtain the cost-sensitive constrained
exponential loss as follows:
\begin{align*}
  \wt \ell^{\rm{cstnd}}_{\exp}(r, x, y)
  = \sum_{k \in \sK} c(x, k, y) e^{r(x, k)},
\end{align*}
with the constraint that $\sum_{y\in \sY} r(x, y) = 0$. Similarly, we
will use $\wt \ell_{\rm{sq-hinge}}$, $\wt \ell_{\rm{hinge}}$ and $\wt
\ell_{\rho}$ to denote the corresponding cost-sensitive counterparts
for the constrained squared hinge loss, constrained hinge loss and
constrained $\rho$-margin loss, respectively.  Next, we show that
these cost-sensitive surrogate loss functions benefit from
$\sR$-consistency bounds with respect to the target loss $\wt \ell$.
\begin{restatable}{theorem}{BoundCostCstnd}
\label{thm:bound-cost-cstnd}
Assume that $\sR$ is symmetric and complete. Then, the following
$\sR$-consistency bound holds for the cost-sensitive constrained loss:
\ifdim\columnwidth=\textwidth
{
\begin{align*}
\sE_{\wt \ell}(r) - \sE^*_{\wt \ell}(\sR) + \sM_{\wt \ell}(\sR) \leq \gamma \paren*{ \sE_{\wt \ell^{\rm{cstnd}}}(r) - \sE^*_{\wt \ell^{\rm{cstnd}}}(\sR) + \sM_{\wt \ell^{\rm{cstnd}}}(\sR) };
\end{align*}
}
\else{
\begin{align*}
& \sE_{\wt \ell}(r) - \sE^*_{\wt \ell}(\sR) + \sM_{\wt \ell}(\sR)\\
& \quad \leq \gamma \paren*{ \sE_{\wt \ell^{\rm{cstnd}}}(r) - \sE^*_{\wt \ell^{\rm{cstnd}}}(\sR) + \sM_{\wt \ell^{\rm{cstnd}}}(\sR) };
\end{align*}
}
\fi
In the special case where $\sR = \sR_{\rm{all}}$, the following holds:
\begin{align*}
\sE_{\wt \ell}(r) - \sE^*_{\wt \ell}(\sR_{\rm{all}}) \leq \gamma \paren*{ \sE_{\wt \ell^{\rm{cstnd}}}(r) - \sE^*_{\wt \ell^{\rm{cstnd}}}(\sR_{\rm{all}})},
\end{align*}
where $\gamma(t) = 2\sqrt{t}$ when
$\wt \ell^{\mathrm{cstnd}}$ is either $\wt \ell^{\mathrm{cstnd}}_{\rm{exp}}$ or
$\wt \ell_{\rm{sq-hinge}}$; $\gamma(t) = t$ when
$\wt \ell^{\mathrm{cstnd}}$ is either $\wt \ell_{\rm{hinge}}$ or $\wt \ell_{\rho}$.
\end{restatable}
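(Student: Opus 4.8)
The plan is to reduce the claimed global bound to a single pointwise inequality between conditional regrets and then invoke the standard minimizability-gap machinery. Writing $\Delta\sC_{\wt\ell,\sR}(r,x)$ and $\Delta\sC_{\wt\ell^{\rm{cstnd}},\sR}(r,x)$ for the conditional regrets of the target and surrogate losses, the identity $\sE_{\ell}(h)-\sE^*_{\ell}(\sH)+\sM_{\ell}(\sH)=\E_x\bracket*{\Delta\sC_{\ell,\sH}(h,x)}$, which is immediate from the definition of $\sM_\ell(\sH)$, shows that the two sides of the theorem are exactly $\E_x\bracket*{\Delta\sC_{\wt\ell,\sR}(r,x)}$ and $\gamma\paren*{\E_x\bracket*{\Delta\sC_{\wt\ell^{\rm{cstnd}},\sR}(r,x)}}$. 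Hence it suffices to establish the pointwise bound $\Delta\sC_{\wt\ell,\sR}(r,x)\le\gamma\paren*{\Delta\sC_{\wt\ell^{\rm{cstnd}},\sR}(r,x)}$ for every $r$ and $x$: since $\gamma(t)=2\sqrt t$ and $\gamma(t)=t$ are both concave and non-decreasing, Jensen's inequality then yields the stated global bound, and the special case $\sR=\sR_{\rm{all}}$ follows because all minimizability gaps vanish there.

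Next I would compute both conditional regrets. Set $\ov c(x,k)=\sum_{y\in\sY}p(x,y)\,c(x,k,y)\in[0,1]$, the expected cost of selecting cardinality $k$ at $x$. The conditional error of the surrogate is $\sC_{\wt\ell^{\rm{cstnd}}}(r,x)=\sum_{k\in\sK}\ov c(x,k)\,\Phi\paren*{-r(x,k)}$ subject to $\sum_{k\in\sK}r(x,k)=0$, while Lemma~\ref{lemma:regret-target-cost} gives $\Delta\sC_{\wt\ell,\sR}(r,x)=\ov c(x,\rr(x))-\min_{k\in\sK}\ov c(x,k)$. Writing $\widehat k=\rr(x)=\argmax_{k}r(x,k)$ and $k^*\in\argmin_k\ov c(x,k)$, with $a=\ov c(x,k^*)\le b=\ov c(x,\widehat k)$, the target regret is just $b-a$, so the goal becomes a lower bound on the surrogate conditional regret in terms of $b-a$.

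The core step is a two-coordinate reduction. To upper bound the best-in-class surrogate conditional error (equivalently, to lower bound the surrogate regret), I would compete against $r$ using a feasible $r'$ that differs from $r$ only on the coordinates $\widehat k$ and $k^*$, with $r'(x,\widehat k)+r'(x,k^*)=r(x,\widehat k)+r(x,k^*)$ so that the zero-sum constraint is preserved; completeness of $\sR$ guarantees such an $r'$ exists. All other coordinates cancel, and the surrogate regret is at least $b\,\Phi(-r(x,\widehat k))+a\,\Phi(-r(x,k^*))-\min_{u+v=s}\bracket*{b\,\Phi(-u)+a\,\Phi(-v)}$, a purely binary problem in the two weights $a,b$. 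Here I would exploit the two structural facts coming from the argmax prediction and the zero-sum constraint, namely $r(x,\widehat k)\ge r(x,k^*)$ and $r(x,\widehat k)\ge 0$, together with the normalization $a,b\in[0,1]$ which gives $\sqrt a+\sqrt b\le 2$. For the exponential and squared-hinge choices this binary optimization yields a quadratic lower bound of the form $\paren*{\sqrt b-\sqrt a}^2\ge\tfrac14(b-a)^2$, inverting to $\gamma(t)=2\sqrt t$; for the hinge and $\rho$-margin choices the kink in $\Phi$ produces a linear lower bound $b-a$, giving $\gamma(t)=t$.

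The main obstacle is the per-$\Phi$ analysis of this two-variable minimization. Because the cost weights $\ov c(x,k)$ are not normalized to a fixed total (unlike the $1-p(x,y)$ weights in the standard constrained-loss analysis), the argument cannot simply read off a closed-form best-in-class error; instead it must combine the convexity of $u\mapsto b\,\Phi(-u)+a\,\Phi(-s+u)$ with the one-sided constraint $r(x,\widehat k)\ge\max\curl*{0, s/2}$ and a case split on the sign of $s=r(x,\widehat k)+r(x,k^*)$. The delicate point is that when $s$ is very negative the bound obtained from $r(x,\widehat k)\ge s/2$ alone degrades, and one must additionally use $r(x,\widehat k)\ge 0$ to recover the uniform estimate $\paren*{\sqrt b-\sqrt a}^2$; the normalization $a,b\in[0,1]$ is then exactly what closes the gap to $\tfrac14(b-a)^2$. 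Finally I would record that each resulting $\gamma$ is concave and non-decreasing on $[0,1]$, which is what the Jensen step in the first paragraph requires.
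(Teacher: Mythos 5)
Your proposal is correct and follows essentially the same route as the paper's proof: the same reduction of the global bound to a pointwise conditional-regret inequality via the minimizability-gap identity and Jensen's inequality, the same use of Lemma~\ref{lemma:regret-target-cost}, the same two-coordinate perturbation preserving the zero-sum constraint (the paper's $r_{\mu}$ family), and the same per-$\Phi$ binary optimization yielding a quadratic bound of order $\frac{1}{4}\paren*{\wt q(x,\rr(x)) - \wt q(x,k_{\min}(x))}^2$ for the exponential and squared-hinge cases and a linear bound for the hinge and $\rho$-margin cases. Your explicit treatment of the constraints $r(x,\rr(x)) \geq r(x,k_{\min}(x))$ and $r(x,\rr(x)) \geq 0$ merely spells out what the paper compresses into ``differentiating with respect to $\mu$, $r$ to optimize.''
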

The proof is included in Appendix~\ref{app:bound-cost-cstnd}. The
second part follows from the fact that when $\sR = \sR_{\rm{all}}$,
all the minimizability gaps vanish.  In particular,
Theorem~\ref{thm:bound-cost-cstnd} implies the Bayes-consistency of
cost-sensitive constrained losses. Note that while the constrained
hinge loss and $\rho$-margin loss have a more favorable linear rate in
the bound, their optimization may be more challenging compared to
other smooth loss functions.

\begin{figure}[t]
\vskip -.1in
\begin{center}
\begin{tabular}{@{}cc@{}}
\includegraphics[scale=0.45]{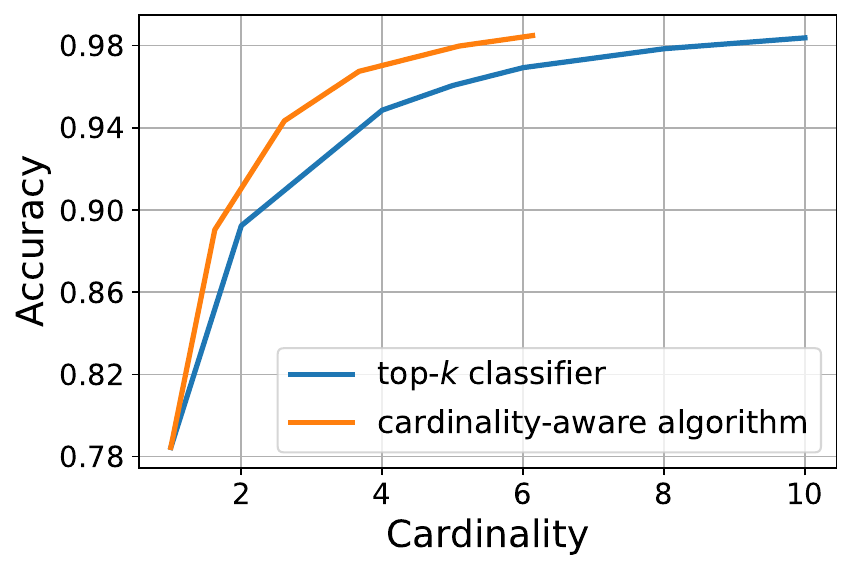}& 
\includegraphics[scale=0.45]{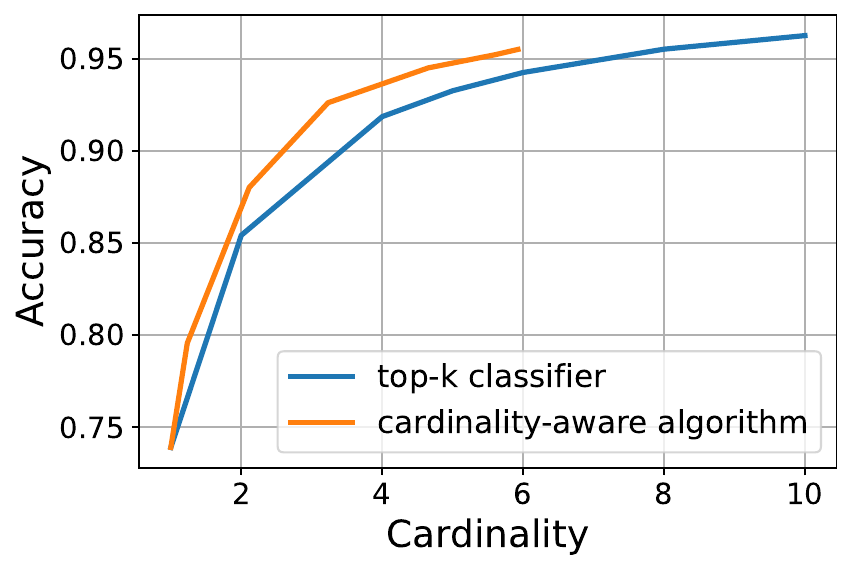}\\[-0.15cm]
{\small CIFAR-100} & {\small ImageNet} \\
\includegraphics[scale=0.45]{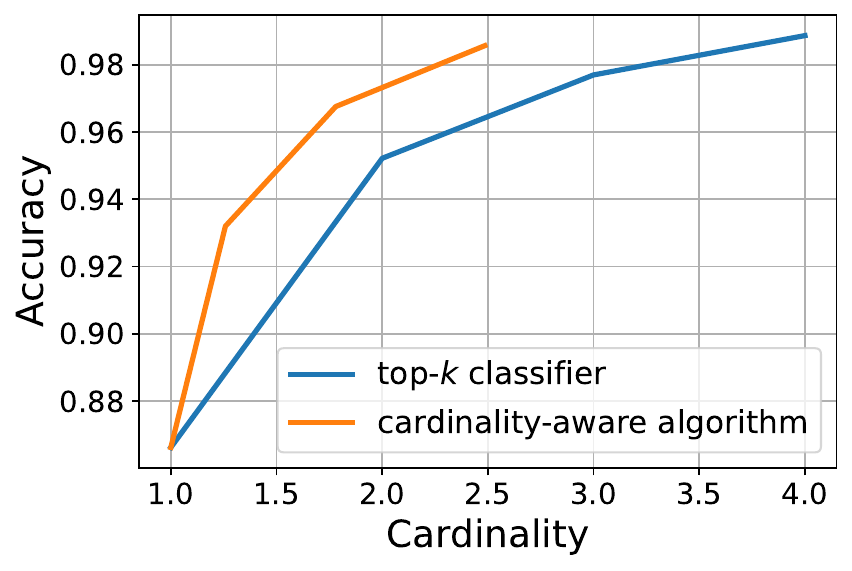}&
\includegraphics[scale=0.45]{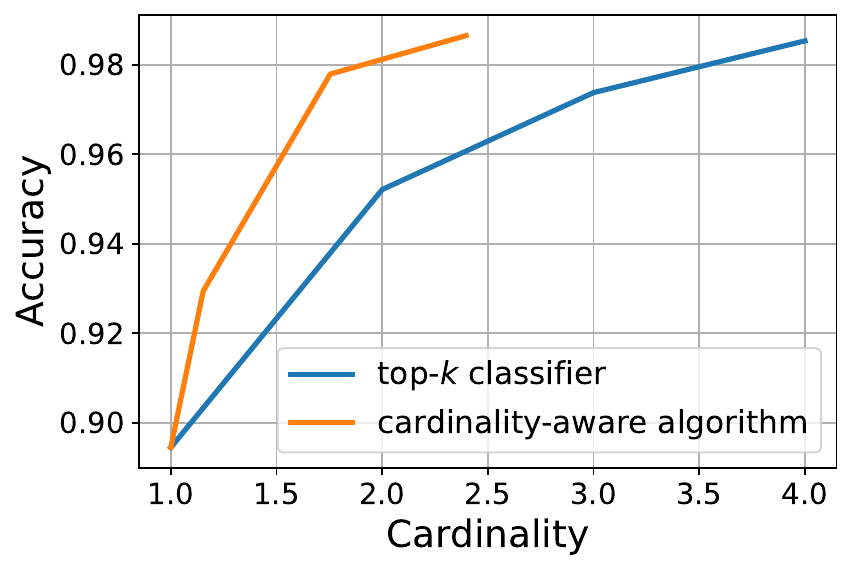}\\[-0.15cm]
{\small CIFAR-10} & {\small SVHN} 
\end{tabular}
\vskip -0.1in
\caption{Accuracy versus cardinality on various datasets.}
\label{fig:topk}
\end{center}
\vskip -0.35in
\end{figure} 

\section{Experiments}
\label{sec:experiments}
Here, we report empirical results for our cardinality-aware
algorithm and show that it consistently outperforms 
top-$k$ classifiers on benchmark datasets CIFAR-10, CIFAR-100
\citep{Krizhevsky09learningmultiple}, SVHN \citep{Netzer2011} and
ImageNet \citep{deng2009imagenet}.

We adopted a linear model for the base model $h$ to classify the
extracted features from the datasets.  We used the outputs of the
second-to-last layer of ResNet \citep{he2016deep} as features for the
CIFAR-10, CIFAR-100 and SVHN datasets. For the ImageNet dataset, we
used the CLIP \citep{radford2021learning} model to extract
features. We used a two-hidden-layer feedforward neural network
with ReLU activation functions \citep{nair2010rectified} for the
cardinality selector $r$. Both the base model $h$ and the cardinality
selector $r$ were trained using the Adam optimizer
\citep{kingma2014adam}, with a learning rate of $1\times 10^{-3}$, a
batch size of $128$, and a weight decay of $1\times 10^{-5}$.

Figure~\ref{fig:topk} compares the accuracy versus cardinality curve
of the cardinality-aware algorithm with that of top-$k$ classifiers.
The accuracy of a top-$k$ classifier is measured by $\E_{(x, y) \sim
  S}\bracket*{1 - \ell_{k}(h, x, y)}$, that is the fraction of the
sample in which the top-$k$ predictions include the true label. It
naturally grows as the cardinality $k$ increases, as shown in
Figure~\ref{fig:topk}.
The accuracy of the carnality-aware algorithms is measured by $\E_{(x,
  y) \sim S} \bracket*{1 - \ell_{\rr(x)}(h, x, y)}$, that is the fraction
of the sample in which the predictions selected by the model $r$
include the true label, and the corresponding cardinality is measured
by $\E_{(x, y) \sim S} \bracket*{\rr(x)}$, that is the average size of
the selected predictions.
The cardinality selector $r$ was trained by minimizing the
cost-sensitive logistic loss $\wt \ell_{\log}$
(Eq.~\eqref{eq:cost-log}) with the cost $c(x, k, y)$ defined as
$\ell_{k}(h, x, y) + \lambda \sC(k)$, where $\lambda = 0.05$ and $\sC(k) =
\log(k)$. We began with a set $\sK = \curl*{1}$ for the loss function
and then progressively expanded it by adding choices of larger
cardinality, each of which doubles the largest value currently in
$\sK$. In Figure~\ref{fig:topk}, the largest set $\sK$ for the
CIFAR-100 and ImageNet datasets is $\curl*{1, 2, 4, 8, 16, 32, 64}$,
whereas for the CIFAR-10 and SVHN datasets, it is $\curl*{1, 2, 4,
  8}$. As the set $\sK$ expands, there is an increase in both the
average cardinality and the accuracy.

Figure~\ref{fig:topk} shows that the cardinality-aware algorithm is
superior across the CIFAR-100, ImageNet, CIFAR-10 and SVHN
datasets. For a given cardinality $k$, the cardinality-aware algorithm
always achieves higher accuracy than a top-$k$ classifier. In other
words, to achieve the same level of accuracy, the predictions made by
the cardinality-aware algorithm can be significantly smaller in size
compared to those made by the corresponding top-$k$ classifier.
In particular, on the CIFAR-100, CIFAR-10 and SVHN datasets, the
cardinality-aware algorithm achieves the same accuracy (98\%) as the
top-$k$ classifier while using roughly only half of the
cardinality. As with the ImageNet dataset, it achieves the same
accuracy (95\%) as the top-$k$ classifier with only two-thirds of the
cardinality. This illustrates the effectiveness of our
cardinality-aware algorithm.

\section{Conclusion}

We gave a series of results demonstrating that several common
surrogate loss functions, including comp-sum losses and constrained
losses in standard classification, benefit from $\sH$-consistency
bounds with respect to the top-$k$ loss. These findings establish a
theoretical and algorithmic foundation for top-$k$ classification with
a fixed cardinality $k$.
We further introduced a cardinality-aware framework for top-$k$
classification through cost-sensitive learning, for which we proposed
cost-sensitive comp-sum losses and constrained losses that benefit
from $\sH$-consistency guarantees within this framework.
This leads to principled and practical cardinality-aware algorithms
for top-$k$ classification, which we showed empirically to be very
effective. Our analysis and algorithms are likely to be applicable
to other similar scenarios.

\bibliography{topk}

\begin{thebibliography}{48}
\providecommand{\natexlab}[1]{#1}
\providecommand{\url}[1]{\texttt{#1}}
\expandafter\ifx\csname urlstyle\endcsname\relax
  \providecommand{\doi}[1]{doi: #1}\else
  \providecommand{\doi}{doi: \begingroup \urlstyle{rm}\Url}\fi

\bibitem[Awasthi et~al.(2022{\natexlab{a}})Awasthi, Mao, Mohri, and
  Zhong]{awasthi2022h}
Pranjal Awasthi, Anqi Mao, Mehryar Mohri, and Yutao Zhong.
\newblock {$H$}-consistency bounds for surrogate loss minimizers.
\newblock In \emph{International Conference on Machine Learning}, pages
  1117--1174, 2022{\natexlab{a}}.

\bibitem[Awasthi et~al.(2022{\natexlab{b}})Awasthi, Mao, Mohri, and
  Zhong]{awasthi2022multi}
Pranjal Awasthi, Anqi Mao, Mehryar Mohri, and Yutao Zhong.
\newblock Multi-class {$ H $}-consistency bounds.
\newblock In \emph{Advances in neural information processing systems}, pages
  782--795, 2022{\natexlab{b}}.

\bibitem[Bartlett and Wegkamp(2008)]{bartlett2008classification}
Peter~L Bartlett and Marten~H Wegkamp.
\newblock Classification with a reject option using a hinge loss.
\newblock \emph{Journal of Machine Learning Research}, 9\penalty0 (8), 2008.

\bibitem[Bartlett et~al.(2006)Bartlett, Jordan, and
  McAuliffe]{bartlett2006convexity}
Peter~L. Bartlett, Michael~I. Jordan, and Jon~D. McAuliffe.
\newblock Convexity, classification, and risk bounds.
\newblock \emph{Journal of the American Statistical Association}, 101\penalty0
  (473):\penalty0 138--156, 2006.

\bibitem[Berkson(1944)]{Berkson1944}
Joseph Berkson.
\newblock Application of the logistic function to bio-assay.
\newblock \emph{Journal of the American Statistical Association}, 39:\penalty0
  357–--365, 1944.

\bibitem[Berkson(1951)]{Berkson1951}
Joseph Berkson.
\newblock Why {I} prefer logits to probits.
\newblock \emph{Biometrics}, 7\penalty0 (4):\penalty0 327–--339, 1951.

\bibitem[Berrada et~al.(2018)Berrada, Zisserman, and Kumar]{berrada2018smooth}
Leonard Berrada, Andrew Zisserman, and M~Pawan Kumar.
\newblock Smooth loss functions for deep top-k classification.
\newblock In \emph{International Conference on Learning Representations}, 2018.

\bibitem[Crammer and Singer(2001)]{crammer2001algorithmic}
Koby Crammer and Yoram Singer.
\newblock On the algorithmic implementation of multiclass kernel-based vector
  machines.
\newblock \emph{Journal of machine learning research}, 2\penalty0
  (Dec):\penalty0 265--292, 2001.

\bibitem[Deng et~al.(2009)Deng, Dong, Socher, Li, Li, and Li]{deng2009imagenet}
Jia Deng, Wei Dong, Richard Socher, Li-Jia Li, Kai Li, and Fei-Fei Li.
\newblock Imagenet: A large-scale hierarchical image database.
\newblock In \emph{2009 IEEE conference on computer vision and pattern
  recognition}, pages 248--255. Ieee, 2009.

\bibitem[Denis and Hebiri(2017)]{denis2017confidence}
Christophe Denis and Mohamed Hebiri.
\newblock Confidence sets with expected sizes for multiclass classification.
\newblock \emph{Journal of Machine Learning Research}, 18\penalty0
  (102):\penalty0 1--28, 2017.

\bibitem[Ghosh et~al.(2017)Ghosh, Kumar, and Sastry]{ghosh2017robust}
Aritra Ghosh, Himanshu Kumar, and P~Shanti Sastry.
\newblock Robust loss functions under label noise for deep neural networks.
\newblock In \emph{Proceedings of the AAAI conference on artificial
  intelligence}, 2017.

\bibitem[He et~al.(2016)He, Zhang, Ren, and Sun]{he2016deep}
Kaiming He, Xiangyu Zhang, Shaoqing Ren, and Jian Sun.
\newblock Deep residual learning for image recognition.
\newblock In \emph{Proceedings of the IEEE conference on computer vision and
  pattern recognition}, pages 770--778, 2016.

\bibitem[Kingma and Ba(2014)]{kingma2014adam}
Diederik~P Kingma and Jimmy Ba.
\newblock Adam: A method for stochastic optimization.
\newblock \emph{arXiv preprint arXiv:1412.6980}, 2014.

\bibitem[Krizhevsky(2009)]{Krizhevsky09learningmultiple}
Alex Krizhevsky.
\newblock Learning multiple layers of features from tiny images.
\newblock Technical report, Toronto University, 2009.

\bibitem[Kuznetsov et~al.(2014)Kuznetsov, Mohri, and
  Syed]{KuznetsovMohriSyed2014}
Vitaly Kuznetsov, Mehryar Mohri, and Umar Syed.
\newblock Multi-class deep boosting.
\newblock In \emph{Advances in Neural Information Processing Systems}, pages
  2501--2509, 2014.

\bibitem[Lapin et~al.(2015)Lapin, Hein, and Schiele]{lapin2015top}
Maksim Lapin, Matthias Hein, and Bernt Schiele.
\newblock Top-k multiclass svm.
\newblock In \emph{Advances in neural information processing systems}, 2015.

\bibitem[Lapin et~al.(2016)Lapin, Hein, and Schiele]{lapin2016loss}
Maksim Lapin, Matthias Hein, and Bernt Schiele.
\newblock Loss functions for top-k error: Analysis and insights.
\newblock In \emph{Proceedings of the IEEE conference on computer vision and
  pattern recognition}, pages 1468--1477, 2016.

\bibitem[Lapin et~al.(2018)Lapin, Hein, and Schiele]{lapin2018analysis}
Maksim Lapin, Matthias Hein, and Bernt Schiele.
\newblock Analysis and optimization of loss functions for multiclass, top-k,
  and multilabel classification.
\newblock \emph{IEEE Transactions on Pattern Analysis \& Machine Intelligence},
  40\penalty0 (07):\penalty0 1533--1554, 2018.

\bibitem[Lee et~al.(2004)Lee, Lin, and Wahba]{lee2004multicategory}
Yoonkyung Lee, Yi~Lin, and Grace Wahba.
\newblock Multicategory support vector machines: Theory and application to the
  classification of microarray data and satellite radiance data.
\newblock \emph{Journal of the American Statistical Association}, 99\penalty0
  (465):\penalty0 67--81, 2004.

\bibitem[Long and Servedio(2013)]{long2013consistency}
Phil Long and Rocco Servedio.
\newblock Consistency versus realizable {H}-consistency for multiclass
  classification.
\newblock In \emph{International Conference on Machine Learning}, pages
  801--809, 2013.

\bibitem[Mao et~al.(2023{\natexlab{a}})Mao, Mohri, Mohri, and
  Zhong]{MaoMohriMohriZhong2023twostage}
Anqi Mao, Christopher Mohri, Mehryar Mohri, and Yutao Zhong.
\newblock Two-stage learning to defer with multiple experts.
\newblock In \emph{Advances in neural information processing systems},
  2023{\natexlab{a}}.

\bibitem[Mao et~al.(2023{\natexlab{b}})Mao, Mohri, and
  Zhong]{MaoMohriZhong2023characterization}
Anqi Mao, Mehryar Mohri, and Yutao Zhong.
\newblock {H}-consistency bounds: Characterization and extensions.
\newblock In \emph{Advances in Neural Information Processing Systems},
  2023{\natexlab{b}}.

\bibitem[Mao et~al.(2023{\natexlab{c}})Mao, Mohri, and
  Zhong]{MaoMohriZhong2023ranking}
Anqi Mao, Mehryar Mohri, and Yutao Zhong.
\newblock {H}-consistency bounds for pairwise misranking loss surrogates.
\newblock In \emph{International conference on Machine learning},
  2023{\natexlab{c}}.

\bibitem[Mao et~al.(2023{\natexlab{d}})Mao, Mohri, and
  Zhong]{MaoMohriZhong2023rankingabs}
Anqi Mao, Mehryar Mohri, and Yutao Zhong.
\newblock Ranking with abstention.
\newblock In \emph{ICML 2023 Workshop The Many Facets of Preference-Based
  Learning}, 2023{\natexlab{d}}.

\bibitem[Mao et~al.(2023{\natexlab{e}})Mao, Mohri, and
  Zhong]{MaoMohriZhong2023structured}
Anqi Mao, Mehryar Mohri, and Yutao Zhong.
\newblock Structured prediction with stronger consistency guarantees.
\newblock In \emph{Advances in Neural Information Processing Systems},
  2023{\natexlab{e}}.

\bibitem[Mao et~al.(2023{\natexlab{f}})Mao, Mohri, and Zhong]{mao2023cross}
Anqi Mao, Mehryar Mohri, and Yutao Zhong.
\newblock Cross-entropy loss functions: Theoretical analysis and applications.
\newblock In \emph{International Conference on Machine Learning},
  2023{\natexlab{f}}.

\bibitem[Mao et~al.(2024{\natexlab{a}})Mao, Mohri, and
  Zhong]{MaoMohriZhong2024deferral}
Anqi Mao, Mehryar Mohri, and Yutao Zhong.
\newblock Principled approaches for learning to defer with multiple experts.
\newblock In \emph{International Symposium on Artificial Intelligence and
  Mathematics}, 2024{\natexlab{a}}.

\bibitem[Mao et~al.(2024{\natexlab{b}})Mao, Mohri, and
  Zhong]{MaoMohriZhong2024predictor}
Anqi Mao, Mehryar Mohri, and Yutao Zhong.
\newblock Predictor-rejector multi-class abstention: Theoretical analysis and
  algorithms.
\newblock In \emph{Algorithmic Learning Theory}, 2024{\natexlab{b}}.

\bibitem[Mao et~al.(2024{\natexlab{c}})Mao, Mohri, and
  Zhong]{MaoMohriZhong2024score}
Anqi Mao, Mehryar Mohri, and Yutao Zhong.
\newblock Theoretically grounded loss functions and algorithms for score-based
  multi-class abstention.
\newblock In \emph{International Conference on Artificial Intelligence and
  Statistics}, 2024{\natexlab{c}}.

\bibitem[Mohri et~al.(2024)Mohri, Andor, Choi, Collins, Mao, and
  Zhong]{MohriAndorChoiCollinsMaoZhong2024learning}
Christopher Mohri, Daniel Andor, Eunsol Choi, Michael Collins, Anqi Mao, and
  Yutao Zhong.
\newblock Learning to reject with a fixed predictor: Application to
  decontextualization.
\newblock In \emph{International Conference on Learning Representations}, 2024.

\bibitem[Mohri et~al.(2018)Mohri, Rostamizadeh, and
  Talwalkar]{mohri2018foundations}
Mehryar Mohri, Afshin Rostamizadeh, and Ameet Talwalkar.
\newblock \emph{Foundations of machine learning}.
\newblock MIT press, 2018.

\bibitem[Mukherjee and Schapire(2013)]{mukherjee2013theory}
Indraneel Mukherjee and Robert~E Schapire.
\newblock A theory of multiclass boosting.
\newblock \emph{Journal of Machine Learning Research}, 2013.

\bibitem[Nair and Hinton(2010)]{nair2010rectified}
Vinod Nair and Geoffrey~E Hinton.
\newblock Rectified linear units improve restricted boltzmann machines.
\newblock In \emph{Proceedings of the 27th international conference on machine
  learning (ICML-10)}, pages 807--814, 2010.

\bibitem[Netzer et~al.(2011)Netzer, Wang, Coates, Bissacco, Wu, and
  Ng]{Netzer2011}
Yuval Netzer, Tao Wang, Adam Coates, Alessandro Bissacco, Bo~Wu, and Andrew~Y
  Ng.
\newblock Reading digits in natural images with unsupervised feature learning.
\newblock In \emph{Advances in Neural Information Processing Systems}, 2011.

\bibitem[Radford et~al.(2021)Radford, Kim, Hallacy, Ramesh, Goh, Agarwal,
  Sastry, Askell, Mishkin, Clark, et~al.]{radford2021learning}
Alec Radford, Jong~Wook Kim, Chris Hallacy, Aditya Ramesh, Gabriel Goh,
  Sandhini Agarwal, Girish Sastry, Amanda Askell, Pamela Mishkin, Jack Clark,
  et~al.
\newblock Learning transferable visual models from natural language
  supervision.
\newblock In \emph{International conference on machine learning}, pages
  8748--8763. PMLR, 2021.

\bibitem[Reddi et~al.(2019)Reddi, Kale, Yu, Holtmann-Rice, Chen, and
  Kumar]{reddi2019stochastic}
Sashank~J Reddi, Satyen Kale, Felix Yu, Daniel Holtmann-Rice, Jiecao Chen, and
  Sanjiv Kumar.
\newblock Stochastic negative mining for learning with large output spaces.
\newblock In \emph{The 22nd International Conference on Artificial Intelligence
  and Statistics}, pages 1940--1949, 2019.

\bibitem[Saberian and Vasconcelos(2011)]{saberian2011multiclass}
Mohammad Saberian and Nuno Vasconcelos.
\newblock Multiclass boosting: Theory and algorithms.
\newblock \emph{Advances in neural information processing systems}, 24, 2011.

\bibitem[Steinwart(2007)]{steinwart2007compare}
Ingo Steinwart.
\newblock How to compare different loss functions and their risks.
\newblock \emph{Constructive Approximation}, 26\penalty0 (2):\penalty0
  225--287, 2007.

\bibitem[Thilagar et~al.(2022)Thilagar, Frongillo, Finocchiaro, and
  Goodwill]{thilagar2022consistent}
Anish Thilagar, Rafael Frongillo, Jessica~J Finocchiaro, and Emma Goodwill.
\newblock Consistent polyhedral surrogates for top-k classification and
  variants.
\newblock In \emph{International Conference on Machine Learning}, pages
  21329--21359, 2022.

\bibitem[Usunier et~al.(2009)Usunier, Buffoni, and
  Gallinari]{usunier2009ranking}
Nicolas Usunier, David Buffoni, and Patrick Gallinari.
\newblock Ranking with ordered weighted pairwise classification.
\newblock In \emph{International conference on machine learning}, pages
  1057--1064, 2009.

\bibitem[Verhulst(1838)]{Verhulst1838}
Pierre~François Verhulst.
\newblock Notice sur la loi que la population suit dans son accroissement.
\newblock \emph{Correspondance math\'ematique et physique}, 10:\penalty0
  113–--121, 1838.

\bibitem[Verhulst(1845)]{Verhulst1845}
Pierre~François Verhulst.
\newblock Recherches math\'ematiques sur la loi d'accroissement de la
  population.
\newblock \emph{Nouveaux M\'emoires de l'Acad\'emie Royale des Sciences et
  Belles-Lettres de Bruxelles}, 18:\penalty0 1–--42, 1845.

\bibitem[Weston and Watkins(1998)]{weston1998multi}
Jason Weston and Chris Watkins.
\newblock Multi-class support vector machines.
\newblock Technical report, Citeseer, 1998.

\bibitem[Yang and Koyejo(2020)]{yang2020consistency}
Forest Yang and Sanmi Koyejo.
\newblock On the consistency of top-k surrogate losses.
\newblock In \emph{International Conference on Machine Learning}, pages
  10727--10735, 2020.

\bibitem[Zhang(2004{\natexlab{a}})]{Zhang2003}
Tong Zhang.
\newblock Statistical behavior and consistency of classification methods based
  on convex risk minimization.
\newblock \emph{The Annals of Statistics}, 32\penalty0 (1):\penalty0 56--85,
  2004{\natexlab{a}}.

\bibitem[Zhang(2004{\natexlab{b}})]{zhang2004statistical}
Tong Zhang.
\newblock Statistical analysis of some multi-category large margin
  classification methods.
\newblock \emph{Journal of Machine Learning Research}, 5\penalty0
  (Oct):\penalty0 1225--1251, 2004{\natexlab{b}}.

\bibitem[Zhang and Sabuncu(2018)]{zhang2018generalized}
Zhilu Zhang and Mert Sabuncu.
\newblock Generalized cross entropy loss for training deep neural networks with
  noisy labels.
\newblock In \emph{Advances in neural information processing systems}, 2018.

\bibitem[Zheng et~al.(2023)Zheng, Wu, Bao, Cao, Li, and
  Zhu]{zheng2023revisiting}
Chenyu Zheng, Guoqiang Wu, Fan Bao, Yue Cao, Chongxuan Li, and Jun Zhu.
\newblock Revisiting discriminative vs. generative classifiers: Theory and
  implications.
\newblock In \emph{International Conference on Machine Learning}, 2023.

\end{thebibliography}

\newpage
\appendix
\onecolumn

\renewcommand{\contentsname}{Contents of Appendix}
\tableofcontents
\addtocontents{toc}{\protect\setcounter{tocdepth}{4}} 
\clearpage

\section{Proof of Lemma~\ref{lemma:regret-target}}
\label{app:regret-target}

\RegretTarget*
\begin{proof}
By definition, for any $h \in \sH$ and $x \in \sX$, the conditional
error of top-$k$ loss can be written as
\begin{equation*}
\sC_{ \ell_k }(h, x) =  \sum_{y\in \sY} p(x,y) 1_{y \notin \curl*{\hh_1(x), \ldots, \hh_k(x)}} = 1 - \sum_{i = 1}^k p(x, \hh_i(x)).
\end{equation*}
By definition of the labels $\pp_i(x)$, which are the most likely
top-$k$ labels, $\sC_{ \ell_k }(h, x)$ is minimized for $\hh_i(x) =
k_{\min}(x)$, $i \in [k]$. Since $\sH$ is regular, this choice is
realizable for some $h \in \sH$. Thus, we have
\begin{equation*}
  \sC^*_{\ell_k }(\sH, x)
  = \inf_{h \in \sH} \sC_{ \ell_k }(h, x)
  = 1 - \sum_{i = 1}^k p(x, \pp_i(x)).    
\end{equation*}
Furthermore, the calibration gap can be expressed as
\begin{align*}
\Delta\sC_{\ell_k, \sH}(h, x)  = \sC_{ \ell_k }(h, x) - \sC^*_{\ell_k}(\sH, x) = \sum_{i = 1}^k \paren*{p(x, \pp_i(x)) - p(x, \hh_i(x))},
\end{align*}
which completes the proof.
\end{proof}

\section{Proofs of \texorpdfstring{$\sH$}{H}-consistency bounds for comp-sum losses}

\subsection{Proof of Theorem~\ref{thm:bound-log}}
\label{app:bound-log}

\BoundLog*
\begin{proof}
For logistic loss $\ell_{\rm{log}}$, the conditional regret can be written as 
\begin{align*}
\Delta\sC_{\ell_{\rm{log}}, \sH}(h, x) 
& = \sum_{y = 1}^n p(x, y) \ell_{\rm{log}}(h, x, y) - \inf_{h \in \sH} \sum_{y = 1}^n p(x, y) \ell_{\rm{log}}(h, x, y)\\
& \geq \sum_{y = 1}^n p(x, y) \ell_{\rm{log}}(h, x, y) - \inf_{\mu \in \Rset} \sum_{y = 1}^n p(x, y) \ell_{\rm{log}}(h_{\mu, i}, x, y),
\end{align*}
where for any $i \in [k]$, $h_{\mu, i}(x, y) = \begin{cases}
h(x, y), & y \notin \curl*{\pp_i(x), \hh_i(x)}\\
\log\paren*{e^{h(x, \pp_i(x))} + \mu} & y = \hh_i(x)\\
\log\paren*{e^{h(x, \hh_i(x))} - \mu} & y = \pp_i(x).
\end{cases}$
Note that such a choice of $h_{\mu, i}$ leads to the following equality holds:
\begin{equation*}
\sum_{y \notin \curl*{\hh_i(x), \pp_i(x)}} p(x, y) \ell_{\rm{log}}(h, x, y) = \sum_{y \notin \curl*{\hh_i(x), \pp_i(x)}}  p(x, y) \ell_{\rm{log}}(h_{\mu, i}, x, y).
\end{equation*}
Therefore, for any $i \in [k]$, the conditional regret of logistic loss can be lower bounded as
\begin{align*}
\Delta\sC_{\ell_{\rm{log}}, \sH}(h, x)  & \geq -p(x, \hh_i(x)) \log\paren*{\frac{e^{h(x, \hh_i(x))}}{\sum_{y \in \sY} e^{h(x, y)}}} - p(x, \pp_i(x)) \log\paren*{\frac{e^{h(x, \pp_i(x))}}{\sum_{y \in \sY} e^{h(x, y)}}}\\
& \qquad + \sup_{\mu \in \Rset} \paren*{ p(x, \hh_i(x)) \log\paren*{\frac{e^{h(x, \pp_i(x))} + \mu}{\sum_{y \in \sY} e^{h(x, y)}}} + p(x, \pp_i(x)) \log\paren*{\frac{e^{h(x, \hh_i(x))} - \mu}{\sum_{y \in \sY} e^{h(x, y)}}} }\\
& = \sup_{\mu \in \Rset} \paren*{ p(x, \hh_i(x))\log\paren*{\frac{e^{h(x, \pp_i(x))} + \mu}{e^{h(x, \hh_i(x))}}} + p(x, \pp_i(x)) \log\paren*{\frac{e^{h(x, \hh_i(x))} - \mu }{e^{h(x, \pp_i(x))}}} }.
\end{align*}
By the concavity of the function, differentiate with respect to $\mu$, we obtain that the supremum is achieved by $\mu^* = \frac{p(x, \hh_i(x)) e^{h(x, \hh_i(x))} - p(x, \pp_i(x)) e^{h(x, \pp_i(x))} }{p(x, \hh_i(x)) + p(x, \pp_i(x))}$. Plug in $\mu^*$, we obtain
\begin{align*}
& \Delta\sC_{\ell_{\rm{log}}, \sH}(h, x)\\ 
& \geq p(x, \hh_i(x)) \log \paren*{\frac{p(x, \hh_i(x))}{p(x, \hh_i(x)) + p(x, \pp_i(x))} \frac{e^{h(x, \hh_i(x))} + e^{h(x, \pp_i(x))} }{e^{h(x, \hh_i(x))}}}\\
& \qquad + p(x, \pp_i(x)) \log \paren*{\frac{p(x, \pp_i(x))}{p(x, \hh_i(x)) + p(x, \pp_i(x))} \frac{e^{h(x, \hh_i(x))} + e^{h(x, \pp_i(x))} }{e^{h(x, \pp_i(x))}}}\\
& \geq p(x, \hh_i(x)) \log \paren*{\frac{2p(x, \hh_i(x))}{p(x, \hh_i(x)) + p(x, \pp_i(x))}} + p(x, \pp_i(x)) \log \paren*{\frac{2 p(x, \pp_i(x))}{p(x, \hh_i(x)) + p(x, \pp_i(x))}}
\tag{minimum is achieved when $h(x, \hh_i(x)) = h(x, \pp_i(x))$}.
\end{align*}
let $S_i = p(x, \pp_i(x)) + p(x, \hh_i(x))$ and $\Delta_i = p(x, \pp_i(x)) - p(x, \hh_i(x))$, we have
\begin{align*}
\Delta\sC_{\ell_{\rm{log}}, \sH}(h, x) 
& \geq \frac{S_i - \Delta_i}{2}\log(\frac{S_i - \Delta_i}{S_i}) + \frac{S_i + \Delta_i}{2}\log(\frac{S_i + \Delta_i}{S_i})\\
& \geq \frac{1 - \Delta_i}{2}\log(1 - \Delta_i) + \frac{1 + \Delta_i}{2}\log(1 + \Delta_i)
\tag{minimum is achieved when $S_i = 1$}\\
&  = \psi \paren*{p(x, \pp_i(x)) - p(x, \hh_i(x))},
\end{align*}
where $\psi(t) = \frac{1 - t}{2}\log(1 - t) + \frac{1 + t}{2}\log(1+ t)$, $t \in [0,1]$.
Therefore, the conditional regret of the top-$k$ loss can be upper bounded as follows:
\begin{equation*}
\Delta \sC_{\ell_k, \sH}(h, x) = \sum_{i = 1}^k \paren*{p(x, \pp_i(x)) - p(x, \hh_i(x))} \leq k \psi^{-1} \paren*{\Delta\sC_{\ell_{\rm{log}}, \sH}(h, x) }.
\end{equation*}
By the concavity of $\psi^{-1}$, take expectations on both sides of the preceding equation, we obtain
\begin{equation*}
\sE_{\ell_k}(h) - \sE^*_{\ell_k}(\sH) + \sM_{\ell_k}(\sH) \leq k \psi^{-1} \paren*{ \sE_{\ell_{\log}}(h) - \sE^*_{\ell_{\log}}(\sH) + \sM_{\ell_{\log}}(\sH) }.
\end{equation*}
The second part
follows from the fact that when $\sA_{\ell_{\log}}(\sH) = 0$, the
minimizability gap $\sM_{\ell_{\log}}(\sH)$ vanishes. 
\end{proof}

\subsection{Proof of Theorem~\ref{thm:bound-exp}}
\label{app:bound-exp}

\BoundExp*
\begin{proof}
For sum exponential loss $\ell^{\rm{comp}}_{\exp}$, the conditional regret can be written as 
\begin{align*}
\Delta\sC_{\ell^{\rm{comp}}_{\exp}, \sH}(h, x) 
& = \sum_{y = 1}^n p(x, y) \ell^{\rm{comp}}_{\exp}(h, x, y) - \inf_{h \in \sH} \sum_{y = 1}^n p(x, y) \ell^{\rm{comp}}_{\exp}(h, x, y)\\
& \geq \sum_{y = 1}^n p(x, y) \ell^{\rm{comp}}_{\exp}(h, x, y) - \inf_{\mu \in \Rset} \sum_{y = 1}^n p(x, y) \ell^{\rm{comp}}_{\exp}(h_{\mu, i}, x, y),
\end{align*}
where for any $i \in [k]$, $h_{\mu, i}(x, y) = \begin{cases}
h(x, y), & y \notin \curl*{\pp_i(x), \hh_i(x)}\\
\log\paren*{e^{h(x, \pp_i(x))} + \mu} & y = \hh_i(x)\\
\log\paren*{e^{h(x, \hh_i(x))} - \mu} & y = \pp_i(x).
\end{cases}$
Note that such a choice of $h_{\mu, i}$ leads to the following equality holds:
\begin{equation*}
\sum_{y \notin \curl*{\hh_i(x), \pp_i(x)}} p(x, y) \ell^{\rm{comp}}_{\exp}(h, x, y) = \sum_{y \notin \curl*{\hh_i(x), \pp_i(x)}}  p(x, y) \ell^{\rm{comp}}_{\exp}(h_{\mu, i}, x, y).
\end{equation*}
Therefore, for any $i \in [k]$, the conditional regret of sum exponential loss can be lower bounded as
\begin{align*}
\Delta\sC_{\ell^{\rm{comp}}_{\exp}, \sH}(h, x)  & \geq \sum_{y' \in \sY} \exp \paren*{h(x, y')}\bracket*{\frac{p(x, \hh_i(x))} {\exp \paren*{h(x,  \hh_i(x))}} + \frac{p(x, \pp_i(x))} {\exp\paren*{h(x, \pp_i(x))}}}\\
& \qquad + \sup_{\mu \in \Rset} \paren*{- \sum_{y'\in \sY} \exp\paren*{h(x, y')}\bracket*{\frac{p(x, \hh_i(x))}{ \exp\paren*{h(x, \pp_i(x))} + \mu} + \frac{p(x, \pp_i(x))}{\exp\paren*{h(x, \hh_i(x))} - \mu}} }.
\end{align*}
By the concavity of the function, differentiate with respect to $\mu$, we obtain that the supremum is achieved by $\mu^* =\frac{\exp \bracket*{h(x,\hh_i(x))}\sqrt{p(x,\hh_i(x))} - \exp\bracket*{h(x, \pp_i(x))}\sqrt{p(x, \pp_i(x))}}{\sqrt{p(x, \hh_i(x))} + \sqrt{p(x,\pp_i(x))}}$. Plug in $\mu^*$, we obtain
\begin{align*}
& \Delta\sC_{\ell^{\rm{comp}}_{\exp}, \sH}(h, x)\\ 
& \geq \sum_{y'\in \sY} \exp\paren*{h(x, y')} \bracket*{\frac{p(x, \hh_i(x))} {\exp\paren*{h(x, \hh_i(x))}} + \frac{p(x, \pp_i(x))} {\exp\paren*{h(x, \pp_i(x))}} - \frac{\paren*{\sqrt{p(x, \hh_i(x))} + \sqrt{p(x, \pp_i(x))}}^2}{\exp\paren*{h(x, \pp_i(x))} + \exp\paren*{h(x, \hh_i(x))}}}\\
&\geq \bracket*{1 + \frac{\exp\paren*{h(x, \pp_i(x))}}{\exp\paren*{h(x, \hh_i(x))}}}p(x, \hh_i(x)) + \bracket*{1 + \frac{\exp\paren*{h(x, \hh_i(x))}}{\exp\paren*{h(x, \pp_i(x))}}}p(x, \pp_i(x)) - \paren*{\sqrt{p(x, \hh_i(x))} + \sqrt{p(x, \pp_i(x))}}^2
\tag{$\sum_{y' \in \sY} \exp\paren*{h(x, y')}\geq \exp\paren*{h(x, \pp_i(x))} + \exp\paren*{h(x, \hh_i(x))}$}\\
& \geq 2 p(x,\hh_i(x)) + 2 p(x,\pp_i(x)) - \paren*{\sqrt{p(x, \hh_i(x))} + \sqrt{p(x, \pp_i(x))}}^2 \tag{minimum is attained when $\frac{\exp\paren*{h(x, \pp_i(x))}}{\exp\paren*{h(x,\hh_i(x))}} = 1$}.
\end{align*}
let $S_i = p(x, \pp_i(x)) + p(x, \hh_i(x))$ and $\Delta_i = p(x, \pp_i(x)) - p(x, \hh_i(x))$, we have
\begin{align*}
\Delta\sC_{\ell^{\rm{comp}}_{\exp}, \sH}(h, x) 
& \geq 2 S_i - \paren*{\sqrt{\frac{S_i + \Delta_i}{2}} + \sqrt{\frac{S_i - \Delta_i}{2}}}^2\\
& \geq 2\bracket*{1 -\bracket*{\frac{\paren*{1 + \Delta_i}^{\frac1{2}} + \paren*{1 - \Delta_i}^{\frac1{2}}}{2}}^{2}} 
\tag{minimum is achieved when $S_i = 1$}\\
& = 1 - \sqrt{1 - (\Delta_i)^2} \\
&  = \psi \paren*{p(x, \pp_i(x)) - p(x, \hh_i(x))},
\end{align*}
where $\psi(t) = 1 - \sqrt{1 - t^2}$, $t \in [0,1]$.
Therefore, the conditional regret of the top-$k$ loss can be upper bounded as follows:
\begin{equation*}
\Delta \sC_{\ell_k, \sH}(h, x) = \sum_{i = 1}^k \paren*{p(x, \pp_i(x)) - p(x, \hh_i(x))} \leq k \psi^{-1} \paren*{\Delta\sC_{\ell^{\rm{comp}}_{\exp}, \sH}(h, x) }.
\end{equation*}
By the concavity of $\psi^{-1}$, take expectations on both sides of the preceding equation, we obtain
\begin{equation*}
\sE_{\ell_k}(h) - \sE^*_{\ell_k}(\sH) + \sM_{\ell_k}(\sH) \leq k \psi^{-1} \paren*{ \sE_{\ell^{\rm{comp}}_{\exp}}(h) - \sE^*_{\ell^{\rm{comp}}_{\exp}}(\sH) + \sM_{\ell^{\rm{comp}}_{\exp}}(\sH) }.
\end{equation*}
The second part
follows from the fact that when $\sA_{\ell^{\rm{comp}}_{\exp}}(\sH) =
0$, the minimizability gap $\sM_{\ell^{\rm{comp}}_{\exp}}(\sH)$
vanishes.
\end{proof}

\subsection{Proof of Theorem~\ref{thm:bound-mae}}
\label{app:bound-mae}
\BoundMAE*
\begin{proof}
For mean absolute error loss $\ell_{\rm{mae}}$, the conditional regret can be written as 
\begin{align*}
\Delta\sC_{\ell_{\rm{mae}}, \sH}(h, x) 
& = \sum_{y = 1}^n p(x, y) \ell_{\rm{mae}}(h, x, y) - \inf_{h \in \sH} \sum_{y = 1}^n p(x, y) \ell_{\rm{mae}}(h, x, y)\\
& \geq \sum_{y = 1}^n p(x, y) \ell_{\rm{mae}}(h, x, y) - \inf_{\mu \in \Rset} \sum_{y = 1}^n p(x, y) \ell_{\rm{mae}}(h_{\mu, i}, x, y),
\end{align*}
where for any $i \in [k]$, $h_{\mu, i}(x, y) = \begin{cases}
h(x, y), & y \notin \curl*{\pp_i(x), \hh_i(x)}\\
\log\paren*{e^{h(x, \pp_i(x))} + \mu} & y = \hh_i(x)\\
\log\paren*{e^{h(x, \hh_i(x))} - \mu} & y = \pp_i(x).
\end{cases}$
Note that such a choice of $h_{\mu, i}$ leads to the following equality holds:
\begin{equation*}
\sum_{y \notin \curl*{\hh_i(x), \pp_i(x)}} p(x, y) \ell_{\rm{mae}}(h, x, y) = \sum_{y \notin \curl*{\hh_i(x), \pp_i(x)}}  p(x, y) \ell_{\rm{mae}}(h_{\mu, i}, x, y).
\end{equation*}
Therefore, for any $i \in [k]$, the conditional regret of mean absolute error loss can be lower bounded as
\begin{align*}
& \Delta\sC_{\ell_{\rm{mae}}, \sH}(h, x)\\  & \geq p(x,\hh_i(x)) \paren*{1-\frac{\exp\paren*{h(x,\hh_i(x))}}{\sum_{y'\in \sY}\exp\paren*{h(x,y')}}} +p(x,\pp_i(x)) \paren*{1-\frac{\exp\paren*{h(x,\pp_i(x))}}{\sum_{y'\in \sY}\exp\paren*{h(x,y')}}}\\
& \quad + \sup_{\mu \in \Rset} \paren*{-p(x,\pp_i(x)) \paren*{1-\frac{\exp\paren*{h(x,\hh_i(x))}-\mu}{\sum_{y'\in \sY}\exp\paren*{h(x,y')}}} -p(x,\hh_i(x)) \paren*{1-\frac{\exp\paren*{h(x,\pp_i(x))}+\mu}{\sum_{y'\in \sY}\exp\paren*{h(x,y')}}}}.
\end{align*}
By the concavity of the function, differentiate with respect to $\mu$, we obtain that the supremum is achieved by $\mu^* = -\exp\bracket*{h(x, \pp_i(x)}$. Plug in $\mu^*$, we obtain
\begin{align*}
& \Delta\sC_{\ell_{\rm{mae}}, \sH}(h, x)\\ 
& \geq p(x,\pp_i(x))\frac{\exp\paren*{h(x,\hh_i(x))}}{\sum_{y'\in \sY}\exp\paren*{h(x,y')}}-p(x,\hh_i(x))\frac{\exp\paren*{h(x,\hh_i(x))}}{\sum_{y'\in \sY}\exp\paren*{h(x,y')}}\\
&  \geq \frac{1}{n}
\paren*{p(x,\pp_i(x)) - p(x,\hh_i(x))}
\tag{$\frac{\exp\paren*{h(x,\hh_i(x))}}{\sum_{y'\in \sY}\exp\paren*{h(x,y')}}\geq \frac1{n}$}
\end{align*}
Therefore, the conditional regret of the top-$k$ loss can be upper bounded as follows:
\begin{equation*}
\Delta \sC_{\ell_k, \sH}(h, x) = \sum_{i = 1}^k \paren*{p(x, \pp_i(x)) - p(x, \hh_i(x))} \leq k n \paren*{\Delta\sC_{\ell_{\rm{mae}}, \sH}(h, x) }.
\end{equation*}
Take expectations on both sides of the preceding equation, we obtain
\begin{equation*}
\sE_{\ell_k}(h) - \sE^*_{\ell_k}(\sH) + \sM_{\ell_k}(\sH) \leq k n \paren*{ \sE_{\ell_{\rm{mae}}}(h) - \sE^*_{\ell_{\rm{mae}}}(\sH) + \sM_{\ell_{\rm{mae}}}(\sH) }.
\end{equation*}
The second part
follows from the fact that when $\sA_{\ell_{\rm{mae}}}(\sH) = 0$, the
minimizability gap $\sM_{\ell_{\rm{mae}}}(\sH)$ vanishes.
\end{proof}

\subsection{Proof of Theorem~\ref{thm:bound-gce}}
\label{app:bound-gce}
\BoundGCE*
\begin{proof}
For generalized cross-entropy loss $\ell_{\rm{gce}}$, the conditional regret can be written as 
\begin{align*}
& \Delta\sC_{\ell_{\rm{gce}}, \sH}(h, x)\\ 
& = \sum_{y = 1}^n p(x, y) \ell_{\rm{gce}}(h, x, y) - \inf_{h \in \sH} \sum_{y = 1}^n p(x, y) \ell_{\rm{gce}}(h, x, y)\\
& \geq \sum_{y = 1}^n p(x, y) \ell_{\rm{gce}}(h, x, y) - \inf_{\mu \in \Rset} \sum_{y = 1}^n p(x, y) \ell_{\rm{gce}}(h_{\mu, i}, x, y),
\end{align*}
where for any $i \in [k]$, $h_{\mu, i}(x, y) = \begin{cases}
h(x, y), & y \notin \curl*{\pp_i(x), \hh_i(x)}\\
\log\paren*{e^{h(x, \pp_i(x))} + \mu} & y = \hh_i(x)\\
\log\paren*{e^{h(x, \hh_i(x))} - \mu} & y = \pp_i(x).
\end{cases}$
Note that such a choice of $h_{\mu, i}$ leads to the following equality holds:
\begin{equation*}
\sum_{y \notin \curl*{\hh_i(x), \pp_i(x)}} p(x, y) \ell_{\rm{gce}}(h, x, y) = \sum_{y \notin \curl*{\hh_i(x), \pp_i(x)}}  p(x, y) \ell_{\rm{gce}}(h_{\mu, i}, x, y).
\end{equation*}
Therefore, for any $i \in [k]$, the conditional regret of generalized cross-entropy loss can be lower bounded as
\begin{align*}
& \alpha \Delta\sC_{\ell_{\rm{gce}}, \sH}(h, x)\\
& \geq  p(x, \hh_i(x)) \paren*{1 - \bracket*{\frac{\exp\paren*{h(x, \hh_i(x))}}{\sum_{y' \in \sY}\exp\paren*{h(x, y')}}}^{\alpha}} + p(x, \pp_i(x)) \paren*{1 - \bracket*{\frac{\exp\paren*{h(x, \pp_i(x))}}{\sum_{y'\in \sY}\exp\paren*{h(x, y')}}}^{\alpha}}\\
& + \sup_{\mu \in \Rset} \paren*{ -p(x, \hh_i(x)) \paren*{1 - \bracket*{\frac{\exp\paren*{h(x, \pp_i(x))} + \mu}{\sum_{y' \in \sY}\exp\paren*{h(x, y')}}}^{\alpha}} - p(x, \pp_i(x)) \paren*{1 - \bracket*{\frac{\exp\paren*{h(x, \hh_i(x))} - \mu}{\sum_{y' \in \sY}\exp\paren*{h(x, y')}}}^{\alpha}} }.
\end{align*}
By the concavity of the function, differentiate with respect to $\mu$, we obtain that the supremum is achieved by $\mu^* = \frac{\exp\bracket*{h(x,\hh_i(x))}p(x, \pp_i(x))^{\frac1{\alpha - 1}} - \exp\bracket*{h(x, \pp_i(x))}p(x,\hh_i(x))^{\frac1{\alpha - 1}}}{p(x,\hh_i(x))^{\frac1{\alpha - 1}} + p(x,\pp_i(x))^{\frac1{\alpha - 1}}}$. Plug in $\mu^*$, we obtain
\begin{align*}
& \alpha \Delta\sC_{\ell_{\rm{gce}}, \sH}(h, x)\\ 
& \geq p(x, \hh_i(x))\bracket*{\frac{\bracket*{\exp\paren*{h(x, \hh_i(x))} + \exp\paren*{h(x, \pp_i(x))}}p(x, \pp_i(x))^{\frac1{\alpha - 1}}}{\sum_{y'\in \sY}\exp\paren*{h(x, y')}\bracket*{p(x, \hh_i(x))^{\frac1{\alpha - 1}} + p(x, \pp_i(x))^{\frac1{\alpha - 1}}}}}^{\alpha } - p(x, \hh_i(x))\bracket*{\frac{\exp\paren*{h(x, \hh_i(x))}}{\sum_{y'\in \sY}\exp\paren*{h(x, y')}}}^{\alpha }\\
&\quad +p(x, \pp_i(x))\bracket*{\frac{\bracket*{\exp\paren*{h(x, \hh_i(x))} + \exp\paren*{h(x, \pp_i(x))}}p(x, \hh_i(x))^{\frac1{\alpha - 1}}}{\sum_{y'\in \sY}\exp\paren*{h(x, y')}\bracket*{p(x, \hh_i(x))^{\frac1{\alpha - 1}} + p(x, \pp_i(x))^{\frac1{\alpha - 1}}}}}^{\alpha } - p(x, \pp_i(x))\bracket*{\frac{\exp\paren*{h(x, \pp_i(x))}}{\sum_{y'\in \sY}\exp\paren*{h(x, y')}}}^{\alpha }\\
&\geq \frac{1}{n^{\alpha}}\paren*{p(x,\hh_i(x))\bracket*{\frac{2p(x,\pp_i(x))^{\frac1{\alpha - 1}}}{p(x,\hh_i(x))^{\frac1{\alpha - 1}}+p(x,\pp_i(x))^{\frac1{\alpha - 1}}}}^{\alpha}-p(x,\hh_i(x))}\\
& + \frac{1}{n^{\alpha}}\paren*{p(x,\pp_i(x))\bracket*{\frac{2p(x,\hh_i(x))^{\frac1{\alpha - 1}}}{p(x,\hh_i(x))^{\frac1{\alpha - 1}}+p(x,\pp_i(x))^{\frac1{\alpha - 1}}}}^{\alpha}-p(x,\pp_i(x))}
\tag{$\paren*{\frac{\exp\paren*{h(x,\pp_i(x))}}{\sum_{y'\in \sY}\exp\paren*{h(x,y')}}}^{\alpha }\geq \frac1{n^{\alpha }}$ and minimum is attained when $\frac{\exp\paren*{h(x,\pp_i(x))}}{\exp\paren*{h(x,\hh_i(x))}}=1$}\\
\end{align*}
let $S_i = p(x, \pp_i(x)) + p(x, \hh_i(x))$ and $\Delta_i = p(x, \pp_i(x)) - p(x, \hh_i(x))$, we have
\begin{align*}
\Delta\sC_{\ell_{\rm{gce}}, \sH}(h, x) 
& \geq \frac{1}{\alpha n^{\alpha}}\paren*{\bracket*{\frac{\paren*{S_i + \Delta_i}^{\frac1{1 - \alpha }}+\paren*{S_i - \Delta_i}^{\frac1{1 - \alpha}}}{2}}^{1 - \alpha} - S_i}\\
& \geq \frac{1}{\alpha n^{\alpha}}\paren*{\bracket*{\frac{\paren*{1 + \Delta_i}^{\frac1{1 - \alpha }}+\paren*{1 - \Delta_i}^{\frac1{1 - \alpha}}}{2}}^{1 - \alpha} - 1}\\
\tag{minimum is achieved when $S_i = 1$}\\
&  = \psi \paren*{p(x, \pp_i(x)) - p(x, \hh_i(x))},
\end{align*}
where $\psi(t) = \frac{1}{\alpha n^{\alpha}}
\bracket*{\bracket*{\frac{\paren*{1 + t}^{\frac1{1 - \alpha }} +
      \paren*{1 - t}^{\frac1{1 - \alpha }}}{2}}^{1 - \alpha } -1}$, $t \in [0,1]$.
Therefore, the conditional regret of the top-$k$ loss can be upper bounded as follows:
\begin{equation*}
\Delta \sC_{\ell_k, \sH}(h, x) = \sum_{i = 1}^k \paren*{p(x, \pp_i(x)) - p(x, \hh_i(x))} \leq k \psi^{-1} \paren*{\Delta\sC_{\ell_{\rm{gce}}, \sH}(h, x) }.
\end{equation*}
By the concavity of $\psi^{-1}$, take expectations on both sides of the preceding equation, we obtain
\begin{equation*}
\sE_{\ell_k}(h) - \sE^*_{\ell_k}(\sH) + \sM_{\ell_k}(\sH) \leq k \psi^{-1} \paren*{ \sE_{\ell_{\rm{gce}}}(h) - \sE^*_{\ell_{\rm{gce}}}(\sH) + \sM_{\ell_{\rm{gce}}}(\sH) }.
\end{equation*}
The second
part follows from the fact that when $\sA_{\ell_{\rm{gce}}}(\sH) = 0$,
the minimizability gap $\sM_{\ell_{\rm{gce}}}(\sH)$ vanishes.
\end{proof}

\section{Proofs of realizable \texorpdfstring{$\sH$}{H}-consistency for comp-sum losses}
\label{app:realizability}

\RealizabilityP*
\begin{proof}
Since the distribution is realizable, there exists a hypothesis $h \in \sH$ such that \[\mathbb{P}_{(x,y)\sim \sD}\paren*{h^*(x, y) > h^*(x, \hh_{2}(x))} = 1.\]
Therefore, for the logistic loss, by using the
Lebesgue dominated convergence theorem,
\begin{align*}
  \sM_{\ell_{\log}}(\sH) &\leq \sE^*_{\ell_{\log}}(\sH)
  \leq \lim_{\beta \to \plus \infty} \sE_{\ell_{\log}}(\beta h) = \lim_{\beta \to \plus \infty} \log \bracket[\bigg]{1 + \sum_{y' \neq y} e^{\beta \paren*{ h^*(x, y') - h^*(x, y)}}} = 0.
\end{align*}
For the sum exponential loss, by using the
Lebesgue dominated convergence theorem,
\begin{align*}
  \sM_{\ell^{\rm{comp}}_{\exp}}(\sH) &\leq \sE^*_{\ell^{\rm{comp}}_{\exp}}(\sH)
  \leq \lim_{\beta \to \plus \infty} \sE_{\ell^{\rm{comp}}_{\exp}}(\beta h) = \lim_{\beta \to \plus \infty} \sum_{y' \neq y} e^{ \beta\paren*{h^*(x, y') - h^*(x, y) }} = 0.
\end{align*}
For the generalized cross entropy loss, by using the
Lebesgue dominated convergence theorem,
\begin{align*}
  \sM_{\ell_{\rm{gce}}}(\sH) &\leq \sE^*_{\ell_{\rm{gce}}}(\sH)
  \leq \lim_{\beta \to \plus \infty} \sE_{\ell_{\rm{gce}}}(\beta h) = \lim_{\beta \to \plus \infty} \frac{1}{\alpha}\bracket*{1 - \bracket*{\sum_{y' \in \sY}e^{ \beta( h^*(x, y')
      - h^*(x, y)) }}^{-\alpha}} = 0.
\end{align*}
For the mean absolute error loss, by using the
Lebesgue dominated convergence theorem,
\begin{align*}
  \sM_{\ell_{\rm{mae}}}(\sH) &\leq \sE^*_{\ell_{\rm{mae}}}(\sH)
  \leq \lim_{\beta \to \plus \infty} \sE_{\ell_{\rm{mae}}}(\beta h) = \lim_{\beta \to \plus \infty} 1
- \bracket*{\sum_{y' \in \sY}e^{\beta \paren*{h^*(x, y') - h^*(x, y)} }}^{-1} = 0.
\end{align*}
Therefore, by Theorems~\ref{thm:bound-log}, \ref{thm:bound-exp}, \ref{thm:bound-mae} and \ref{thm:bound-gce}, the proof is completed.
\end{proof}

\section{Proofs of \texorpdfstring{$\sH$}{H}-consistency bounds for constrained losses}
\subsection{Proof of Theorem~\ref{thm:bound-exp-cstnd}}
\label{app:bound-exp-cstnd}
The conditional error for the constrained loss can be expressed as follows:
\begin{align*}
\sC_{\ell^{\rm{cstnd}}}(h, x) = \sum_{y = 1}^n p(x, y)  \ell^{\rm{cstnd}}(h, x, y) = \sum_{y = 1}^n p(x, y) \sum_{y'\neq y} \Phi\paren*{-h(x, y')} = \sum_{y\in \sY} \paren*{1-p(x, y)}\Phi\paren*{-h(x, y)}.
\end{align*}

\BoundExpCstnd*
\begin{proof}
For the constrained exponential loss $\ell^{\rm{cstnd}}_{\rm{exp}}$, the conditional regret can be written as 
\begin{align*}
\Delta\sC_{\ell^{\rm{cstnd}}_{\rm{exp}}, \sH}(h, x) 
& = \sum_{y = 1}^n p(x, y) \ell^{\rm{cstnd}}_{\rm{exp}}(h, x, y) - \inf_{h \in \sH} \sum_{y = 1}^n p(x, y) \ell^{\rm{cstnd}}_{\rm{exp}}(h, x, y)\\
& \geq \sum_{y = 1}^n p(x, y) \ell^{\rm{cstnd}}_{\rm{exp}}(h, x, y) - \inf_{\mu \in \Rset} \sum_{y = 1}^n p(x, y) \ell^{\rm{cstnd}}_{\rm{exp}}(h_{\mu, i}, x, y),
\end{align*}
where for any $i \in [k]$, $h_{\mu, i}(x, y) = \begin{cases}
h(x, y), & y \notin \curl*{\pp_i(x), \hh_i(x)}\\
h(x, \pp_i(x)) + \mu & y = \hh_i(x)\\
h(x, \hh_i(x)) - \mu & y = \pp_i(x).
\end{cases}$
Note that such a choice of $h_{\mu, i}$ leads to the following equality holds:
\begin{equation*}
\sum_{y \notin \curl*{\hh_i(x), \pp_i(x)}} p(x, y) \ell^{\rm{cstnd}}_{\rm{exp}}(h, x, y) = \sum_{y \notin \curl*{\hh_i(x), \pp_i(x)}}  p(x, y) \ell^{\rm{cstnd}}_{\rm{exp}}(h_{\mu, i}, x, y).
\end{equation*}
Let $q(x, \pp_i(x)) = 1 - p(x, \pp_i(x))$ and $q(x, \hh_i(x)) = 1 - p(x, \hh_i(x))$.
Therefore, for any $i \in [k]$, the conditional regret of constrained exponential loss can be lower bounded as
\begin{align*}
& \Delta\sC_{\ell^{\rm{cstnd}}_{\rm{exp}}, \sH}(h, x)\\ 
& \geq  \inf_{h \in \sH}\sup_{\mu\in \Rset} \curl*{q(x, \pp_i(x))\paren*{e^{h(x, \pp_i(x))}-e^{h(x,\hh_i(x))-\mu}}+q(x,\hh_i(x))\paren*{e^{h(x,\hh_i(x))}-e^{h(x, \pp_i(x))+\mu}}}\\
& = \paren*{\sqrt{q(x, \pp_i(x))}-\sqrt{q(x,\hh_i(x))}}^2 \tag{differentiating with respect to $\mu$, $h$ to optimize}\\
&   =   \paren*{\frac{q(x,\hh_i(x)) - q(x, \pp_i(x))}{\sqrt{q(x, \pp_i(x))} + \sqrt{q(x, \hh_i(x))}}}^2\\
& \geq \frac1{4} \paren*{q(x,\hh_i(x)) - q(x, \pp_i(x))}^2 \tag{$0\leq q(x, y)\leq 1$}\\
& = \frac1{4} \paren*{p(x,\pp_i(x)) - p(x, \hh_i(x))}^2.
\end{align*}
Therefore, by Lemma~\ref{lemma:regret-target}, the conditional regret of the top-$k$ loss can be upper bounded as follows:
\begin{equation*}
\Delta \sC_{\ell_k, \sH}(h, x) = \sum_{i = 1}^k \paren*{p(x, \pp_i(x)) - p(x, \hh_i(x))} \leq 2 k \paren*{\Delta\sC_{\ell^{\rm{cstnd}}_{\rm{exp}}, \sH}(h, x) }^{\frac12}.
\end{equation*}
By the concavity, take expectations on both sides of the preceding equation, we obtain
\begin{equation*}
\sE_{\ell_k}(h) - \sE^*_{\ell_k}(\sH) + \sM_{\ell_k}(\sH) \leq 2 k\paren*{ \sE_{\ell^{\rm{cstnd}}_{\rm{exp}}}(h) - \sE^*_{\ell^{\rm{cstnd}}_{\rm{exp}}}(\sH) + \sM_{\ell^{\rm{cstnd}}_{\rm{exp}}}(\sH) }^{\frac12}.
\end{equation*}
The second part follows from the fact that when
$\sA_{\ell^{\rm{cstnd}}_{\exp}}(\sH) = 0$, we have
$\sM_{\ell^{\rm{cstnd}}_{\exp}}(\sH) = 0$.
\end{proof}

\subsection{Proof of Theorem~\ref{thm:bound-sq-hinge}}
\label{app:bound-sq-hinge}

\BoundSqHinge*
\begin{proof}
For the constrained squared hinge loss $\ell_{\rm{sq-hinge}}$, the conditional regret can be written as 
\begin{align*}
\Delta\sC_{\ell_{\rm{sq-hinge}}, \sH}(h, x) 
& = \sum_{y = 1}^n p(x, y) \ell_{\rm{sq-hinge}}(h, x, y) - \inf_{h \in \sH} \sum_{y = 1}^n p(x, y) \ell_{\rm{sq-hinge}}(h, x, y)\\
& \geq \sum_{y = 1}^n p(x, y) \ell_{\rm{sq-hinge}}(h, x, y) - \inf_{\mu \in \Rset} \sum_{y = 1}^n p(x, y) \ell_{\rm{sq-hinge}}(h_{\mu, i}, x, y),
\end{align*}
where for any $i \in [k]$, $h_{\mu, i}(x, y) = \begin{cases}
h(x, y), & y \notin \curl*{\pp_i(x), \hh_i(x)}\\
h(x, \pp_i(x)) + \mu & y = \hh_i(x)\\
h(x, \hh_i(x)) - \mu & y = \pp_i(x).
\end{cases}$
Note that such a choice of $h_{\mu, i}$ leads to the following equality holds:
\begin{equation*}
\sum_{y \notin \curl*{\hh_i(x), \pp_i(x)}} p(x, y) \ell_{\rm{sq-hinge}}(h, x, y) = \sum_{y \notin \curl*{\hh_i(x), \pp_i(x)}}  p(x, y) \ell_{\rm{sq-hinge}}(h_{\mu, i}, x, y).
\end{equation*}
Let $q(x, \pp_i(x)) = 1 - p(x, \pp_i(x))$ and $q(x, \hh_i(x)) = 1 - p(x, \hh_i(x))$.
Therefore, for any $i \in [k]$, the conditional regret of the constrained squared hinge loss can be lower bounded as
\begin{align*}
\Delta\sC_{\ell_{\rm{sq-hinge}}, \sH}(h, x) 
&  \geq  \inf_{h \in \sH}  \sup_{\mu\in \Rset} \bigg\{q(x, \pp_i(x))\paren*{\max\curl*{0, 1 + h(x, \pp_i(x))}^2-\max\curl*{0, 1 + h(x,\hh_i(x))-\mu}^2 }\\
& \qquad + q(x,\hh_i(x))\paren*{\max\curl*{0, 1 + h(x,\hh_i(x))}^2-\max\curl*{0, 1 + h(x, \pp_i(x))+\mu}^2}\bigg\}\\
& \geq \frac14 \paren*{q(x,\pp_i(x))-q(x, \hh_i(x))}^2
\tag{differentiating with respect to $\mu$, $h$ to optimize}\\
& = \frac14 \paren*{p(x,\pp_i(x))-p(x, \hh_i(x))}^2
\end{align*}
Therefore, by Lemma~\ref{lemma:regret-target}, the conditional regret of the top-$k$ loss can be upper bounded as follows:
\begin{equation*}
\Delta \sC_{\ell_k, \sH}(h, x) = \sum_{i = 1}^k \paren*{p(x, \pp_i(x)) - p(x, \hh_i(x))} \leq 2 k \paren*{\Delta\sC_{\ell_{\rm{sq-hinge}}, \sH}(h, x) }^{\frac12}.
\end{equation*}
By the concavity, take expectations on both sides of the preceding equation, we obtain
\begin{equation*}
\sE_{\ell_k}(h) - \sE^*_{\ell_k}(\sH) + \sM_{\ell_k}(\sH) \leq 2 k\paren*{ \sE_{\ell_{\rm{sq-hinge}}}(h) - \sE^*_{\ell_{\rm{sq-hinge}}}(\sH) + \sM_{\ell_{\rm{sq-hinge}}}(\sH) }^{\frac12}.
\end{equation*}
The second
part follows from the fact that when the hypothesis set $\sH$ is
sufficiently rich such that $\sA_{\ell_{\rm{sq-hinge}}}(\sH) = 0$, we
have $\sM_{\ell_{\rm{sq-hinge}}}(\sH) = 0$.
\end{proof}

\subsection{Proof of Theorem~\ref{thm:bound-hinge}}
\label{app:bound-hinge}


Similarly, we study the constrained hinge loss, defined as
$\ell_{\rm{hinge}}(h, x, y) = \sum_{y'\neq y} \max \curl*{0, 1 +
  h(x, y')}$.  The following result shows that $\ell_{\rm{hinge}}$
admits an $\sH$-consistency bound with respect to $\ell_k$. The second
part follows from the fact that when the hypothesis set $\sH$ is
sufficiently rich such that $\sA_{\ell_{\rm{hinge}}}(\sH) = 0$, we
have $\sM_{\ell_{\rm{hinge}}}(\sH) = 0$.  Different from the
constrained squared hinge loss, the bound for $\ell_{\rm{hinge}}$ is
linear: $\sE_{\ell_{\rm{hinge}}}(h) - \sE^*_{\ell_{\rm{hinge}}}(\sH)
\leq \e \Rightarrow \sE_{\ell_k}(h) - \sE^*_{\ell_k}(\sH) \leq k \,
\e$. This also implies that $\ell_{\rm{hinge}}$ is Bayes-consistent
with respect to $\ell_k$.
\begin{restatable}{theorem}{BoundHinge}
\label{thm:bound-hinge}
Assume that $\sH$ is symmetric and complete. Then, for any $1 \leq k
\leq n$, the following $\sH$-consistency bound holds for the
constrained hinge loss:
\begin{align*}
\sE_{\ell_k}(h) - \sE^*_{\ell_k}(\sH) + \sM_{\ell_k}(\sH) \leq k \paren*{\sE_{\ell_{\rm{hinge}}}(h) - \sE^*_{\ell_{\rm{hinge}}}(\sH) + \sM_{\ell_{\rm{hinge}}}(\sH)}.
\end{align*}
In the special case where $\sA_{\ell_{\rm{hinge}}}(\sH) = 0$, for any $1 \leq k \leq n$, the following bound holds:
\begin{align*}
\sE_{\ell_k}(h) - \sE^*_{\ell_k}(\sH) \leq k \paren*{ \sE_{\ell_{\rm{hinge}}}(h) - \sE^*_{\ell_{\rm{hinge}}}(\sH) }.
\end{align*}
\end{restatable}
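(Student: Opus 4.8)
The plan is to follow the template used for the constrained exponential loss (Theorem~\ref{thm:bound-exp-cstnd}) and the constrained squared hinge loss (Theorem~\ref{thm:bound-sq-hinge}); the only essential change is that the piecewise-\emph{linear} shape of the hinge produces a linear calibration rate rather than a square-root one. First I would record the conditional error of a constrained loss, which for the hinge reads $\sC_{\ell_{\rm{hinge}}}(h, x) = \sum_{y \in \sY}\paren*{1 - p(x, y)}\max\curl*{0, 1 + h(x, y)}$. Then, for each position $i \in [k]$, I would lower bound the conditional regret $\Delta\sC_{\ell_{\rm{hinge}}, \sH}(h, x)$ by replacing the infimum over $\sH$ with the infimum over the one-parameter family $h_{\mu, i}$ that reassigns the scores of $\pp_i(x)$ and $\hh_i(x)$, setting $h_{\mu, i}(x, \hh_i(x)) = h(x, \pp_i(x)) + \mu$ and $h_{\mu, i}(x, \pp_i(x)) = h(x, \hh_i(x)) - \mu$ while leaving every other coordinate and the constraint $\sum_{y} h(x, y) = 0$ unchanged. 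Since $h_{\mu, i}$ remains in $\sH$ by symmetry and completeness, all terms outside $\curl*{\pp_i(x), \hh_i(x)}$ cancel, and the regret is at least the supremum over $\mu$ of the remaining two-term difference.

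Writing $q(x, y) = 1 - p(x, y)$, the core computation is to maximize over $\mu$ the quantity $q(x, \pp_i(x))\bracket*{\max\curl*{0, 1 + h(x, \pp_i(x))} - \max\curl*{0, 1 + h(x, \hh_i(x)) - \mu}} + q(x, \hh_i(x))\bracket*{\max\curl*{0, 1 + h(x, \hh_i(x))} - \max\curl*{0, 1 + h(x, \pp_i(x)) + \mu}}$ and to show its value is at least $\abs*{q(x, \pp_i(x)) - q(x, \hh_i(x))} = \abs*{p(x, \pp_i(x)) - p(x, \hh_i(x))}$. Where the quadratic hinge gave $\frac14\paren*{q(x, \pp_i(x)) - q(x, \hh_i(x))}^2$ and hence the factor $2k$, the slopes of the piecewise-linear hinge are constant, so the gain at the optimal $\mu$ scales linearly in the probability gap with constant $1$, which is what produces the factor $k$. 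In particular this yields the per-position bound $\Delta\sC_{\ell_{\rm{hinge}}, \sH}(h, x) \geq p(x, \pp_i(x)) - p(x, \hh_i(x))$, since $\abs*{p(x, \pp_i(x)) - p(x, \hh_i(x))} \geq p(x, \pp_i(x)) - p(x, \hh_i(x))$.

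To conclude, I would invoke Lemma~\ref{lemma:regret-target} to write $\Delta\sC_{\ell_k, \sH}(h, x) = \sum_{i = 1}^k\paren*{p(x, \pp_i(x)) - p(x, \hh_i(x))}$, and, summing the per-position bound over $i$, obtain $\Delta\sC_{\ell_k, \sH}(h, x) \leq k\,\Delta\sC_{\ell_{\rm{hinge}}, \sH}(h, x)$. Taking expectations over $x$ (no concavity step is needed here, the relation being linear) and rewriting both sides through $\E_x\bracket*{\Delta\sC_{\ell, \sH}(h, x)} = \sE_{\ell}(h) - \sE^*_{\ell}(\sH) + \sM_{\ell}(\sH)$ gives the stated inequality. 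The special case is immediate, since $\sA_{\ell_{\rm{hinge}}}(\sH) = 0$ forces $\sM_{\ell_{\rm{hinge}}}(\sH) = 0$.

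The hard part is the $\mu$-optimization of the piecewise-linear objective. It requires a case analysis depending on whether the shifted arguments $1 + h(x, \hh_i(x)) - \mu$ and $1 + h(x, \pp_i(x)) + \mu$ land above or below the kink at $0$, and one must check that the maximizing $\mu$ recovers the full linear gap $\abs*{q(x, \pp_i(x)) - q(x, \hh_i(x))}$; pinning down that the constant is exactly $1$, rather than a smaller value, is precisely what fixes the factor $k$ in the conclusion.
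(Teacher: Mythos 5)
Your plan follows the same template as the paper's own proof (same perturbation family $h_{\mu,i}$, same two-term objective, then Lemma~\ref{lemma:regret-target}), but it defers exactly the step on which everything hinges, and that step cannot be completed. Write $h_p = h(x,\pp_i(x))$, $h_h = h(x,\hh_i(x))$, $q_p = 1 - p(x,\pp_i(x))$, $q_h = 1 - p(x,\hh_i(x))$. For $i \geq 2$ nothing prevents both scores from lying at or below the hinge kink, i.e.\ $h_p \leq -1$ and $h_h \leq -1$: the sum-zero constraint only forces the top score $h(x,\hh_1(x))$ to be nonnegative. In that regime $\max\{0, 1+h_p\} = \max\{0, 1+h_h\} = 0$, so your objective reduces to $F(\mu) = -q_p\max\{0, 1+h_h-\mu\} - q_h\max\{0, 1+h_p+\mu\} \leq 0$, and $\sup_{\mu} F(\mu) = 0$ (attained at $\mu = 0$), not $\abs*{q_p - q_h}$. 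Hence the per-position bound $\Delta\sC_{\ell_{\mathrm{hinge}},\sH}(h,x) \geq p(x,\pp_i(x)) - p(x,\hh_i(x))$ fails in this regime; the ``constant exactly $1$'' you hope the case analysis will deliver does not exist. (This also explains why the $k=1$ analysis of the constrained hinge loss does go through: there $h_h = h(x,\hh_1(x)) \geq 0$ is guaranteed, and the optimization over $\mu$ really does return the full gap.)

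The defect is not only in the proof strategy: the statement itself is false for $k \geq 2$, and the paper's own proof asserts the very same inequality with no more justification than the parenthetical ``differentiating with respect to $\mu$, $h$ to optimize,'' so it is flawed in the same place. Concretely, take $n = 4$, a point mass on a single $x$ with $p(x,\cdot) = (0.4, 0.3, 0.25, 0.05)$, so $q = (0.6, 0.7, 0.75, 0.95)$, and $h$ with scores $(3,-1,-1,-1)$. For every $h'$ with $\sum_y h'(x,y) = 0$ one has $\sum_y q(y)\max\{0, 1+h'(x,y)\} \geq q_{\min}\sum_y \paren*{1+h'(x,y)} = 4\,q_{\min} = 2.4$, and $h$ attains $2.4$, so $\Delta\sC_{\ell_{\mathrm{hinge}},\sH}(h,x) = 0$. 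Yet the top-$2$ prediction of $h$ is $\{1,4\}$ (ties broken by highest index), so $\Delta\sC_{\ell_2,\sH}(h,x) = 0.3 - 0.05 = 0.25 > 0 = 2\,\Delta\sC_{\ell_{\mathrm{hinge}},\sH}(h,x)$; since for a point mass all minimizability gaps vanish, the expected-value form of the theorem is violated as well. The example is robust to tie-breaking: with $h = (3,\,-1-\epsilon,\,-1-\epsilon,\,-1+2\epsilon)$ the hinge regret is $1.9\epsilon$ while the top-$2$ regret stays $0.25$. The structural reason is that the constrained hinge minimizer ties all non-argmax scores at $-1$, retaining no information about the ordering of the remaining probabilities, consistent with the known negative results for hinge-type surrogates in top-$k$ classification. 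So rather than completing the case analysis, the correct conclusion is that no such case analysis can exist for $k \geq 2$; a repair would require either restricting to $k=1$ or changing the surrogate (the smooth constrained losses, whose pointwise minimizers are strictly order-preserving, do not suffer from this collapse).
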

\begin{proof}
For the constrained hinge loss $\ell_{\rm{hinge}}$, the conditional regret can be written as 
\begin{align*}
\Delta\sC_{\ell_{\rm{hinge}}, \sH}(h, x) 
& = \sum_{y = 1}^n p(x, y) \ell_{\rm{hinge}}(h, x, y) - \inf_{h \in \sH} \sum_{y = 1}^n p(x, y) \ell_{\rm{hinge}}(h, x, y)\\
& \geq \sum_{y = 1}^n p(x, y) \ell_{\rm{hinge}}(h, x, y) - \inf_{\mu \in \Rset} \sum_{y = 1}^n p(x, y) \ell_{\rm{hinge}}(h_{\mu, i}, x, y),
\end{align*}
where for any $i \in [k]$, $h_{\mu, i}(x, y) = \begin{cases}
h(x, y), & y \notin \curl*{\pp_i(x), \hh_i(x)}\\
h(x, \pp_i(x)) + \mu & y = \hh_i(x)\\
h(x, \hh_i(x)) - \mu & y = \pp_i(x).
\end{cases}$
Note that such a choice of $h_{\mu, i}$ leads to the following equality holds:
\begin{equation*}
\sum_{y \notin \curl*{\hh_i(x), \pp_i(x)}} p(x, y) \ell_{\rm{hinge}}(h, x, y) = \sum_{y \notin \curl*{\hh_i(x), \pp_i(x)}}  p(x, y) \ell_{\rm{hinge}}(h_{\mu, i}, x, y).
\end{equation*}
Let $q(x, \pp_i(x)) = 1 - p(x, \pp_i(x))$ and $q(x, \hh_i(x)) = 1 - p(x, \hh_i(x))$.
Therefore, for any $i \in [k]$, the conditional regret of the constrained hinge loss can be lower bounded as
\begin{align*}
\Delta\sC_{\ell_{\rm{hinge}}, \sH}(h, x) 
&   \geq  \inf_{h \in \sH}  \sup_{\mu\in \Rset} \bigg\{q(x, \pp_i(x))\paren*{\max\curl*{0, 1 + h(x, \pp_i(x))}-\max\curl*{0, 1 + h(x,\hh_i(x))-\mu} }\\
& \qquad + q(x,\hh_i(x))\paren*{\max\curl*{0, 1 + h(x,\hh_i(x))}-\max\curl*{0, 1 + h(x, \pp_i(x))+\mu}}\bigg\}\\
& \geq  q(x,\hh_i(x))-q(x, \pp_i(x))
\tag{differentiating with respect to $\mu$, $h$ to optimize}\\
& = p(x,\pp_i(x))-p(x, \hh_i(x))
\end{align*}
Therefore, by Lemma~\ref{lemma:regret-target}, the conditional regret of the top-$k$ loss can be upper bounded as follows:
\begin{equation*}
\Delta \sC_{\ell_k, \sH}(h, x) = \sum_{i = 1}^k \paren*{p(x, \pp_i(x)) - p(x, \hh_i(x))} \leq k \Delta\sC_{\ell_{\rm{hinge}}, \sH}(h, x).
\end{equation*}
By the concavity, take expectations on both sides of the preceding equation, we obtain
\begin{equation*}
\sE_{\ell_k}(h) - \sE^*_{\ell_k}(\sH) + \sM_{\ell_k}(\sH) \leq k \paren*{ \sE_{\ell_{\rm{hinge}}}(h) - \sE^*_{\ell_{\rm{hinge}}}(\sH) + \sM_{\ell_{\rm{hinge}}}(\sH)}.
\end{equation*}
The second
part follows from the fact that when the hypothesis set $\sH$ is
sufficiently rich such that $\sA_{\ell_{\rm{hinge}}}(\sH) = 0$, we
have $\sM_{\ell_{\rm{hinge}}}(\sH) = 0$. 
\end{proof}

\subsection{Proof of Theorem~\ref{thm:bound-rho}}
\label{app:bound-rho}


The constrained $\rho$-margin loss is defined as $\ell_{\rho}(h, x, y)
= \sum_{y'\neq y} \min\curl*{\max \curl*{0, 1 + h(x, y') / \rho},
  1}$. Next, we show that that $\ell_{\rho}$ benefits form
$\sH$-consistency bounds as well.  The second part
follows from the fact that when the hypothesis set $\sH$ is
sufficiently rich such that $\sA_{\ell_{\rho}}(\sH) = 0$, we have
$\sM_{\ell_{\rho}}(\sH) = 0$. As with the constrained hinge loss, the
bound for $\ell_{\rho}$ is linear: $\sE_{\ell_{\rho}}(h) -
\sE^*_{\ell_{\rho}}(\sH) \leq \e \Rightarrow \sE_{\ell_k}(h) -
\sE^*_{\ell_k}(\sH) \leq k \, \e$. As a by-product, $\ell_{\rho}$ is
Bayes-consistent with respect to $\ell_k$.
\begin{restatable}{theorem}{BoundRho}
\label{thm:bound-rho}
Assume that $\sH$ is symmetric and complete. Then, for any $1 \leq k
\leq n$, the following $\sH$-consistency bound holds for the
constrained $\rho$-margin loss:
\begin{align*}
\sE_{\ell_k}(h) - \sE^*_{\ell_k}(\sH) + \sM_{\ell_k}(\sH) \leq k\, \paren*{\sE_{\ell_{\rho}}(h) - \sE^*_{\ell_{\rho}}(\sH) + \sM_{\ell_{\rho}}(\sH)}.
\end{align*}
In the special case where $\sA_{\ell_{\rho}}(\sH) = 0$, for any $1 \leq k \leq n$, the following bound holds:
\begin{align*}
\sE_{\ell_k}(h) - \sE^*_{\ell_k}(\sH) \leq k\, \paren*{ \sE_{\ell_{\rho}}(h) - \sE^*_{\ell_{\rho}}(\sH) }.
\end{align*}
\end{restatable}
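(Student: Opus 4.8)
The plan is to follow the same template used for the constrained exponential and constrained hinge losses (Theorems~\ref{thm:bound-exp-cstnd}, \ref{thm:bound-sq-hinge} and \ref{thm:bound-hinge}), since the $\rho$-margin function $\Phi\colon t \mapsto \min\curl*{\max\curl*{0, 1 - t/\rho}, 1}$ is simply a capped and rescaled variant of the hinge function. Using the identity $\sC_{\ell^{\rm{cstnd}}}(h, x) = \sum_{y \in \sY}\paren*{1 - p(x,y)}\Phi\paren*{-h(x,y)}$ recorded just before Theorem~\ref{thm:bound-exp-cstnd}, I would first write the conditional regret $\Delta\sC_{\ell_{\rho},\sH}(h,x)$ and lower bound it, for each fixed $i \in [k]$, by comparing $h$ against the competitor $h_{\mu,i}$ that keeps all scores fixed except shifting the scores of $\pp_i(x)$ and $\hh_i(x)$ by $\pm\mu$. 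Since $\sH$ is symmetric and complete, each $h_{\mu,i}$ lies in $\sH$ and respects the constraint $\sum_{y} h(x,y) = 0$, and the terms for $y \notin \curl*{\pp_i(x),\hh_i(x)}$ cancel, leaving only the two coordinates at $\pp_i(x)$ and $\hh_i(x)$.

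Writing $q(x,\pp_i(x)) = 1 - p(x,\pp_i(x))$ and $q(x,\hh_i(x)) = 1 - p(x,\hh_i(x))$, this reduces the per-$i$ bound to a scalar problem: taking the supremum over $\mu$ (the best reweighting) and the infimum over the two free scores (the worst-case configuration) of
\begin{align*}
& q(x,\pp_i(x))\paren*{\Phi\paren*{-h(x,\pp_i(x))} - \Phi\paren*{-h(x,\hh_i(x)) + \mu}}\\
& \quad + q(x,\hh_i(x))\paren*{\Phi\paren*{-h(x,\hh_i(x))} - \Phi\paren*{-h(x,\pp_i(x)) - \mu}}.
\end{align*}
As for the constrained hinge loss, I expect this scalar optimization to produce the linear lower bound $\Delta\sC_{\ell_{\rho},\sH}(h,x) \geq q(x,\hh_i(x)) - q(x,\pp_i(x)) = p(x,\pp_i(x)) - p(x,\hh_i(x))$, i.e. exactly the $q$-difference obtained for $\ell_{\rm{hinge}}$.

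With the per-$i$ bound in hand, I would invoke Lemma~\ref{lemma:regret-target}, which gives $\Delta\sC_{\ell_k,\sH}(h,x) = \sum_{i=1}^k\paren*{p(x,\pp_i(x)) - p(x,\hh_i(x))}$; summing the $k$ per-$i$ inequalities then yields $\Delta\sC_{\ell_k,\sH}(h,x) \leq k\,\Delta\sC_{\ell_{\rho},\sH}(h,x)$. Taking expectations over $x$ and using $\E_x\bracket*{\Delta\sC_{\ell,\sH}(h,x)} = \sE_{\ell}(h) - \sE^*_{\ell}(\sH) + \sM_{\ell}(\sH)$ for both $\ell = \ell_k$ and $\ell = \ell_\rho$ (the map $t \mapsto kt$ being linear, so no Jensen slack arises) produces the stated bound. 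The special case follows because $\sA_{\ell_{\rho}}(\sH) = 0$ forces $\sM_{\ell_{\rho}}(\sH) = 0$, while the nonnegative term $\sM_{\ell_k}(\sH)$ may be dropped from the left-hand side.

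The main obstacle is the per-$i$ scalar optimization itself. Unlike the smooth comp-sum losses or the everywhere-increasing hinge function, the $\rho$-margin $\Phi$ is piecewise linear with two flat regions (it is constant $1$ for arguments at most $-\rho$ and constant $0$ for arguments at least $0$, after accounting for the sign), so differentiating in $\mu$ no longer identifies a unique interior optimum and one must argue by cases according to whether the relevant shifted arguments fall in the flat or the linear regime. Here the constraint $\sum_{y} h(x,y) = 0$ is essential: it prevents the worst-case scores from being pushed entirely into the flat zero region (which would otherwise collapse the bound to a trivial $0$) and pins the binding configuration, in analogy with the $a = b = 0$ configuration that governs $\ell^{\rm{cstnd}}_{\rm{exp}}$, so that the capped linear regime delivers precisely the $q$-difference. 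Verifying that neither the cap at $1$ nor the $\rho$-rescaling degrades the constant multiplying $p(x,\pp_i(x)) - p(x,\hh_i(x))$ is the delicate step.
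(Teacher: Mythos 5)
Your proposal retraces the paper's own proof step for step: the same swap-and-shift competitor $h_{\mu,i}$, the same cancellation of the coordinates outside $\curl*{\pp_i(x),\hh_i(x)}$, the same per-index scalar optimization, then Lemma~\ref{lemma:regret-target}, summation over $i \in [k]$, and expectations (the paper settles the scalar step you flag as delicate with the one-line annotation ``differentiating with respect to $\mu$, $h$ to optimize''). So there is no divergence of approach. The problem is that the step you rightly single out and defer cannot be completed, and the mechanism you propose for closing it is incorrect. The sum-zero constraint does \emph{not} prevent both relevant scores from lying in the flat zero region of the $\rho$-margin function: the constraint couples all $n$ labels, so for $i \geq 2$ a third label (for instance $\hh_1(x)$) can carry the compensating positive score while both $h(x,\pp_i(x))$ and $h(x,\hh_i(x))$ sit below $-\rho$. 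Writing $\phi(s) = \min\curl*{\max\curl*{0, 1 + s/\rho}, 1}$, which vanishes for $s \leq -\rho$ and equals $1$ for $s \geq 0$, in that configuration both differences in your scalar expression are nonpositive for every $\mu$, so the supremum over $\mu$ is $0$, not $q(x,\hh_i(x)) - q(x,\pp_i(x))$.

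This is not a repairable detail of the template; the same configuration defeats the inequality in the statement itself. Take $n = 3$, $k = 2$, $\rho = 1$, $\sH$ the (symmetric, complete) family of all measurable sum-zero score functions, a distribution concentrated on a single $x$ with $p(x,\cdot) = (0.5, 0.2, 0.3)$, and $h$ with scores $(2.5, -1.2, -1.3)$. Then $\hh_1(x) = 1$, $\hh_2(x) = 2$, while $\pp_1(x) = 1$, $\pp_2(x) = 3$, so by Lemma~\ref{lemma:regret-target} the left-hand side of the bound equals $\E_x\bracket*{\Delta\sC_{\ell_2,\sH}(h,x)} = p(x,3) - p(x,2) = 0.1$. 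For the surrogate, $\sC_{\ell_\rho}(h,x) = \sum_y \paren*{1 - p(x,y)}\phi(h(x,y)) = 0.5\cdot 1 + 0.8 \cdot 0 + 0.7 \cdot 0 = 0.5$; and for \emph{any} sum-zero $h'$ the largest score is $\geq 0$, so $\sC_{\ell_\rho}(h',x) \geq \min_y \paren*{1 - p(x,y)} = 0.5$, with equality at $(2,-1,-1)$. Hence $\Delta\sC_{\ell_\rho,\sH}(h,x) = 0$, all minimizability gaps vanish, $\sA_{\ell_\rho}(\sH) = 0$, and both displayed inequalities of the theorem would require $0.1 \leq 2 \cdot 0$. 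The root cause is that for $k \geq 2$ the constrained $\rho$-margin loss is blind to the ordering of labels whose scores lie in its flat regions, while $\ell_k$ is not; only for $i = 1$ does sum-zero force $h(x,\hh_1(x)) \geq 0$ and rescue the argument, which is why the known $k = 1$ bounds hold. (The same mechanism, with scores taken close to $(2,-1,-1)$, also undermines the hinge and squared-hinge analogues in Theorems~\ref{thm:bound-hinge} and~\ref{thm:bound-sq-hinge}.) So the gap in your proposal is real, and it cannot be filled by any per-coordinate argument of this form; a correct treatment must either restrict the scores so they cannot all escape the margin band, or modify the surrogate.
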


\begin{proof}
For the constrained $\rho$-margin loss $\ell_{\rho}$, the conditional regret can be written as 
\begin{align*}
\Delta\sC_{\ell_{\rho}, \sH}(h, x) 
& = \sum_{y = 1}^n p(x, y) \ell_{\rho}(h, x, y) - \inf_{h \in \sH} \sum_{y = 1}^n p(x, y) \ell_{\rho}(h, x, y)\\
& \geq \sum_{y = 1}^n p(x, y) \ell_{\rho}(h, x, y) - \inf_{\mu \in \Rset} \sum_{y = 1}^n p(x, y) \ell_{\rho}(h_{\mu, i}, x, y),
\end{align*}
where for any $i \in [k]$, $h_{\mu, i}(x, y) = \begin{cases}
h(x, y), & y \notin \curl*{\pp_i(x), \hh_i(x)}\\
h(x, \pp_i(x)) + \mu & y = \hh_i(x)\\
h(x, \hh_i(x)) - \mu & y = \pp_i(x).
\end{cases}$
Note that such a choice of $h_{\mu, i}$ leads to the following equality holds:
\begin{equation*}
\sum_{y \notin \curl*{\hh_i(x), \pp_i(x)}} p(x, y) \ell_{\rho}(h, x, y) = \sum_{y \notin \curl*{\hh_i(x), \pp_i(x)}}  p(x, y) \ell_{\rho}(h_{\mu, i}, x, y).
\end{equation*}
Let $q(x, \pp_i(x)) = 1 - p(x, \pp_i(x))$ and $q(x, \hh_i(x)) = 1 - p(x, \hh_i(x))$.
Therefore, for any $i \in [k]$, the conditional regret of the constrained $\rho$-margin loss can be lower bounded as
\begin{align*}
& \Delta\sC_{\ell_{\rho}, \sH}(h, x)\\ 
&  \geq  \inf_{h \in \sH} \sup_{\mu\in \Rset} \bigg\{q(x, \pp_i(x))\paren*{\min\curl*{\max\curl*{0,1 + \frac{h(x, \pp_i(x))}{\rho}},1}-\min\curl*{\max\curl*{0,1 + \frac{h(x,\hh_i(x))-\mu}{\rho}},1}}\\
&+q(x,\hh_i(x))\paren*{\min\curl*{\max\curl*{0,1 + \frac{h(x,\hh_i(x))}{\rho}},1}-\min\curl*{\max\curl*{0,1 + \frac{h(x, \pp_i(x))+\mu}{\rho}},1}}\bigg\}\\
& \geq q(x,\hh_i(x))-q(x, \pp_i(x))
\tag{differentiating with respect to $\mu$, $h$ to optimize}\\
& = p(x,\pp_i(x))-p(x, \hh_i(x))
\end{align*}
Therefore, by Lemma~\ref{lemma:regret-target}, the conditional regret of the top-$k$ loss can be upper bounded as follows:
\begin{equation*}
\Delta \sC_{\ell_k, \sH}(h, x) = \sum_{i = 1}^k \paren*{p(x, \pp_i(x)) - p(x, \hh_i(x))} \leq k \Delta\sC_{\ell_{\rho}, \sH}(h, x).
\end{equation*}
By the concavity, take expectations on both sides of the preceding equation, we obtain
\begin{equation*}
\sE_{\ell_k}(h) - \sE^*_{\ell_k}(\sH) + \sM_{\ell_k}(\sH) \leq k \paren*{ \sE_{\ell_{\rho}}(h) - \sE^*_{\ell_{\rho}}(\sH) + \sM_{\ell_{\rho}}(\sH)}.
\end{equation*}
The second part
follows from the fact that when the hypothesis set $\sH$ is
sufficiently rich such that $\sA_{\ell_{\rho}}(\sH) = 0$, we have
$\sM_{\ell_{\rho}}(\sH) = 0$.
\end{proof}

\section{Proofs of \texorpdfstring{$\sR$}{R}-consistency bounds for cost-sensitive losses}
\label{app:cost}

We first characterize the best-in class conditional error and the
conditional regret of the target cardinality aware loss function \eqref{eq:target-cardinality}, which will be used in the analysis
of $\sR$-consistency bounds.
\begin{restatable}{lemma}{RegretTargetCost}
\label{lemma:regret-target-cost}
Assume that $\sR$ is symmetric and complete. Then, for any $r \in \sK$ and $x \in
\sX$, the best-in class conditional error and the conditional regret
of the target cardinality aware loss function can be expressed as follows:
\begin{align*}
  \sC^*_{\wt \ell}(\sR, x)
  & = \min_{k  \in \sK} \sum_{y\in \sY}  p(x, y) c(x, k, y)\\
\Delta \sC_{\ell_k, \sH}(r, x)
& = \sum_{y\in \sY}  p(x, y) c(x, \rr(x), y) - \min_{k  \in \sK} \sum_{y\in \sY}  p(x, y) c(x, k, y).
\end{align*}
\end{restatable}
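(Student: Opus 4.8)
The plan is to follow the same template as the proof of Lemma~\ref{lemma:regret-target}, exploiting the observation that the target cardinality-aware loss $\wt \ell(r, x, y) = c(x, \rr(x), y)$ depends on the selector $r$ only through its predicted cardinality $\rr(x) = \argmax_{k \in \sK} r(x, k)$. First I would expand the conditional error directly from the definition,
\[
\sC_{\wt \ell}(r, x) = \sum_{y \in \sY} p(x, y)\, \wt \ell(r, x, y) = \sum_{y \in \sY} p(x, y)\, c(x, \rr(x), y),
\]
and note that the right-hand side is determined by $r$ solely through the single index $\rr(x) \in \sK$; for any fixed value $k \in \sK$ of this index it equals $\sum_{y \in \sY} p(x, y)\, c(x, k, y)$.

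Next I would show that, as $r$ ranges over $\sR$, the predicted index $\rr(x)$ attains every value in $\sK$. This is where the assumption that $\sR$ is symmetric and complete enters, in exact analogy with the fact that symmetric and complete hypothesis sets are regular in the top-$k$ setting: by symmetry each coordinate map $r \mapsto r(x, k)$ ranges over the same family, and by completeness $\{r(x, k) \colon r \in \sR\} = \Rset$ for every $k$, so given any target $k^* \in \sK$ one can select $r \in \sR$ whose score at $k^*$ strictly dominates its scores at all other $k \in \sK$, forcing $\rr(x) = k^*$. Consequently the infimum over $\sR$ collapses to a minimum over the finite set $\sK$,
\[
\sC^*_{\wt \ell}(\sR, x) = \inf_{r \in \sR} \sum_{y \in \sY} p(x, y)\, c(x, \rr(x), y) = \min_{k \in \sK} \sum_{y \in \sY} p(x, y)\, c(x, k, y),
\]
which is the first claimed identity.

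The conditional regret then follows immediately as the difference $\Delta \sC_{\wt \ell, \sR}(r, x) = \sC_{\wt \ell}(r, x) - \sC^*_{\wt \ell}(\sR, x)$, yielding precisely the stated expression. I expect the only subtle point — the main obstacle — to be the interaction with the tie-breaking rule: since $\rr(x)$ is defined by an $\argmax$ with ties resolved by the highest index, I must ensure the separating selector realizing $k^*$ produces a \emph{strict} maximum there, so that the convention cannot override the intended choice; completeness guarantees such a strictly separating $r$ exists. The remaining manipulations are routine.
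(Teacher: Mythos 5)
Your proof is correct and takes essentially the same route as the paper's: expand the conditional error, which depends on $r$ only through $\rr(x)$, use symmetry and completeness of $\sR$ to collapse the infimum over $\sR$ into a minimum over the finite set $\sK$, and obtain the regret by subtraction. In fact, your argument that every $k^* \in \sK$ is realizable via a strictly dominating selector (so that the tie-breaking convention cannot interfere) spells out the one step the paper's proof leaves implicit behind the phrase ``since $\sR$ is symmetric and complete.''
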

\begin{proof}
By definition, for any $r \in \sR$ and $x \in \sX$, the conditional
error of the target cardinality aware loss function can be written as
\begin{equation*}
\sC_{ \wt \ell }(r, x) =  \sum_{y\in \sY} p(x,y) c(x, \rr(x), y).
\end{equation*}
Since $\sR$ is symmetric and complete, we have
\begin{equation*}
  \sC^*_{\wt \ell }(r, x)
  = \inf_{r \in \sR} \sum_{y\in \sY} p(x,y) c(x, \rr(x), y) = \min_{k  \in \sK} \sum_{i = 1}^k p(x, y) c(x, k, y).
\end{equation*}
Furthermore, the calibration gap can be expressed as
\begin{align*}
\Delta\sC_{\wt \ell, \sH}(r, x)  = \sC_{ \wt \ell }(r, x) - \sC^*_{\wt \ell}(\sR, x) = \sum_{y\in \sY}  p(x, y) c(x, \rr(x), y) - \min_{k  \in \sK} \sum_{y\in \sY}  p(x, y) c(x, k, y),
\end{align*}
which completes the proof.
\end{proof}

\subsection{Proof of Theorem~\ref{thm:bound-cost-comp}}
\label{app:bound-cost-comp}
For convenience, we let $\ov c(x, k, y) = 1 - c(x, k, y)$, $\ov q(x, k) = \sum_{y\in \sY} p(x, y) \ov c(x, k, y) \in [0, 1]$ and $\sS(x, k) = \frac{e^{r(x, k)}}{\sum_{k' \in \sK}e^{r(x, k')}}$. We also let $k_{\min}(x)  =  \argmin_{k \in \sK} \paren*{1 - \ov q(x, k)} = \argmin_{k \in \sK} \sum_{y\in \sY} p(x, y) c(x, k, y)$. 
\BoundCostComp*
\begin{proof}
\textbf{Case I: $\ell = \wt \ell_{\rm{log}}$.} For the cost-sensitive logistic loss $\wt \ell_{\rm{log}}$, the conditional error can be written as 
\begin{align*}
\sC_{\wt \ell_{\rm{log}} }(r, x) = -\sum_{y\in \sY} p(x, y) \sum_{k \in \sK} \ov c(x, k, y) \log\paren*{\frac{e^{r(x, k)}}{\sum_{k'\in \sK}e^{r(x, k')}}}  =   - \sum_{k \in \sK}\log\paren*{\sS(x, k)}\ov q(x, k).
\end{align*}
The conditional regret can be written as 
\begin{align*}
\Delta\sC_{\wt \ell_{\rm{log}}, \sR}(r, x) 
& = - \sum_{k \in \sK}\log\paren*{\sS(x, k)}\ov q(x, k) - \inf_{r \in \sR} \paren*{- \sum_{k \in \sK}\log\paren*{\sS(x, k)}\ov q(x, k)}\\
& \geq - \sum_{k \in \sK}\log\paren*{\sS(x, k)}\ov q(x, k) - \inf_{\mu \in \bracket*{-\sS(x, k_{\min}(x)), \sS(x, \rr(x))}} \paren*{- \sum_{k \in \sK}\log\paren*{\sS_{\mu}(x, k)}\ov q(x, k)},
\end{align*}
where for any $x \in \sX$ and $k \in \sK$, $\sS_{\mu}(x, k) = \begin{cases}
\sS(x, y), & y \notin \curl*{k_{\min}(x), \rr(x)}\\
\sS(x, k_{\min}(x)) + \mu & y = \rr(x)\\
\sS(x, \rr(x)) - \mu & y = k_{\min}(x).
\end{cases}$
Note that such a choice of $\sS_{\mu}$ leads to the following equality holds:
\begin{equation*}
\sum_{k \notin \curl*{\rr(x), k_{\min}(x)}} \log\paren*{\sS(x, k)}\ov q(x, k) = \sum_{k \notin \curl*{\rr(x), k_{\min}(x)}}  \log\paren*{\sS_{\mu}(x, k)}\ov q(x, k).
\end{equation*}
Therefore, the conditional regret of cost-sensitive logistic loss can be lower bounded as
\begin{align*}
\Delta\sC_{\wt \ell_{\rm{log}}, \sH}(h, x)  & \geq \sup_{\mu \in [-\sS(x, k_{\min}(x)),\sS(x,\rr(x))]} \bigg\{\ov q(x, k_{\min}(x))\bracket*{-\log\paren*{\sS(x, k_{\min}(x))} + \log\paren*{\sS(x,\rr(x)) - \mu}}\\
& \qquad + \ov q(x,\rr(x))\bracket*{-\log\paren*{\sS(x,\rr(x))}
    + \log\paren*{\sS(x, k_{\min}(x))+\mu}}\bigg\}.
\end{align*}
By the concavity of the function, differentiate with respect to $\mu$, we obtain that the supremum is achieved by $\mu^* = \frac{\ov q(x,\rr(x))\sS(x,\rr(x))-\ov q(x, k_{\min}(x))\sS(x,
  k_{\min}(x))}{\ov q(x, k_{\min}(x))+\ov q(x,\rr(x))}$. Plug in $\mu^*$, we obtain
\begin{align*}
& \Delta\sC_{\wt \ell_{\rm{log}}, \sH}(h, x)\\ 
& \geq \ov q(x, k_{\min}(x))\log\frac{\paren*{\sS(x,\rr(x))+\sS(x, k_{\min}(x))}\ov q(x, k_{\min}(x))}{\sS(x, k_{\min}(x))\paren*{\ov q(x, k_{\min}(x))+\ov q(x,\rr(x))}}\\
& \qquad +\ov q(x,\rr(x))\log\frac{\paren*{\sS(x,\rr(x))+\sS(x, k_{\min}(x))}\ov q(x,\rr(x))}{\sS(x,\rr(x))\paren*{\ov q(x, k_{\min}(x))+\ov q(x,\rr(x))}}\\
& \geq \ov q(x, k_{\min}(x))\log\frac{2\ov q(x, k_{\min}(x))}{\ov q(x, k_{\min}(x))+\ov q(x,\rr(x))} +\ov q(x,\rr(x))\log\frac{2\ov q(x,\rr(x))}{\ov q(x, k_{\min}(x))+\ov q(x,\rr(x))}
\tag{minimum is achieved when $\sS(x, \rr(x)) = \sS(x, k_{\min}(x))$}\\
& \geq \frac{\paren*{\ov q(x,\rr(x))-\ov q(x, k_{\min}(x))}^2}{2\paren*{\ov q(x,\rr(x))+\ov q(x, k_{\min}(x))}}
\tag{$a\log \frac{2a}{a+b}+b\log \frac{2b}{a+b}\geq \frac{(a-b)^2}{2(a+b)}, \forall a,b\in[0,1]$ \citep[Proposition~E.7]{mohri2018foundations}}\\
& \geq \frac{\paren*{\ov q(x,\rr(x))-\ov q(x, k_{\min}(x))}^2}{4} \tag{$0\leq \ov q(x,\rr(x))+\ov q(x, k_{\min}(x))\leq 2$}.
\end{align*}
Therefore, by Lemma~\ref{lemma:regret-target-cost}, the conditional regret of the target cardinality aware loss function can be upper bounded as follows:
\begin{equation*}
\Delta \sC_{\wt \ell, \sH}(r, x) =  \ov q(x, k_{\min}(x)) - \ov q(x, \rr(x)) \leq 2 \paren*{\Delta\sC_{\wt \ell_{\rm{log}}, \sR}(r, x) }^{\frac12}.
\end{equation*}
By the concavity, take expectations on both sides of the preceding equation, we obtain
\begin{equation*}
\sE_{\wt \ell}(r) - \sE^*_{\wt \ell}(\sR) + \sM_{\wt \ell}(\sR) \leq 2 \paren*{ \sE_{\wt \ell_{\rm{log}}}(r) - \sE^*_{\wt \ell_{\rm{log}}}(\sR) + \sM_{\wt \ell_{\rm{log}}}(\sR) }^{\frac12}.
\end{equation*}
The second part follows from the fact that 
$\sM_{\wt \ell_{\log}}(\sR_{\rm{all}}) = 0$.

\textbf{Case II: $\ell = \wt \ell^{\rm{comp}}_{\rm{exp}}$.} For the cost-sensitive sum exponential loss $\wt \ell^{\rm{comp}}_{\exp}$, the conditional error can be written as 
\begin{align*}
\sC_{\wt \ell^{\rm{comp}}_{\rm{exp}} }(r, x) = \sum_{y\in \sY} p(x, y) \sum_{k \in \sK} \ov c(x, k, y) \sum_{k'\neq k'}e^{r(x, k')-r(x, k)}  =   \sum_{k \in \sK}\paren*{\frac{1}{\sS(x, k)}-1}\ov q(x, k).
\end{align*}
The conditional regret can be written as 
\begin{align*}
\Delta \sC_{\wt \ell^{\rm{comp}}_{\rm{exp}}, \sR}(r, x) 
& = \sum_{k \in \sK}\paren*{\frac{1}{\sS(x, k)}-1}\ov q(x, k) - \inf_{r \in \sR} \paren*{\sum_{k \in \sK}\paren*{\frac{1}{\sS(x, k)}-1}\ov q(x, k)}\\
& \geq \sum_{k \in \sK}\paren*{\frac{1}{\sS(x, k)}-1}\ov q(x, k) - \inf_{\mu \in \bracket*{-\sS(x, k_{\min}(x)), \sS(x, \rr(x))}} \paren*{\sum_{k \in \sK}\paren*{\frac{1}{\sS_{\mu}(x, k)}-1}\ov q(x, k)},
\end{align*}
where for any $x \in \sX$ and $k \in \sK$, $\sS_{\mu}(x, k) = \begin{cases}
\sS(x, y), & y \notin \curl*{k_{\min}(x), \rr(x)}\\
\sS(x, k_{\min}(x)) + \mu & y = \rr(x)\\
\sS(x, \rr(x)) - \mu & y = k_{\min}(x).
\end{cases}$
Note that such a choice of $\sS_{\mu}$ leads to the following equality holds:
\begin{equation*}
\sum_{k \notin \curl*{\rr(x), k_{\min}(x)}} \paren*{\frac{1}{\sS(x, k)}-1}\ov q(x, k) = \sum_{k \notin \curl*{\rr(x), k_{\min}(x)}}  \paren*{\frac{1}{\sS_{\mu}(x, k)}-1}\ov q(x, k).
\end{equation*}
Therefore, the conditional regret of cost-sensitive sum exponential loss can be lower bounded as
\begin{align*}
\Delta\sC_{\wt \ell^{\rm{comp}}_{\rm{exp}}, \sH}(h, x)  & \geq \sup_{\mu \in [-\sS(x, k_{\min}(x)),\sS(x,\rr(x))]} \bigg\{\ov q(x, k_{\min}(x))\bracket*{\frac{1}{\sS(x, k_{\min}(x))}-\frac{1}{\sS(x,\rr(x))-\mu}}\\
&\qquad +\ov q(x, \rr(x))\bracket*{\frac{1}{\sS(x,\rr(x))}-\frac{1}{\sS(x, k_{\min}(x))+ \mu}}\bigg\}.
\end{align*}
By the concavity of the function, differentiate with respect to $\mu$, we obtain that the supremum is achieved by $\mu^* = \frac{\sqrt{\ov q(x,\rr(x)})\sS(x,\rr(x))-\sqrt{\ov q(x, k_{\min}(x))}\sS(x, k_{\min}(x))}{\sqrt{\ov q(x, k_{\min}(x))}+ \sqrt{\ov q(x, \rr(x))}}$. Plug in $\mu^*$, we obtain
\begin{align*}
& \Delta\sC_{\wt \ell^{\rm{comp}}_{\rm{exp}}, \sH}(h, x)\\ 
& \geq \frac{\ov q(x, k_{\min}(x))}{\sS(x, k_{\min}(x))}+ \frac{\ov q(x, \rr(x)))}{\sS(x,\rr(x)))}-\frac{\paren*{\sqrt{\ov q(x, k_{\min}(x))}+ \sqrt{\ov q(x, \rr(x)))}}^2}{\sS(x, k_{\min}(x))+ \sS(x,\rr(x)))}\\
& \geq \paren*{\sqrt{\ov q(x, k_{\min}(x))}-\sqrt{\ov q(x, \rr(x)))}}^2
\tag{minimum is achieved when $\sS(x, \rr(x)) = \sS(x, k_{\min}(x)) = \frac12$}\\
& \geq \frac{\paren*{\ov q(x, \rr(x)))-\ov q(x, k_{\min}(x))}^2}{\paren*{\sqrt{\ov q(x, \rr(x)))}+ \sqrt{\ov q(x, k_{\min}(x))}}^2}\\
& \geq \frac{\paren*{\ov q(x, \rr(x)))-\ov q(x, k_{\min}(x))}^2}{4}
\tag{$\sqrt{a}+ \sqrt{b}\leq 2, \forall a,b\in[0,1], a+b\leq 2$}.
\end{align*}
Therefore, by Lemma~\ref{lemma:regret-target-cost}, the conditional regret of the target cardinality aware loss function can be upper bounded as follows:
\begin{equation*}
\Delta \sC_{\wt \ell, \sH}(r, x) =  \ov q(x, k_{\min}(x)) - \ov q(x, \rr(x)) \leq 2 \paren*{\Delta\sC_{\wt \ell^{\rm{comp}}_{\rm{exp}}, \sR}(r, x) }^{\frac12}.
\end{equation*}
By the concavity, take expectations on both sides of the preceding equation, we obtain
\begin{equation*}
\sE_{\wt \ell}(r) - \sE^*_{\wt \ell}(\sR) + \sM_{\wt \ell}(\sR) \leq 2 \paren*{ \sE_{\wt \ell^{\rm{comp}}_{\rm{exp}}}(r) - \sE^*_{\wt \ell^{\rm{comp}}_{\rm{exp}}}(\sR) + \sM_{\wt \ell^{\rm{comp}}_{\rm{exp}}}(\sR) }^{\frac12}.
\end{equation*}
The second part follows from the fact that 
$\sM_{\wt \ell^{\rm{comp}}_{\rm{exp}}}(\sR_{\rm{all}}) = 0$.

\textbf{Case III: $\ell = \wt \ell_{\rm{gce}}$.} For the cost-sensitive generalized cross-entropy  loss $\wt \ell_{\rm{gce}}$, the conditional error can be written as 
\begin{align*}
\sC_{\wt \ell_{\rm{gce}} }(r, x) = \sum_{y\in \sY} p(x, y) \sum_{k \in \sK} \ov c(x, k, y)\frac{1}{\alpha}\paren*{1 - \paren*{\frac{e^{r(x, k)}}{\sum_{k'\in \sK}e^{r(x, k')}}}^{\alpha}}  =   \frac{1}{\alpha} \sum_{k \in \sK}\paren*{1 - \sS(x, k)^{\alpha}}\ov q(x, k).
\end{align*}
The conditional regret can be written as 
\begin{align*}
\Delta\sC_{\wt \ell_{\rm{gce}}, \sR}(r, x) 
& = \frac{1}{\alpha} \sum_{k \in \sK}\paren*{1 - \sS(x, k)^{\alpha}}\ov q(x, k) - \inf_{r \in \sR} \paren*{\frac{1}{\alpha} \sum_{k \in \sK}\paren*{1 - \sS(x, k)^{\alpha}}\ov q(x, k)}\\
& \geq \frac{1}{\alpha} \sum_{k \in \sK}\paren*{1 - \sS(x, k)^{\alpha}}\ov q(x, k) - \inf_{\mu \in \bracket*{-\sS(x, k_{\min}(x)), \sS(x, \rr(x))}} \paren*{\frac{1}{\alpha} \sum_{k \in \sK}\paren*{1 - \sS_{\mu}(x, k)^{\alpha}}\ov q(x, k)},
\end{align*}
where for any $x \in \sX$ and $k \in \sK$, $\sS_{\mu}(x, k) = \begin{cases}
\sS(x, y), & y \notin \curl*{k_{\min}(x), \rr(x)}\\
\sS(x, k_{\min}(x)) + \mu & y = \rr(x)\\
\sS(x, \rr(x)) - \mu & y = k_{\min}(x).
\end{cases}$
Note that such a choice of $\sS_{\mu}$ leads to the following equality holds:
\begin{equation*}
\sum_{k \notin \curl*{\rr(x), k_{\min}(x)}} \frac{1}{\alpha} \sum_{k \in \sK}\paren*{1 - \sS(x, k)^{\alpha}}\ov q(x, k) = \sum_{k \notin \curl*{\rr(x), k_{\min}(x)}}  \frac{1}{\alpha} \sum_{k \in \sK}\paren*{1 - \sS_{\mu}(x, k)^{\alpha}}\ov q(x, k).
\end{equation*}
Therefore, the conditional regret of cost-sensitive generalized cross-entropy loss can be lower bounded as
\begin{align*}
\Delta\sC_{\wt \ell_{\rm{gce}}, \sH}(h, x)  &   =  \frac{1}{\alpha} \sup_{\mu \in [-\sS(x, k_{\min}(x)),\sS(x, \rr(x))]} \bigg\{\ov q(x, k_{\min}(x))\bracket*{-\sS(x, k_{\min}(x))^{\alpha}+\paren*{\sS(x, \rr(x))-\mu}^{\alpha}}\\
&\qquad +\ov q(x, \rr(x))\bracket*{-\sS(x, \rr(x))^{\alpha}+ \paren*{\sS(x, k_{\min}(x))+ \mu}^{\alpha}}\bigg\}.
\end{align*}
By the concavity of the function, differentiate with respect to $\mu$, we obtain that the supremum is achieved by $\mu^* = \frac{\ov q(x, \rr(x))^{\frac{1}{1-\alpha}}\sS(x, \rr(x))-\ov q(x, k_{\min}(x))^{\frac{1}{1-\alpha}}\sS(x, k_{\min}(x))}{\ov q(x, k_{\min}(x))^{\frac{1}{1-\alpha}}+\ov q(x, \rr(x))^{\frac{1}{1-\alpha}}}$. Plug in $\mu^*$, we obtain
\begin{align*}
& \Delta\sC_{\wt \ell_{\rm{gce}}, \sH}(h, x)\\ 
&   \geq  \frac{1}{\alpha}\paren*{\sS(x, \rr(x))+ \sS(x, k_{\min}(x))}^{\alpha}\paren*{\ov q(x, k_{\min}(x))^{\frac{1}{1-\alpha}}+\ov q(x, \rr(x))^{\frac{1}{1-\alpha}}}^{1-\alpha}\\
&\qquad-\frac{1}{\alpha}\ov q(x, k_{\min}(x))\sS(x, k_{\min}(x))^{\alpha}-\frac{1}{\alpha}\ov q(x, \rr(x))\sS(x, \rr(x))^{\alpha}\\
& \geq \frac{1}{\alpha n^{\alpha}}\bracket*{2^{\alpha}\paren*{\ov q(x, k_{\min}(x))^{\frac{1}{1-\alpha}}+\ov q(x, \rr(x))^{\frac{1}{1-\alpha}}}^{1-\alpha}-\ov q(x, k_{\min}(x))-\ov q(x, \rr(x))}
\tag{minimum is achieved when $\sS(x, \rr(x)) = \sS(x, k_{\min}(x)) = \frac1n$}\\
& \geq \frac{\paren*{\ov q(x, \rr(x))-\ov q(x, k_{\min}(x))}^2}{4n^{\alpha}}
\tag{$\paren*{\frac{a^{\frac{1}{1-\alpha}}+b^{\frac{1}{1-\alpha}}}{2}}^{1-\alpha}-\frac{a+b}{2}\geq \frac{\alpha}{4}(a-b)^2, \forall a,b\in[0,1]$, $0\leq a+b\leq 1$}.
\end{align*}
Therefore, by Lemma~\ref{lemma:regret-target-cost}, the conditional regret of the target cardinality aware loss function can be upper bounded as follows:
\begin{equation*}
\Delta \sC_{\wt \ell, \sH}(r, x) =  \ov q(x, k_{\min}(x)) - \ov q(x, \rr(x)) \leq 2 n^{\frac{\alpha}{2}} \paren*{\Delta\sC_{\wt \ell_{\rm{gce}}, \sR}(r, x) }^{\frac12}.
\end{equation*}
By the concavity, take expectations on both sides of the preceding equation, we obtain
\begin{equation*}
\sE_{\wt \ell}(r) - \sE^*_{\wt \ell}(\sR) + \sM_{\wt \ell}(\sR) \leq 2 n^{\frac{\alpha}{2}}\paren*{ \sE_{\wt \ell_{\rm{gce}}}(r) - \sE^*_{\wt \ell_{\rm{gce}}}(\sR) + \sM_{\wt \ell_{\rm{gce}}}(\sR) }^{\frac12}.
\end{equation*}
The second part follows from the fact that 
$\sM_{\wt \ell_{\rm{gce}}}(\sR_{\rm{all}}) = 0$.

\textbf{Case IV: $\ell = \wt \ell_{\rm{mae}}$.} For the cost-sensitive mean absolute error loss $\wt \ell_{\rm{mae}}$, the conditional error can be written as 
\begin{align*}
\sC_{\wt \ell_{\rm{mae}} }(r, x) = \sum_{y\in \sY} p(x, y) \sum_{k \in \sK} \ov c(x, k, y) \paren*{1 - \paren*{\frac{e^{r(x, k)}}{\sum_{k'\in \sK}e^{r(x, k')}}}}  =  \sum_{k \in \sK}\paren*{1 - \sS(x, k)}\ov q(x, k).
\end{align*}
The conditional regret can be written as 
\begin{align*}
\Delta\sC_{\wt \ell_{\rm{mae}}, \sR}(r, x) 
& =\sum_{k \in \sK}\paren*{1 - \sS(x, k)}\ov q(x, k) - \inf_{r \in \sR} \paren*{\sum_{k \in \sK}\paren*{1 - \sS(x, k)}\ov q(x, k)}\\
& \geq \sum_{k \in \sK}\paren*{1 - \sS(x, k)}\ov q(x, k) - \inf_{\mu \in \bracket*{-\sS(x, k_{\min}(x)), \sS(x, \rr(x))}} \paren*{\sum_{k \in \sK}\paren*{1 - \sS_{\mu}(x, k)}\ov q(x, k)},
\end{align*}
where for any $x \in \sX$ and $k \in \sK$, $\sS_{\mu}(x, k) = \begin{cases}
\sS(x, y), & y \notin \curl*{k_{\min}(x), \rr(x)}\\
\sS(x, k_{\min}(x)) + \mu & y = \rr(x)\\
\sS(x, \rr(x)) - \mu & y = k_{\min}(x).
\end{cases}$
Note that such a choice of $\sS_{\mu}$ leads to the following equality holds:
\begin{equation*}
\sum_{k \in \sK}\paren*{1 - \sS(x, k)}\ov q(x, k) = \sum_{k \in \sK}\paren*{1 - \sS_{\mu}(x, k)}\ov q(x, k).
\end{equation*}
Therefore, the conditional regret of cost-sensitive mean absolute error can be lower bounded as
\begin{align*}
\Delta\sC_{\wt \ell_{\rm{mae}}, \sH}(h, x)  & \geq \sup_{\mu \in [-\sS(x, k_{\min}(x)),\sS(x, \rr(x))]} \bigg\{\ov q(x, k_{\min}(x))\bracket*{-\sS(x, k_{\min}(x))+\sS(x, \rr(x))-\mu}\\
&\qquad +\ov q(x, \rr(x))\bracket*{-\sS(x, \rr(x))+ \sS(x, k_{\min}(x))+ \mu}\bigg\}.
\end{align*}
By the concavity of the function, differentiate with respect to $\mu$, we obtain that the supremum is achieved by $\mu^* = -\sS(x, k_{\min}(x))$. Plug in $\mu^*$, we obtain
\begin{align*}
& \Delta\sC_{\wt \ell_{\rm{mae}}, \sH}(h, x)\\ 
& \geq \ov q(x, k_{\min}(x))\sS(x, \rr(x))-\ov q(x, \rr(x))\sS(x, \rr(x))\\
& \geq \frac{1}{n}\paren*{\ov q(x, k_{\min}(x))-\ov q(x, \rr(x))}
\tag{minimum is achieved when $\sS(x, \rr(x)) = \frac1n$}.
\end{align*}
Therefore, by Lemma~\ref{lemma:regret-target-cost}, the conditional regret of the target cardinality aware loss function can be upper bounded as follows:
\begin{equation*}
\Delta \sC_{\wt \ell, \sH}(r, x) =  \ov q(x, k_{\min}(x)) - \ov q(x, \rr(x)) \leq n \paren*{\Delta\sC_{\wt \ell_{\rm{mae}}, \sR}(r, x) }.
\end{equation*}
By the concavity, take expectations on both sides of the preceding equation, we obtain
\begin{equation*}
\sE_{\wt \ell}(r) - \sE^*_{\wt \ell}(\sR) + \sM_{\wt \ell}(\sR) \leq n \paren*{ \sE_{\wt \ell_{\rm{mae}}}(r) - \sE^*_{\wt \ell_{\rm{mae}}}(\sR) + \sM_{\wt \ell_{\rm{mae}}}(\sR) }.
\end{equation*}
The second part follows from the fact that 
$\sM_{\wt \ell_{\rm{mae}}}(\sR_{\rm{all}}) = 0$.
\end{proof}

\subsection{Proof of Theorem~\ref{thm:bound-cost-cstnd}}
\label{app:bound-cost-cstnd}

The conditional error for the cost-sensitive constrained loss can be expressed as follows:
\begin{align*}
\sC_{\wt \ell^{\rm{cstnd}}}(r, x) 
&=  \sum_{y\in \sY} p(x, y)  \wt \ell^{\rm{cstnd}}(r, x, y)\\
&=  \sum_{y\in \sY} p(x, y) \sum_{k \in \sK} c(x, k, y) \Phi\paren*{-r(x, k)}\\
& = \sum_{k \in \sK} \wt q(x, k)\Phi\paren*{-r(x, k)},
\end{align*}
where $\wt q(x, k) =   \sum_{y\in \sY} p(x, y)  c(x, k, y) \in [0, 1]$. Let $k_{\min}(x)  =  \argmin_{k \in \sK} \wt q(x, k)$.
We denote by $\Phi_{\rm{exp}} \colon t \mapsto e^{-t}$ the exponential loss function, $\Phi_{\rm{sq-hinge}} \colon t \mapsto \max \curl*{0, 1 - t}^2$ the squared hinge loss function, $\Phi_{\rm{hinge}} \colon t
\mapsto \max \curl*{0, 1 - t}$ the hinge loss function, and $\Phi_{\rho} \colon t \mapsto \min\curl*{\max\curl*{0,
    1 - t/\rho}, 1}$, $\rho > 0$ the $\rho$-margin loss function.
\BoundCostCstnd*
\begin{proof}
\textbf{Case I: $\ell = \wt \ell^{\rm{cstnd}}_{\rm{exp}}$.}
For the cost-sensitive constrained exponential loss $\wt \ell^{\rm{cstnd}}_{\rm{exp}}$, the conditional regret can be written as 
\begin{align*}
\Delta\sC_{\wt \ell^{\rm{cstnd}}_{\rm{exp}}, \sR}(r, x) 
& = \sum_{k \in \sK} \wt q(x, k\Phi_{\rm{exp}}\paren*{-r(x, k)} - \inf_{r \in \sR} \sum_{k \in \sK} \wt q(x, k)\Phi_{\rm{exp}}\paren*{-r(x, k)}\\
& \geq \sum_{k \in \sK} \wt q(x, k)\Phi_{\rm{exp}}\paren*{-r(x, k)} - \inf_{\mu \in \Rset} \sum_{k \in \sK} \wt q(x, k)\Phi_{\rm{exp}}\paren*{-r_{\mu}(x, k)},
\end{align*}
where for any $k \in \sK$, $r_{\mu}(x, k) = \begin{cases}
r(x, y), & y \notin \curl*{k_{\min}(x), \rr(x)}\\
r(x, k_{\min}(x)) + \mu & y = \rr(x)\\
r(x, \rr(x)) - \mu & y = k_{\min}(x).
\end{cases}$
Note that such a choice of $r_{\mu}$ leads to the following equality holds:
\begin{equation*}
\sum_{k \notin \curl*{\rr(x), k_{\min}(x)}}  \wt q(x, k)\Phi_{\rm{exp}}\paren*{-r(x, k)} = \sum_{k \notin \curl*{\rr(x), k_{\min}(x)}}  \sum_{k \in \sK} \wt q(x, k)\Phi_{\rm{exp}}\paren*{-r_{\mu}(x, k)}.
\end{equation*}
Therefore, the conditional regret of cost-sensitive constrained exponential loss can be lower bounded as
\begin{align*}
& \Delta\sC_{\wt \ell^{\rm{cstnd}}_{\rm{exp}}, \sR}(r, x)\\ 
& \geq \inf_{r \in \sR} \sup_{\mu\in \Rset} \curl*{\wt q(x, k_{\min}(x))\paren*{e^{r(x, k_{\min}(x))}-e^{r(x,\rr(x))-\mu}}+\wt q(x,\rr(x))\paren*{e^{r(x,\rr(x))}-e^{r(x, k_{\min}(x))+\mu}}}\\
& = \paren*{\sqrt{\wt q(x, k_{\min}(x))}-\sqrt{\wt q(x,\rr(x))}}^2 \tag{differentiating with respect to $\mu$, $r$ to optimize}\\
&   =   \paren*{\frac{\wt q(x,\rr(x)) - \wt q(x, k_{\min}(x))}{\sqrt{\wt q(x, k_{\min}(x))} + \sqrt{\wt q(x, \rr(x))}}}^2\\
& \geq \frac1{4} \paren*{\wt q(x,\rr(x)) - \wt q(x, k_{\min}(x))}^2 \tag{$0 \leq \wt q(x, k)\leq 1$}.
\end{align*}
Therefore, by Lemma~\ref{lemma:regret-target-cost}, the conditional regret of the target cardinality aware loss function can be upper bounded as follows:
\begin{equation*}
\Delta \sC_{\wt \ell, \sH}(r, x) = \wt q(x, \rr(x)) - \wt q(x, k_{\min}(x)) \leq 2 \paren*{\Delta\sC_{\wt \ell^{\rm{cstnd}}_{\rm{exp}}, \sR}(r, x) }^{\frac12}.
\end{equation*}
By the concavity, take expectations on both sides of the preceding equation, we obtain
\begin{equation*}
\sE_{\wt \ell}(r) - \sE^*_{\wt \ell}(\sR) + \sM_{\wt \ell}(\sR) \leq 2 \paren*{ \sE_{\wt \ell^{\rm{cstnd}}_{\rm{exp}}}(r) - \sE^*_{\wt \ell^{\rm{cstnd}}_{\rm{exp}}}(\sR) + \sM_{\wt \ell^{\rm{cstnd}}_{\rm{exp}}}(\sR) }^{\frac12}.
\end{equation*}
The second part follows from the fact that 
$\sM_{\wt \ell^{\rm{cstnd}}_{\exp}}(\sR_{\rm{all}}) = 0$.

\textbf{Case II: $\ell = \wt \ell_{\rm{sq-hinge}}$.}
For the cost-sensitive constrained squared hinge loss $\wt \ell_{\rm{sq-hinge}}$, the conditional regret can be written as 
\begin{align*}
\Delta\sC_{\wt \ell_{\rm{sq-hinge}}, \sR}(r, x) 
& = \sum_{k \in \sK} \wt q(x, k)\Phi_{\rm{sq-hinge}}\paren*{-r(x, k)} - \inf_{r \in \sR} \sum_{k \in \sK} \wt q(x, k)\Phi_{\rm{sq-hinge}}\paren*{-r(x, k)}\\
& \geq \sum_{k \in \sK} \wt q(x, k)\Phi_{\rm{sq-hinge}}\paren*{-r(x, k)} - \inf_{\mu \in \Rset} \sum_{k \in \sK} \wt q(x, k)\Phi_{\rm{sq-hinge}}\paren*{-r_{\mu}(x, k)},
\end{align*}
where for any $k \in \sK$, $r_{\mu}(x, k) = \begin{cases}
r(x, y), & y \notin \curl*{k_{\min}(x), \rr(x)}\\
r(x, k_{\min}(x)) + \mu & y = \rr(x)\\
r(x, \rr(x)) - \mu & y = k_{\min}(x).
\end{cases}$
Note that such a choice of $r_{\mu}$ leads to the following equality holds:
\begin{equation*}
\sum_{k \notin \curl*{\rr(x), k_{\min}(x)}}  \wt q(x, k)\Phi_{\rm{sq-hinge}}\paren*{-r(x, k)} = \sum_{k \notin \curl*{\rr(x), k_{\min}(x)}}  \sum_{k \in \sK} \wt q(x, k)\Phi_{\rm{sq-hinge}}\paren*{-r_{\mu}(x, k)}.
\end{equation*}
Therefore, the conditional regret of cost-sensitive constrained squared hinge loss can be lower bounded as
\begin{align*}
& \Delta\sC_{\wt \ell_{\rm{sq-hinge}}, \sR}(r, x)\\ 
& \geq \inf_{r \in \sR} \sup_{\mu\in \Rset} \bigg\{\wt q(x, k_{\min}(x))\paren*{\max\curl*{0, 1 + r(x, k_{\min}(x))}^2-\max\curl*{0, 1 + r(x,\rr(x))-\mu}^2 }\\
& \qquad + \wt q(x,\rr(x))\paren*{\max\curl*{0, 1 + r(x,\rr(x))}^2-\max\curl*{0, 1 + r(x, k_{\min}(x))+\mu}^2}\bigg\}\\
& \geq \frac14 \paren*{\wt q(x,k_{\min}(x))-\wt q(x, \rr(x))}^2
\tag{differentiating with respect to $\mu$, $r$ to optimize}.
\end{align*}
Therefore, by Lemma~\ref{lemma:regret-target-cost}, the conditional regret of the target cardinality aware loss function can be upper bounded as follows:
\begin{equation*}
\Delta \sC_{\wt \ell, \sH}(r, x) = \wt q(x, \rr(x)) - \wt q(x, k_{\min}(x)) \leq 2 \paren*{\Delta\sC_{\wt \ell_{\rm{sq-hinge}}, \sR}(r, x) }^{\frac12}.
\end{equation*}
By the concavity, take expectations on both sides of the preceding equation, we obtain
\begin{equation*}
\sE_{\wt \ell}(r) - \sE^*_{\wt \ell}(\sR) + \sM_{\wt \ell}(\sR) \leq 2 \paren*{ \sE_{\wt \ell_{\rm{sq-hinge}}}(r) - \sE^*_{\wt \ell_{\rm{sq-hinge}}}(\sR) + \sM_{\wt \ell_{\rm{sq-hinge}}}(\sR) }^{\frac12}.
\end{equation*}
The second part follows from the fact that 
$\sM_{\wt \ell_{\rm{sq-hinge}}}(\sR_{\rm{all}}) = 0$.

\textbf{Case III: $\ell = \wt \ell_{\rm{hinge}}$.}
For the cost-sensitive constrained hinge loss $\wt \ell_{\rm{hinge}}$, the conditional regret can be written as 
\begin{align*}
\Delta\sC_{\wt \ell_{\rm{hinge}}, \sR}(r, x) 
& = \sum_{k \in \sK} \wt q(x, k)\Phi_{\rm{hinge}}\paren*{-r(x, k)} - \inf_{r \in \sR} \sum_{k \in \sK} \wt q(x, k)\Phi_{\rm{hinge}}\paren*{-r(x, k)}\\
& \geq \sum_{k \in \sK} \wt q(x, k)\Phi_{\rm{hinge}}\paren*{-r(x, k)} - \inf_{\mu \in \Rset} \sum_{k \in \sK} \wt q(x, k)\Phi_{\rm{hinge}}\paren*{-r_{\mu}(x, k)},
\end{align*}
where for any $k \in \sK$, $r_{\mu}(x, k) = \begin{cases}
r(x, y), & y \notin \curl*{k_{\min}(x), \rr(x)}\\
r(x, k_{\min}(x)) + \mu & y = \rr(x)\\
r(x, \rr(x)) - \mu & y = k_{\min}(x).
\end{cases}$
Note that such a choice of $r_{\mu}$ leads to the following equality holds:
\begin{equation*}
\sum_{k \notin \curl*{\rr(x), k_{\min}(x)}}  \wt q(x, k)\Phi_{\rm{hinge}}\paren*{-r(x, k)} = \sum_{k \notin \curl*{\rr(x), k_{\min}(x)}}  \sum_{k \in \sK} \wt q(x, k)\Phi_{\rm{hinge}}\paren*{-r_{\mu}(x, k)}.
\end{equation*}
Therefore, the conditional regret of cost-sensitive constrained hinge loss can be lower bounded as
\begin{align*}
\Delta\sC_{\wt \ell_{\rm{hinge}}, \sR}(r, x) 
& \geq \inf_{r \in \sR} \sup_{\mu\in \Rset} \bigg\{q(x, k_{\min}(x))\paren*{\max\curl*{0, 1 + r(x, k_{\min}(x))}-\max\curl*{0, 1 + r(x,\rr(x))-\mu} }\\
& \qquad + q(x,\rr(x))\paren*{\max\curl*{0, 1 + r(x,\rr(x))}-\max\curl*{0, 1 + r(x, k_{\min}(x))+\mu}}\bigg\}\\
& \geq  q(x,\rr(x))-q(x, k_{\min}(x))
\tag{differentiating with respect to $\mu$, $r$ to optimize}.
\end{align*}
Therefore, by Lemma~\ref{lemma:regret-target-cost}, the conditional regret of the target cardinality aware loss function can be upper bounded as follows:
\begin{equation*}
\Delta \sC_{\wt \ell, \sH}(r, x) = \wt q(x, \rr(x)) - \wt q(x, k_{\min}(x)) \leq \Delta\sC_{\wt \ell_{\rm{hinge}}, \sR}(r, x).
\end{equation*}
By the concavity, take expectations on both sides of the preceding equation, we obtain
\begin{equation*}
\sE_{\wt \ell}(r) - \sE^*_{\wt \ell}(\sR) + \sM_{\wt \ell}(\sR) \leq \sE_{\wt \ell_{\rm{hinge}}}(r) - \sE^*_{\wt \ell_{\rm{hinge}}}(\sR) + \sM_{\wt \ell_{\rm{hinge}}}(\sR).
\end{equation*}
The second part follows from the fact that 
$\sM_{\wt \ell_{\rm{hinge}}}(\sR_{\rm{all}}) = 0$.

\textbf{Case IV: $\ell = \wt \ell_{\rho}$.}
For the cost-sensitive constrained $\rho$-margin loss $\wt \ell_{\rho}$, the conditional regret can be written as 
\begin{align*}
\Delta\sC_{\wt \ell_{\rho}, \sR}(r, x) 
& = \sum_{k \in \sK} \wt q(x, k)\Phi_{\rho}\paren*{-r(x, k)} - \inf_{r \in \sR} \sum_{k \in \sK} \wt q(x, k)\Phi_{\rho}\paren*{-r(x, k)}\\
& \geq \sum_{k \in \sK} \wt q(x, k)\Phi_{\rho}\paren*{-r(x, k)} - \inf_{\mu \in \Rset} \sum_{k \in \sK} \wt q(x, k)\Phi_{\rho}\paren*{-r_{\mu}(x, k)},
\end{align*}
where for any $k \in \sK$, $r_{\mu}(x, k) = \begin{cases}
r(x, y), & y \notin \curl*{k_{\min}(x), \rr(x)}\\
r(x, k_{\min}(x)) + \mu & y = \rr(x)\\
r(x, \rr(x)) - \mu & y = k_{\min}(x).
\end{cases}$
Note that such a choice of $r_{\mu}$ leads to the following equality holds:
\begin{equation*}
\sum_{k \notin \curl*{\rr(x), k_{\min}(x)}}  \wt q(x, k)\Phi_{\rho}\paren*{-r(x, k)} = \sum_{k \notin \curl*{\rr(x), k_{\min}(x)}}  \sum_{k \in \sK} \wt q(x, k)\Phi_{\rho}\paren*{-r_{\mu}(x, k)}.
\end{equation*}
Therefore, the conditional regret of cost-sensitive constrained $\rho$-margin loss can be lower bounded as
\begin{align*}
& \Delta\sC_{\wt \ell_{\rho}, \sR}(r, x)\\ 
& \geq \inf_{r \in \sR} \sup_{\mu\in \Rset} \bigg\{\wt q(x, k_{\min}(x))\paren*{\min\curl*{\max\curl*{0,1 + \frac{r(x, k_{\min}(x))}{\rho}},1}-\min\curl*{\max\curl*{0,1 + \frac{r(x,\rr(x))-\mu}{\rho}},1}}\\
&+\wt q(x,\rr(x))\paren*{\min\curl*{\max\curl*{0,1 + \frac{r(x,\rr(x))}{\rho}},1}-\min\curl*{\max\curl*{0,1 + \frac{r(x, k_{\min}(x))+\mu}{\rho}},1}}\bigg\}\\
& \geq \wt q(x,\rr(x))-\wt q(x, k_{\min}(x))
\tag{differentiating with respect to $\mu$, $r$ to optimize}.
\end{align*}
Therefore, by Lemma~\ref{lemma:regret-target-cost}, the conditional regret of the target cardinality aware loss function can be upper bounded as follows:
\begin{equation*}
\Delta \sC_{\wt \ell, \sH}(r, x) = \wt q(x, \rr(x)) - \wt q(x, k_{\min}(x)) \leq \Delta\sC_{\wt \ell_{\rho}, \sR}(r, x).
\end{equation*}
By the concavity, take expectations on both sides of the preceding equation, we obtain
\begin{equation*}
\sE_{\wt \ell}(r) - \sE^*_{\wt \ell}(\sR) + \sM_{\wt \ell}(\sR) \leq \sE_{\wt \ell_{\rho}}(r) - \sE^*_{\wt \ell_{\rho}}(\sR) + \sM_{\wt \ell_{\rho}}(\sR).
\end{equation*}
The second part follows from the fact that 
$\sM_{\wt \ell_{\rho}}(\sR_{\rm{all}}) = 0$.
\end{proof}
\end{document}